\definecolor{blue}{RGB}{66, 133, 244}
\definecolor{red}{RGB}{219, 68, 55}
\definecolor{yellow}{RGB}{244, 188, 0}
\definecolor{green}{RGB}{15, 157, 88}
\definecolor{dark}{RGB}{31, 31, 31}
\definecolor{purple}{RGB}{113, 78, 163}
\definecolor{orange}{RGB}{187, 85, 39}
\definecolor{indigo}{RGB}{63, 81, 181}
\newcommand{\pred}{\bm{f}}
\newcommand{\inp}{\bm{\tau}}
\newcommand{\explainer}{\bm{g}}
\newtheorem{proposition}{Proposition}
\newtheorem{property}{Property}
\newtheorem{assumption}{Assumption}
\title{Making Sense of Dependence: Efficient Black-box Explanations Using Dependence Measure}
\newcommand\Mark[1]{\textsuperscript#1}
\author{%
  Paul Novello \Mark{1} \Mark{2}\\
  \And
  Thomas Fel \Mark{2} \Mark{3} \\
  \And
   David Vigouroux \Mark{1} \Mark{2} \\
   \And
   \\
   \Mark{1} IRT Saint Exupery, France,  \Mark{2} Artificial and Natural Intelligence Toulouse Institute,\\ Université de Toulouse, France \Mark{3} Carney Institute for Brain Science, Brown University, USA  \\
  \texttt{paul.novello@irt-saintexupery.com} \\
}
\def\eqref#1{equation~(\ref{#1})}
\def\1{\bm{1}}
\def\rs{{\textnormal{s}}}
\def\rt{{\textnormal{t}}}
\def\rx{{\textnormal{x}}}
\def\rvx{{\mathbf{x}}}
\def\rvy{{\mathbf{y}}}
\def\rmM{{\mathbf{M}}}
\def\vu{{\bm{u}}}
\def\vx{{\bm{x}}}
\def\vy{{\bm{y}}}
\def\mM{{\bm{M}}}
\def\mX{{\bm{X}}}
\DeclareMathAlphabet{\mathsfit}{\encodingdefault}{\sfdefault}{m}{sl}
\SetMathAlphabet{\mathsfit}{bold}{\encodingdefault}{\sfdefault}{bx}{n}
\def\hsic{{\mathcal{H}^p_i}}
\begin{document}

\maketitle

\begin{abstract}
This paper presents a new efficient black-box attribution method based on Hilbert-Schmidt Independence Criterion (HSIC), a dependence measure based on Reproducing Kernel Hilbert Spaces (RKHS). HSIC measures the dependence between regions of an input image and the output of a model based on kernel embeddings of distributions. It thus provides explanations enriched by RKHS representation capabilities. HSIC can be estimated very efficiently, significantly reducing the computational cost compared to other black-box attribution methods.
Our experiments show that HSIC is up to 8 times faster than the previous best black-box attribution methods while being as faithful.
Indeed, we improve or match the state-of-the-art of both black-box and white-box attribution methods for several fidelity metrics on Imagenet with various recent model architectures.
Importantly, we show that these advances can be transposed to efficiently and faithfully explain object detection models such as YOLOv4. 
Finally, we extend the traditional attribution methods by proposing a new kernel enabling an ANOVA-like orthogonal decomposition of importance scores based on HSIC, allowing us to evaluate not only the importance of each image patch but also the importance of their pairwise interactions. Our implementation is available at \url{https://github.com/paulnovello/HSIC-Attribution-Method}.
\end{abstract}

\section{Introduction}

Artificial Intelligence has established itself as the reference technique for tackling many real-world automation tasks. Consequently, the diversity of its applications is growing and reaching fields where its outputs can contribute to critical decision-making. In such cases, it is essential to be able to provide explanations for each link of the decision chain, including AI algorithms. 
Over the past decade, many techniques have emerged to explain the predictions of these algorithms~\cite{simonyan2014deep, ribeiro2016lime, selvaraju2017gradcam, fong2017perturbation, sundararajan2017axiomatic, petsiuk2018rise, olah2017feature, koh2017understanding, kim2018interpretability, ross2021learning}, marking the birth of a new field called Explainable Artificial Intelligence (XAI). 
The tools developed in this research field, mostly designed to explain neural networks, have already proven helpful. For instance, it has been used in model debugging, identification of new development strategies for practitioners, and failure understanding.

Initial approaches are based on analyzing the internal state of neural networks during inference, often relying on input gradients or activation values of hidden layers ~\cite{simonyan2014deep, selvaraju2017gradcam, fong2017perturbation, kapishnikov2019xrai}.
However, the gradient only reflects the model's operation in an infinitesimal neighborhood around an input and can therefore be misleading \cite{ghalebikesabi2021locality}.
Furthermore, their applicability is limited to the case where the final user has access to the implementation of the model. Therefore, such methods cannot be applied in the most common use cases, e.g. when models are made available by third parties through API calls or specialized hardware. In order to address this issue, some black-box approaches have been recently proposed, relying on the perturbation of the input and the observation of its effect on the output ~\cite{zeiler2014visualizing, ribeiro2016lime, petsiuk2018rise,fel_look_2021}. One challenge of such perturbation methods is assessing this effect together with taking into account complex interactions inherent to deep neural networks. To account for these characteristics, black-box methods resort to complex Monte Carlo methods that require a high number of model forward passes, which can be expensive for recent neural networks that are growing larger.

\begin{figure*}[t!]
  \includegraphics[width=0.99\textwidth]{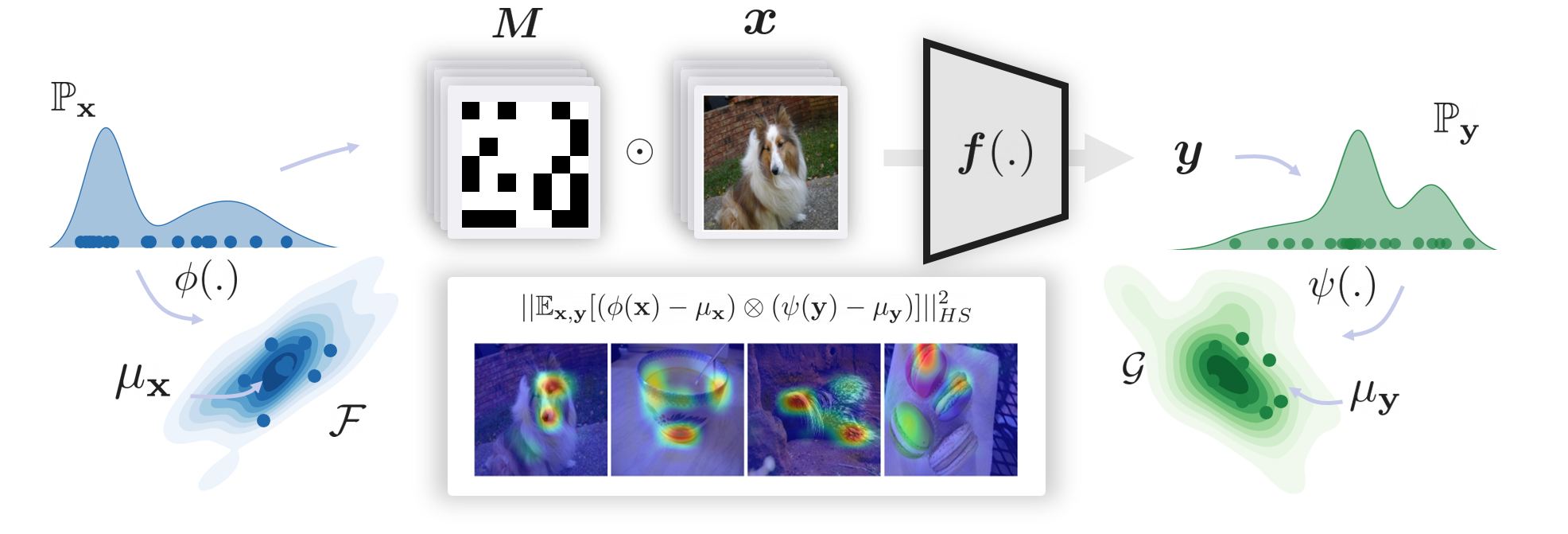}
  \caption{ \textbf{HSIC Explainability method}. We sample random binary masks $\rmM$ that we use to perturb the input image $\mX$. We obtain a perturbed output and measure the dependence between the distribution of each patch $\mathbb{P}_M$ of the binary mask and that of the output $\mathbb{P}_{\rvy}$. We use a dependence measure, Hilbert Schmidt Independence Criterion (HSIC), based on the kernel embedding of this distribution in a Reproducing Kernel Hilbert Space (RKHS). Each patch is then assigned the value of this measure: the more independent a patch is from $\rvy$, the less important it is to explain it.\vspace{-2mm}}
  \label{fig:hsic_method}
\end{figure*}

In \cite{fel_look_2021,lundberg_kshap_2017}, the authors propose methods to reduce the required number of forward passes, but the obtained performance improvements still do not make them close to white-box methods. In this work, similarly to \cite{fel_look_2021}, we cast perturbation studies as Global Sensitivity Analysis (GSA) \cite{hsic:futureofsas}. However, we rely on a whole different approach based on dependence measure rather than analysis of variance. We measure the dependence between patch-wise perturbations of an input image and the model's output by comparing the distribution of perturbed inputs and outputs embedded in a Reproducing Kernel Hilbert Space (RKHS). More specifically, we use Hilbert-Schmidt Independence Criterion (HSIC), a dependence measure based on the Hilbert-Schmidt norm of the empirical cross-covariance operator evaluated between the represented distribution. HSIC leverages the rich theory of RKHS, thereby capturing more diverse information than variance-based indices such as Sobol. In addition, it can be estimated more efficiently, even bridging the performance gap between black-box and white-box methods. 

Our contributions are as follows: \textbf{(1)} we introduce a new efficient black-box attribution method relying on HSIC;
\textbf{(2)} we derive a new kernel that confers an Analysis of Variance (ANOVA)-like orthogonal decomposition property, allowing us to go beyond usual attribution methods and evaluate interactions between patches of the image; 
\textbf{(3)} we conduct experiments to assess the fidelity of our method on ImageNet and show that it improves or matches the state-of-the art for different metrics while bridging the computational gap between black-box and white-box attribution methods; \textbf{(4)} we demonstrate its versatility and its potential by successfully applying it to a less common test case: explanations of object detection; and a new test case: evaluation of pairwise interactions between patches of the input image. 




\section{Related work}\label{sec:related}

Our work builds on prior efforts aiming to develop attribution methods in order to explain the prediction of a deep neural network by pointing to input variables that support the prediction -- typically pixels or groups of pixels, i.e. patches in the image -- which lead to importance maps.

\paragraph{Attribution methods for white-box models}

A large number of attribution methods have been developed relying on the gradient of the decision studied. 
The first method was introduced in ~\cite{baehrens2010explain} and improved in ~\cite{simonyan2014deep,zeiler2014visualizing,springenberg2014striving} and consists of explaining the decisions of a convolution model by back-propagating the gradient from the output to the input, indicating which pixels affect the decision score the most.
However, this family of methods is limited because they focus on the influence of individual pixels in an infinitesimal neighborhood around the input image in the image space. For instance, it has been shown that gradients often vanish when the prediction score to be explained is near the maximum value \cite{sundararajan2017axiomatic}.
Integrated Gradient~\cite{sundararajan2017axiomatic} and SmoothGrad~\cite{smilkov2017smoothgrad} partially address this issue by accumulating gradients. 
Another family of attribution methods relies on the neural network's activations. Popular examples include CAM~\cite{zhou2016cam}, which computes an attribution score based on a weighted sum of feature channel activities -- right before the classification layer.
GradCAM~\cite{selvaraju2017gradcam} extends CAM via the use of gradients, reweighting each feature channel to take into account their importance for the predicted class. Nevertheless, the choice of the layer has a huge impact on the quality of the explanation. In comparison, our proposed approach is model-agnostic and hence does not require access to internal computations.

\paragraph{Attribution methods for black-box models}

In this paper, we extend the problem by restricting it to a black-box model: the analytical form and potential internal states of the model are unknown.
Several methods compute influence scores for each individual pixel or group of pixels.

The first method, Occlusion~\cite{zeiler2014visualizing}, masks individual image regions -- one at a time -- with an occluding mask set to a baseline value and assigns the corresponding prediction scores to all pixels within the occluded region. Then the explanation is given by these prediction scores and can be easily interpreted.
However, occlusion fails to account for the joint (higher-order) interactions between multiple image regions. For instance, occluding two image regions -- one at a time -- may only decrease the model's prediction minimally (say a single eye or mouth component on a face), while occluding these two regions together may yield a substantial change in the model's prediction if these two regions interact non-linearly as is expected for a deep neural network. Our work, together with related methods such as LIME~\cite{ribeiro2016lime}, RISE~\cite{petsiuk2018rise} and more recently Sobol~\cite{fel_look_2021} addresses this problem by randomly perturbating the input image in multiple regions at a time. 

Surprisingly, RISE~\cite{petsiuk2018rise} and Sobol Attribution~\cite{fel_look_2021} have recently shown that black-box attribution methods can rival and even surpass the white-box methods commonly used without recourse to internal states. However, despite the efforts in ~\cite{fel_look_2021} to limit their computational overhead, black-box methods remain far from white-box methods in terms of execution time. In this work, we show that it is possible to match or even surpass the performances of current black-box methods while reaching computation times lower than some white-box methods by using dependence measure-based Global Sensitivity Analysis (GSA).

\paragraph{Global sensitivity analysis using dependence measures}
Our attribution method builds on the GSA framework. The approach was introduced in the 70s~\cite{cukier1973study} and was popularized with variance-based sensitivity analysis and Sobol indices ~\cite{sobol1993sensitivity}. It consists of evaluating the sensitivity of a model's output of interest to some input design variables. GSA is currently used in many fields, especially for the study of physical phenomena~\cite{iooss2015, hsic:futureofsas}. Recently, dependence measure-based sensitivity analysis was introduced in \cite{da_veiga_global_2015} and was shown to circumvent some practical issues of variance-based sensitivity analysis. In particular, by relying on the representation capabilities of RKHS, the dependence measure that we use in this work, HSIC \cite{gretton_measuring_2005}, captures more diverse information than traditional variance-based indices for far fewer model evaluations.

\section{Explanations using Hilbert-Schmidt Independence Criterion}\label{sec:method}

In this section, we describe sensitivity analysis-based attribution methods, define Hilbert-Schmidt Independence Criterion (HSIC) \cite{gretton_measuring_2005} and explain how we can use it and adapt it to design a new black-box attribution method whose efficiency competes with white-box methods. We also explain the theoretical advantages of HSIC that we can build on to go beyond traditional attribution methods.

\subsection{Sensitivity analysis of perturbed black-box models}

Let $\pred: \mathcal{X} = \mathcal{X}_1, ..., \mathcal{X}_n \rightarrow \mathcal{Y}$ be the model under study, $x_i \in \mathcal{X}_i$ the input variables and $\vy = \pred(x_1,...,x_n) \in \mathcal{Y}$ the output value of the model $\pred$. GSA studies the sensitivity of $\vy$ to each input $x_i$ by considering them as iid (independent and identically distributed) random variables and assessing the link between their distribution and that of the output after an initial input sampling. Given an input vector $\mX = (x_1,...,x_n)$, a prediction $\vy = \pred(\mX)$ can thus be explained using sensitivity analysis by applying random perturbations $\rvx = (\rx_1,...,\rx_n), \rx_i \sim \mathbb{P}_{\mathcal{X}_i}$ of the original $\mX$ and analyzing the importance of each $\rx_i$ for explaining the variations of $\vy$ - which is considered a random variable, $\rvy \sim \mathbb{P}_{\rvy}$. 

For image data, the inputs $x_i$ are pixels. However, pixel perturbations would only emphasize low level explanations. To obtain high level and meaningful explanations, we rather consider a random perturbation mask $\mM = (M_1, ..., M_d) \in [0,1]^d$. We upsample this mask using a Nearest Neighbor interpolation method to obtain $u(\mM) \in [0,1]^n$, a patch-perturbated vector that we apply on the input image $\mX$ using a mask operator $\inp: \mathcal{X} \times [0,1]^d \rightarrow \mathcal{X}$. More specifically, we use the inpainting operator defined by $\inp(\mX, \mM) = \mX \odot u(\mM) + (1 - u(\mM))\mu$, with $\odot$ the Hadamard product and $\mu$ a baseline value (here, $\mu$ is a black image with all pixels' value $=0$ \cite{ribeiro2016lime,zeiler2014visualizing}). Hence, the mask $\textbf{M}$ aggregates the patch-wise random perturbations $M_i$ that are sampled independently for each patch ($M_i$ are iid). In practice, the perturbations contained in the mask are binary perturbations, to simulate whether the information contained in the patch is kept in the image or not.. We thereby assess the effect of each image patch, represented by $M_i$, on the output.

The perturbation methodology thus consists of $\bm{(1)}$ sampling $p$ masks $\{\mM^{(1)},..., \mM^{(p)}\}$ from $\rmM \sim \mathbb{P}_{\rmM}$ (with $\mathbb{P}_{\rmM} = \mathbb{P}_{M_1} \times ... \times \mathbb{P}_{M_p})$, $\bm{(2)}$ applying them to the original input vector, leading to $p$ perturbed input vectors (e.g., partially masked images) $\{\inp(\mX, \mM^{(1)}),...,\inp(\mX, \mM^{(p)})\}$ $\bm{(3)}$ computing the predictions $\{\vy^{(1)},...,\vy^{(p)}\} =  \{\pred(\inp(\mX, \mM^{(1)})),...,\pred(\inp(\mX, \mM^{(p)}))\}$ and $\bm{(4)}$ statistically assessing the effect of each mask $M_i$ on $\vy$ by estimating a sensitivity measure between each $\mathbb{P}_{M_i}$ and $\mathbb{P}_{\rvy}$ from the previous sampling. In this paper, we consider that the more independent $M_i$ is from $\rvy$, the less important the corresponding image patch is to explain it \cite{da_veiga_global_2015}. In the following, we describe HSIC, a dependence measure, and how to use it in practice.

\subsection{Hilbert-Schmidt Independence Criterion}\label{sec:hsic}

Let $\rvx$ and $\rvy$ be two random variables of probability distribution $\mathbb{P}_{\rvx}$ and $\mathbb{P}_{\rvy}$ defined on $\cal{X}$ and $\cal{Y}$. HSIC measures the dependence between $\mathbb{P}_{\rvx}$ and $\mathbb{P}_{\rvy}$ based on their embedding in Reproducing Kernel Hilbert Space (RKHS). Let $\varphi: \mathcal{X} \rightarrow \mathcal{F}$ and  $\psi: \mathcal{Y} \rightarrow \mathcal{G}$ two continuous feature mapping between $\mathcal{X}$, $\mathcal{Y}$,  and two RKHS $\mathcal{F}$,  $\mathcal{G}$, such that the inner product between the feature embeddings of $x, x' \in \mathcal{X}$ in $\mathcal{F}$ is given by the kernel $k(x, x') = \left\langle \varphi(x), \varphi (x') \right\rangle$ (and $l(y, y') = \left\langle \psi(y), \psi(y') \right\rangle$ for $y, y' \in \mathcal{Y}$). The cross-covariance operator $C_{\rvx\rvy} : \mathcal{G} \rightarrow \mathcal{F}$ between the random variables $\rvx$ and $\rvy$ is defined in \cite{fukumizu_dimensionality_2004} and can be written.
\begin{equation*}
  C_{\rvx\rvy} = \mathbb{E}_{\rvx\rvy} [(\varphi(x) - \mu_{\rvx}) \otimes (\psi(y) - \mu_{\rvy})],
\end{equation*}
where $\mu_{\rvx} = \mathbb{E}_{\rvx} [\varphi(x)]$ and $\mu_{\rvy} = \mathbb{E}_{\rvy} [\psi(y)]$ are the mean embedding of $\rvx$ and $\rvy$ in $\mathcal{F}$ and $\mathcal{G}$. When $\mathcal{F}$ and $\mathcal{G}$ are universal RKHS on the compact domains $\mathcal{X}$ and $\mathcal{Y}$, then $\|C_{\rvx\rvy} \|_{HS} = 0$ if and only if  $\rvx$ and $\rvy$ are independent, where $\| \cdot \|_{HS}$ denotes the Hilbert Schmidt norm (see \cite{gretton_kernel_2005}). In \cite{gretton_measuring_2005}, the authors define HSIC as $\|C_{\rvx\rvy} \|_{HS}^2$, which can be written:
\begin{equation}\label{eq:hsicdef}
  \begin{split}
    HSIC(\rvx,\rvy) 
    =  &\mathbb{E}_{\rvx, \rvx', \rvy, \rvy'}[k(x, x')l(y, y')] + \mathbb{E}_{\rvx, \rvx'}[k(x, x')] \mathbb{E}_{\rvy, \rvy'}[l(y, y')] \\
    &-2 \mathbb{E}_{\rvx, \rvy}[\mathbb{E}_{\rvx'}[k(x, x')] \mathbb{E}_{\rvy'}[l(y, y')]],
  \end{split}
\end{equation}
where  $\rvx, \rvx'$ and $\rvy, \rvy'$ are pairwise iid. HSIC can also be expressed in terms of Maximum Mean Discrepancy (MMD), which is a distance between mean embeddings defined in an RKHS \cite{sriperumbudur_mmd_2010}. More specifically, let the product RKHS $ \mathcal{P} = \mathcal{F} \times \mathcal{G}$ with kernel $v ((x, y), (x',y')) = k(x, x') l(y, y')$. Then,  $HSIC(\rvx,\rvy) = \gamma_v^2 (\mathbb{P}_{\rvx} \mathbb{P}_{\rvy }, \mathbb{P}_{\rvx, \rvy })$, where $\gamma_v$ is the MMD operator on $\mathcal{P}$. HSIC thus measures the distance between $\mathbb{P}_{\rvx \rvy}$ and $\mathbb{P}_{\rvy}\mathbb{P}_{\rvx}$ embedded in $\mathcal{P}$ \cite{da_veiga_global_2015}. Since $\rvx \perp \rvy \Rightarrow \mathbb{P}_{\rvx \rvy} = \mathbb{P}_{\rvy}\mathbb{P}_{\rvx}$, the closer these distributions are, in the sense of the MMD, the more independent they are. 

Thus, given a set of inputs $\{\vx_1, ..., \vx_p\}$ and the associated outputs $\{\vy_1,...,\vy_p\}$, ~\cite{gretton_measuring_2005} shows that HSIC can be estimated by
\begin{equation}\label{eq:hsicest}
\mathcal{H}^p_{\rvx, \rvy} = \frac{1}{(p-1)^2} \operatorname{tr} (KHLH),
\end{equation}
where $H, L, K \in \mathbb{R}^{p \times p}$ and $K_{ij} = k(x_i, x_j), L_{i,j} = l(y_i, y_j)$ and $H_{ij} = \delta(i=j) - p^{-1}$. Using this formula, $\mathcal{H}^p_{\rvx, \rvy}$ can be computed with a $\mathcal{O}(p^2)$ complexity. In this work, the input variables $M_i$ are the patch perturbations. Therefore, we compute $\mathcal{H}^p_{M_i, \rvy}$ i.e. the estimation of the HSIC between a patch $M_i$ and the output $\rvy$, for each patch ($i \in \{1,...,d\}$), see Algorithm \ref{alg:hsic}. We denote $\mathcal{H}^p_i := \mathcal{H}^p_{M_i, \rvy}$ for clarity. In the next section, we discuss the kernels $k$ and $l$ and show that we can obtain a valuable ANOVA-like orthogonal decomposition property for HSIC-based indices, allowing to assess interactions between input variables.

\subsection{Orthogonalisation of HSIC to enable evaluation of interactions}\label{sec:ortho}

One question of interest in explainability is the measurement of the importance of a specific group of variables. Indeed, it is notorious that neural networks are highly non-linear, and as it has been demonstrated in several works~\cite{ferrettini2021coalitional, fel_look_2021}, the effects of the groups of variables are far from being additive. Concretely, some areas of the image may only be important in interaction with other areas, affecting the output only when both areas are perturbed at the same time - as we shall see in Section \ref{sec:interactions} (for instance, for mustaches of a puma). 

Let $\{\rx_1,...,\rx_n\} \in \mathcal{X}^n$ be a set of $n$ univariate random variables. For any subset $A = \{l_1, ..., l_{|A|}\} \subseteq \{1,...,n\}$, we denote $\rvx_{A} = (\rx_{l_1},...,\rx_{l_{|A|}})$ the vector of input variables with indices in $A$. When using Sobol indices based GSA, it is possible to measure the interactions between variables using ANOVA decomposition. For HSIC, the corresponding decomposition property (which is not ANOVA since HSIC does not measure the variance) can be stated as follows: 

\begin{property}[Orthogonal decomposition property]
\label{prop}
The orthogonal decomposition property is fulfilled if: 
\begin{equation}
    HSIC(\rvx, \rvy) = \sum_{A \subseteq \{1,...,n\}} HSIC_A,
\end{equation}
where each term $HSIC_A$ is given by
\begin{equation*}
    HSIC_A = \sum_{B \subseteq A} (-1)^{|A| - |B|} HSIC(\rvx_B, \rvy),
\end{equation*}
and $HSIC(\rvx_B, \rvy)$ is defined as in \eqref{eq:hsicdef} with kernels $l$ and $k_A$.
\end{property}

In Appendix A, we introduce an example to illustrate why this property is necessary in order to properly assess the importance of interactions. This property was lacking for dependence measure-based sensitivity analysis until the work of \cite{da_veiga_decomposition_2021}, which shows that when using HSIC, a specific choice of kernel $k$ can enable this decomposition. For any choice of $l$ and characteristic univariate kernel $k$, it is possible to obtain the orthogonal decomposition property by defining the input kernel $k_{A}$ such that
\begin{equation}\label{eq:condition}
  k_{A}(\vx_A, \vx'_A) = \prod_{i \in A} (1 + k_0(x_i, x_i')), \; ~s.t.~ \; k_0(x, x') = k(x, x') - \frac{\int k(x, t)dP(t) \int k(x', t)dP(t)}{\int \int k(s, t)dP(s)dP(t)}
\end{equation}
These conditions can be stringent, especially the right one, which implies to compute  integrals (analytically or empirically). It can be non trivial for continuous input distributions $p_x$ and classical kernel choices such as Radial Basis Function (RBF) of bandwidth $\sigma$, $k(x, x') = \exp(\|x - x'\|/2\sigma^2)$. This condition is alleviated when using discrete input variables of known densities, for which the integrals can be computed analytically. In particular, in this work, we rely on Proposition \ref{prop:binary}:
\begin{proposition}\label{prop:binary}
  Let $\rx_i$ be a Bernoulli variable of parameter $p=\frac{1}{2}$, and  $\delta(x = x')$ the Dirac kernel. Then the following kernel $k_{A}$ satisfies the decomposition property:
  \begin{equation}\label{eq:kernel}
    k_{A}(\vx_A, \vx'_A) = \prod_{i \in A} (1 + k_0(x_i, x_i')), \; ~s.t.~ \; k_0(x, x') = \delta(x = x') - \frac{1}{2}.
  \end{equation}
\end{proposition}
The proof is in Appendix A. As a practical consequence, if we sample binary masks from a Bernoulli variable of parameter $p=1/2$, i.e. $M_i \sim B(p)$ for $i \in \{1,...,d\}$, and use the kernel defined in \eqref{eq:kernel}, we can assess not only the importance of each patch in the image but also the importance of the interactions between patches. It allows to go beyond classical attribution methods and reveal areas of the image that are only important in interaction with other areas, i.e. that affect the output only when both areas are perturbated at the same time. Concretely, for two image patches indexed by $i$ and $j$, the interaction HSIC, $\mathcal{H}_{i \times j}$, can be obtained with \cite{da_veiga_decomposition_2021}:
\begin{equation}
  \mathcal{H}^p_{i \times j} = \mathcal{H}^p_{(M_i, M_j), \rvy} - \mathcal{H}^p_{M_i, \rvy} - \mathcal{H}^p_{M_j, \rvy}.
\end{equation}
We insist on the fact that if the decomposition property is not valid, subtracting $\mathcal{H}^p_{M_i, \rvy}$ and $\mathcal{H}^p_{M_j, \rvy}$ to $\mathcal{H}^p_{(M_i, M_j), \rvy}$ does not ensure that we assess the importance of the interactions only. Traces of the independent importance of $M_i$ and $M_j$ may remain in the obtained metric. Some qualitative benefits of such a property are illustrated in Section \ref{sec:interactions}. This decomposition holds for any choice of kernel $l:\mathcal{Y} \rightarrow \mathcal{G}$. Therefore, in the following, we use the RBF kernel, with the common practice of choosing the bandwidth as the median of the output \cite{da_veiga_global_2015, gretton_measuring_2005, goal_oriented_hsic}. \

\subsection{Sample efficiency of HSIC estimator}\label{sec:theoreffi}

Several types of metrics are classically used for sensitivity analysis. The most famous one, Sobol indices \citep{Sobol} and its variants \citep{contrast, borgonovo} classically require $p^2$ model evaluations \cite{saltelli_importance_1996} to reach an estimation error of $\mathcal{O}(\frac{1}{\sqrt{p}})$. Recent design of experiments managed to reduce this requirement to $p \times (d + 2)$ \citep{saltelli_making_2002} (Theorem 1), but with the increase in complexity of state-of-the-art architectures and the high dimensionality of inputs (although mitigated by the use of $d$ patches instead of all pixels), it can still be cumbersome. Despite this drawback, \cite{fel_look_2021} uses Sobol indices to obtain explanations and still achieves execution time improvement compared to other state-of-the-art attribution methods, which are even less efficient in terms of samples requirements. 

HSIC is much less expensive to estimate than Sobol indices: for a same estimation error of $\mathcal{O}(\frac{1}{\sqrt{p}})$, $p$ forward passes are needed instead of $p \times (d+2)$ \cite{gretton_measuring_2005} (Theorem 1). This allows using far fewer samples to obtain relevant explanations, thereby dramatically increasing the efficiency of the method compared to previous black-box approaches. This huge advantage is empirically illustrated in Section \ref{sec:efficiency}, where we demonstrate that our method defines a new standard in terms of efficiency for black-box attribution methods. It even bridges the efficiency gap between black-box and white-box approaches.

\subsection{Implementation of the method}

A summary of the whole attribution method is provided in Algorithm \ref{alg:hsic}. The computation of $\mathcal{H}^p_i$ is $O(p^2)$ but it is possible to vectorize it using any library optimized for tensor operations (e.g. tensorflow). As a result, the computation time of $\mathcal{H}^p_i$ is negligible compared to that of the $p$ forward passes. Furthermore, we implemented a sampling based on Latin Hypercube Sampling \cite{latin_hypercube_sampling}, a Quasi-Monte Carlo (QMC) method designed to efficiently fill the input space in Monte Carlo integration. Once the grid of $\mathcal{H}^p_i$ is obtained, we use a bilinear upsampling to be able to apply it to the image.

\begin{algorithm}[h!]
  \caption{ \small Explanations using HSIC-based sensitivity analysis as attribution method}
  \label{alg:hsic}
   \begin{algorithmic}[1]
  \STATE {\bfseries Inputs: } $d$ the dimension of the masks, $p$ the number of forward pass, $\mX$ an input image.
  \STATE Sample $p$ binary masks  $\{\mM^{(1)},..., \mM^{(p)}\}$ using LHS.
  \STATE Compute the perturbed inputs $\{\inp(\mX, \mM^{(1)}),...,\inp(\mX, \mM^{(p)})\}$
  \STATE Compute the predictions $\{\vy^{(1)},...,\vy^{(p)}\} =  \{\pred(\inp(\mX, \mM^{(1)})),...,\pred(\inp(\mX, \mM^{(p)}))\}$
  \FOR{$i \in \{1,...,d\}$}{
    \STATE Compute $\mathcal{H}^p_i$ using \eqref{eq:hsicest} and assign this value to the $i$-th patch of the input image.
    }\ENDFOR
   \end{algorithmic}
   
\end{algorithm}

\section{Experiments}\label{sec:experiments}

This section showcases the benefits of our approach compared to other attribution methods. These benefits are threefold. First, the computational cost of HSIC attribution method is significantly lower than previous state-of-the-art methods, even bridging the performance gap between black-box and white-box methods. Second, our method improves state-of-the-art for several fidelity metrics, for black-box as well as white-box methods. Finally, the orthogonal decomposition property of HSIC allows to go beyond usual attribution methods and assess interactions between image patches.

 In Section \ref{sec:classification}, we compute explanations of the predictions in the ILSVRC-2012~\cite{imagenet_cvpr09} classification task (ImageNet), for four common architectures, namely MobileNet \cite{sandler2018mobilenetv2}, ResNet50 \cite{he2016identity}, EfficientNet \cite{tan2019efficientnet} and VGG16 \cite{simonyan2014very}. Then, we compare those explanations with these of other state-of-the-art black-box and white-box attribution methods in terms of fidelity and efficiency. In Section \ref{sec:efficiency}, we investigate the convergence of our method by measuring its correlation with a high sample estimator and comparing it with RISE \cite{petsiuk2018rise} and Sobol \cite{fel_look_2021}. In the remaining sections, we conduct additional experiments that show the versatility of our method. In Section \ref{sec:coco}, we evaluate HSIC attribution method to explain object detection on COCO dataset \cite{coco} with YOlOv4 \cite{redmon2017yolo9000}. We conclude the experiments with Section \ref{sec:interactions}, where we showcase the use of the HSIC orthogonal decomposition property to assess interactions between image patches.

\subsection{Fidelity of classification explanations}\label{sec:classification}

\begin{table*}[t]
  \centering
  \resizebox{\textwidth}{!}{\begin{tabular}{c lccccc}
  \toprule
   
   & Method & \textit{ResNet50} & \textit{VGG16} & \textit{EfficientNet} & \textit{MobileNetV2} & Exec. time (s)\\
   \midrule
   Del. ($\downarrow$)  & & & & & &\\
  
  \multirow{6}{*}{\rotatebox[origin=c]{90}{{\footnotesize White-box}}}
  & Saliency~\cite{simonyan2014deep} & 0.158  & 0.120  & 0.091 & 0.113 & 0.360 \\ 
  & Grad.-Input~\cite{shrikumar2016not} & 0.153  & \underline{0.116} & \underline{0.084} & 0.110 & 0.023 \\
  & Integ.-Grad.~\cite{sundararajan2017axiomatic} & 0.138  & \textbf{0.114} & \textbf{0.078} & \underline{0.096}  & \textcolor{red}{\textbf{1.024}}\\
  & SmoothGrad~\cite{smilkov2017smoothgrad} & \underline{0.127}  & 0.128 & 0.094 & \textbf{0.088} & 0.063 \\ 
  & GradCAM++~\cite{selvaraju2017gradcam} & \textbf{0.124}  & 0.125 & 0.112 & 0.106  & 0.127\\ 
  & VarGrad~\cite{selvaraju2017gradcam} & 0.134  & 0.229 & 0.224 &\underline{0.097}  & 0.097\\
  
  \midrule

  \multirow{7}{*}{\rotatebox[origin=c]{90}{{\footnotesize Black-box}}}
  & LIME \cite{ribeiro_lime_2016} & 0.186  & 0.258 & 0.186 & 0.148 & 6.480 \\  
  & Kernel Shap \cite{lundberg_kshap_2017} & 0.185 & 0.165 & 0.164 & 0.149 & 4.097 \\ 
  & RISE~\cite{petsiuk2018rise} & \underline{0.114} & \underline{0.106} & 0.113 & 0.115  & 8.427\\ 
  & Sobol \cite{fel_look_2021} & 0.121  & 0.109 & \underline{0.104} & \underline{0.107}  & 5.254\\  
  & $\hsic$  eff. (ours) & \textbf{0.106} & \textbf{0.100} & \textbf{0.095} & \textbf{0.094} & \textcolor{green}{\textbf{0.956}} \\  
  & $\hsic$  acc. (ours) & \textbf{0.105} & \textbf{0.099} & \textbf{0.094} & \textbf{0.093} & \underline{1.668}\\  

  \toprule
   Ins. ($\uparrow$)  & & & & & &\\

  \multirow{6}{*}{\rotatebox[origin=c]{90}{{\footnotesize White-box}}}
  
  & Saliency~\cite{simonyan2014deep} & 0.357 & \underline{0.286} & 0.224 & 0.246 & 0.360 \\ 
  & Grad.-Input~\cite{shrikumar2016not} & 0.363 & 0.272 & 0.220 & 0.231 &  0.023 \\
  & Integ.-Grad.~\cite{sundararajan2017axiomatic} & 0.386 & 0.276 & \underline{0.248} & 0.258  & \textcolor{red}{\textbf{1.024}}\\
  & SmoothGrad~\cite{smilkov2017smoothgrad} & 0.379 & 0.229 & 0.172 & 0.246 & 0.063 \\ 
  & GradCAM++~\cite{selvaraju2017gradcam} & \underline{0.497} & \textbf{0.413} & \textbf{0.316} & \underline{0.387}  & 0.127\\ 
  & VarGrad~\cite{selvaraju2017gradcam} & \textbf{0.527} & 0.241 & 0.222 & \textbf{0.399}  & 0.097\\

  \midrule  
  \multirow{6}{*}{\rotatebox[origin=c]{90}{{\footnotesize Black-box}}}
  & LIME \cite{ribeiro_lime_2016} & 0.472 & 0.273 & 0.223 & 0.384 & 6.480\\  
  & Kernel Shap \cite{lundberg_kshap_2017} & \underline{0.480} & \underline{0.393} & \underline{0.367} & 0.383 & 4.097 \\  
  & RISE~\cite{petsiuk2018rise} & \textbf{0.554} & \textbf{0.485} & \textbf{0.439} &\textbf{0.443} & 8.427\\ 
  & Sobol \cite{fel_look_2021} & 0.370 & 0.313 & 0.309 & 0.331 & 5.254 \\ 
  & $\hsic$  eff. (ours) & 0.470 & 0.387 & 0.357 & 0.381 & \textcolor{green}{\textbf{0.956}} \\  
  & $\hsic$ acc. (ours) & \underline{0.481} & \underline{0.395} & \underline{0.366} & \underline{0.392}  & \underline{1.668}\\  
  
  \bottomrule
  \end{tabular}}
  \vspace{0mm}\caption{\textbf{Deletion} and \textbf{Insertion} scores obtained on 1,000 ImageNet validation set images (For Deletion, lower is better and for Insertion, higher is better). 
 The execution times are averaged over 100 explanations of ResNet50 predictions with an RTX Quadro 8000 GPU. 
  The first and second best results are \textbf{bolded} and \underline{underlined}. 
  }\label{tab:deletion}
  \end{table*}

In this section, we evaluate the fidelity of the explanations with three fidelity metrics. The first, Deletion \cite{petsiuk2018rise}, assumes that the more faithful an explanation is, the quicker the prediction score should drop when pixels that are considered important are shut down. The second one, Insertion \cite{petsiuk2018rise}, instead adds pixels on a baseline image, starting with pixels that are associated with the highest importance scores of the explanation. Finally, $\mu$Fidelity \cite{aggregating2020} creates random pixels subsets which are assigned a baseline value and measure the correlation between the drop in the score and the importance of the explanation. Those metrics are further described in Appendix F.

In Table \ref{tab:deletion}, we report the results of several different attribution methods for explaining the classification of MobileNet \cite{sandler2018mobilenetv2}, ResNet50 \cite{he2016identity}, EfficientNet \cite{tan2019efficientnet} and VGG16 \cite{simonyan2014very} on $1000$ images sampled from the ImageNet validation dataset. The models used for the experiments have been accessed from tensorflow \cite{tensorflow2015} with the keras API \cite{chollet2015keras}. We introduce two variants of our method, $\hsic$  eff. and $\hsic$  acc. The words "eff" and "acc" stand for efficient and accurate because we use $p=764$ and $p=1536$ samples, respectively. We use our method with a grid size of $7 \times 7$ ($d = 49$). To evaluate the different methods, we use the Xplique \cite{fel2021xplique}, a library dedicated to explainability. For black-box and white-box methods, we \textbf{bold} the best result and \underline{underline} the second. When the differences between some methods are not statistically significant, we highlight both. Note that for $\mu$Fidelity, the estimation variance is high (typically about $20 \%$), so we only use the bold notation and leave the Table in Appendix B. The exact error bars are also left in the Appendix to make the presentation lighter.

For the Deletion metric, our method obtains the best results among the tested black-box methods for all the architectures in both its efficiency and accurate variants. Except for EfficientNet, we even beat all tested white-box methods. RISE is still the best of black-box methods for the Insertion metric, but the accurate variant is systematically second. Besides, our methods are among the best of both black-box and white-box methods. 

While HSIC is systematically better in Deletion, we can note that RISE overshadows it in Insertion. This could be explained by how these metrics are constructed. Deletion and Insertion metrics consist in measuring Area Under the Curve (AUC) of scores that respectively decrease and increase when deleting and adding patches, starting from a baseline image (see Appendix F for a detailed definition). Since Deletion measures a drop in the score starting from the original image, the faster the score drops, the better the metric. Hence, Deletion will favor methods that sharply identify important regions. On the contrary, since Insertion starts from an arbitrary baseline image, if the explanation map is more spread out, more relevant secondary information will be added, so the score will be better. As we can see in the maps of Appendix C, RISE saliency maps are way more spread out than HSIC's, which are sharper. It may explain why RISE is better in the Insertion benchmark and why HSIC attribution method dominates the Deletion benchmark. We provide additional quantitative examples to illustrate this link in Appendix F. Note that even if RISE dominates Insertion, it is far behind in Deletion. This is not the case for HSIC, which is still competitive in Insertion while dominating Deletion. 

These results, as such, are already satisfactory. But it goes even further: we obtained these results with far fewer forward passes than other state-of-the-art black-box attribution methods. With the efficient variants, competitive results are obtained more than $8$ times faster than RISE, the current standard of black-box attribution methods. It improves on Sobol, a recent and promising attribution method that was already branded as more efficient, by a factor $5$. The time improvement factors for the accurate variant of our method are still very appealing ($5$ and $3$). We even beat the execution time of Integrated Gradients white-box method \cite{sundararajan2017axiomatic}, a popular and successful white-box method. The efficiency of HSIC attribution method is investigated more thoroughly in the next section.

\subsection{Estimator efficiency}\label{sec:efficiency}

\begin{wrapfigure}[14]{r}{0.55\textwidth}
  \centering
  \vspace{-3.5mm}
    \includegraphics[trim=0cm 2cm 0cm 2cm,clip=true, width=0.98\linewidth]{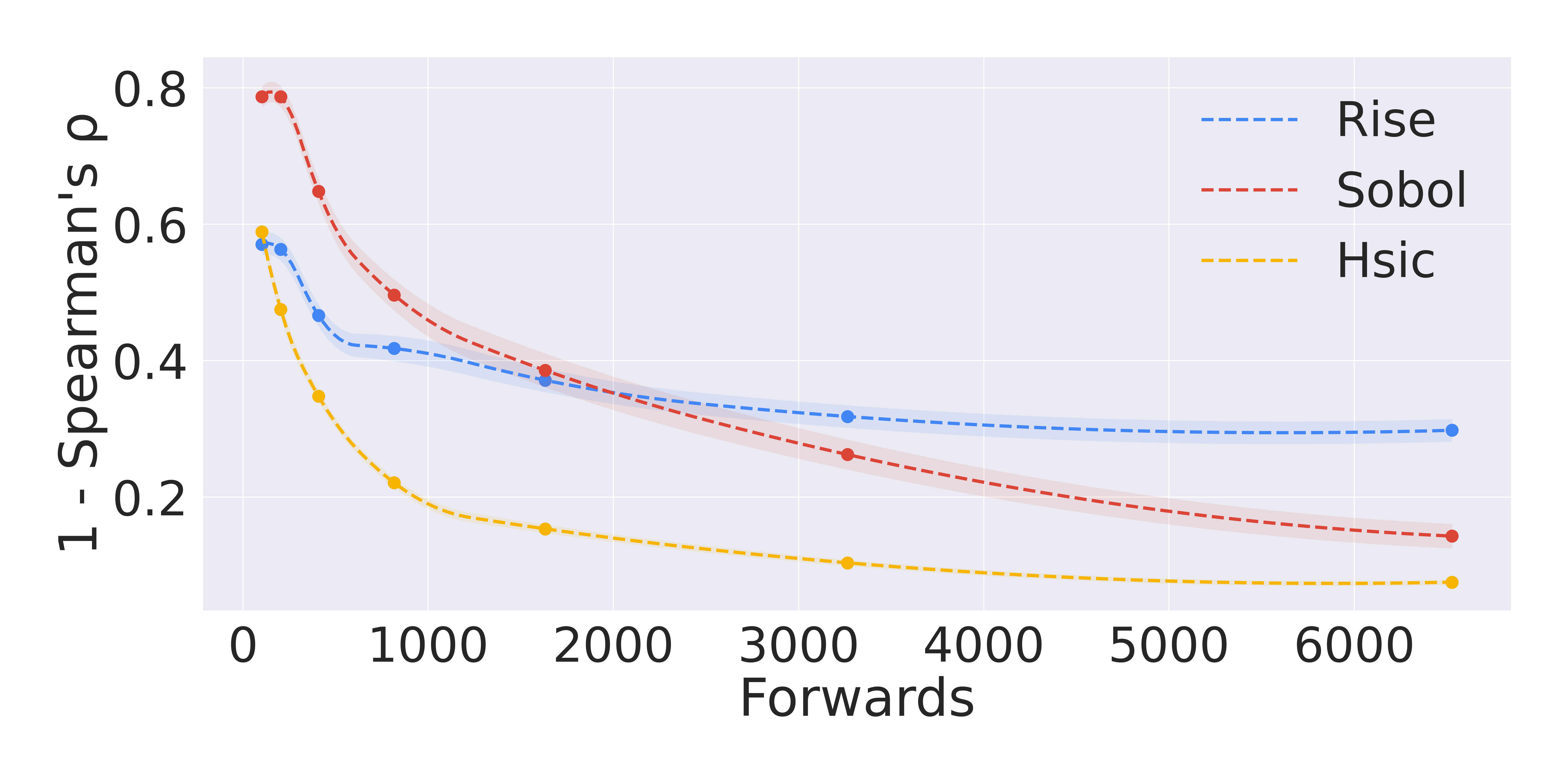}\vspace{0mm}
  \caption{Spearman rank correlation of HSIC, Sobol and RISE attribution methods explanations on 100 ImageNet validation images with an "asymptotical" explanation based on $13,000$ samples. \label{fig:hsic_spearman}}
\end{wrapfigure}

The advantage of black-box attribution methods lies in providing explanations without access to the model's internal state or the gradients. However, this advantage comes at a cost since many forward passes are needed to obtain meaningful explanations. This cost is all the more constraining since recent architectures are increasingly heavy in terms of computational time.

Therefore, it is critical for such attribution methods to use as few forward passes as possible. Results reported in Section \ref{sec:classification} attest that our approach based on $\hsic$ shines in that regard, and we refer to this section for more comments. It motivates us to study the efficiency of our method further. 
To that end, we compute an "asymptotical"\footnote{This explanation is not theoretically asymptotical (hence the quotation marks), but we use this designation because it is obtained with a very high number of forward passes} explanation with $13,000$ forward passes, for HSIC, Sobol and RISE attribution methods, on $100$ images of ImageNet validation set.

We apply the three methods on EfficientNet with $d = 7\times 7$ masks and image patches, like in \cite{petsiuk2018rise} and \cite{fel_look_2021}. We then compute explanations for an increasing number of forward passes and compare the obtained explanation with the baseline "asymptotical" explanation. We use Spearman rank correlation~\cite{spearman1904measure} like theoretically and empirically argued in ~\cite{ ghorbani2017interpretation, adebayo2018sanity, tomsett2019sanity, fel2020representativity}. This experiment allows comparing the convergence speed of our method with RISE and Sobol. The curves are plotted in Figure \ref{fig:hsic_spearman} and show that our method, HSIC converges much faster than RISE and Sobol.

\subsection{Explanation of object detection}\label{sec:coco}



Explaining model's predictions is more challenging for object detection than for image classification. Indeed, recent object detection models usually predict three pieces of information: localization (bounding box corners), objectness score (probability that a bounding box contains an object of any class), classification information (probability of each different possible class). Recently, it has been demonstrated that it was possible to use attribution methods to explain object detection by constructing a score aggregating for the previous information. In \cite{Petsiuk2021BlackboxEO} the authors combine \textit{intersection over union} for localization, \textit{cosine similarity} for the class probability, and focus on high objectness areas. As a result, they can use RISE to explain the object detection using this score as the output of the model.

In this section, we test our method for explaining the object detections of YOLOv4 \cite{redmon2017yolo9000} on COCO dataset \cite{coco} compared to the approach presented in \cite{Petsiuk2021BlackboxEO}, D-RISE. We also compare the explanations with these of Kernel Shap \cite{lundberg_kshap_2017}, another black-box attribution method. The explanations for $1,000$ validation images are evaluated with the Deletion, Insertion, and $\mu$Fidelity metrics. This experiment is time-consuming, so we use $\hsic$  eff. and $5000$ samples for D-RISE and Kernel Shap.

\begin{table*}[h]
  \centering
\begin{tabular}{lcccc}
  \toprule
    Method &  Deletion $(\downarrow)$ &  Insertion $(\uparrow)$ & $\mu$Fidelity $(\uparrow)$& Exec. time (s)\\
  \midrule
   D-RISE~\cite{Petsiuk2021BlackboxEO} & 0.074 & 0.634 & 0.442 & 155\\ 
  Kernel Shap. \cite{lundberg_kshap_2017} & \textbf{0.070} & 0.646 & 0.476 & 192 \\
  $\hsic$ (ours) & 0.088 & \textbf{0.658} & \textbf{0.568} & \textbf{34}  \\  

  \bottomrule
  \end{tabular}
  \vspace{0mm}\caption{Fidelity metrics obtained from explanations of YOLOv4 object detections on $1,000$ images of  COCO validation data set. Execution times are averaged on the $1,000$ images on RTX 3080 GPUs.
  }\label{tab:coco}
  \end{table*}

Even if our method is not the best for Deletion, it is for Insertion and $\mu$Fidelity, and more importantly, it is $5$ and $6$ times faster than D-RISE and Kernel Shap. Figure \ref{fig:OD} displays visualizations of object detection explanations. While the first images show a standard detection explanation, the rightmost one is more interesting since it emphasizes an error of the object detector. The model identifies a zebra as a cat, and our method manages to explain this error by emphasizing the cat at the bottom right corner of the image. Note that we did not obtain such an explanation with D-RISE, even with a high sample number and different grid sizes - visualizations can be found in Appendix C).

\begin{figure}[h]
  \centering
   \begin{subfigure}{0.22\textwidth}
     \includegraphics[width=1.0\linewidth]{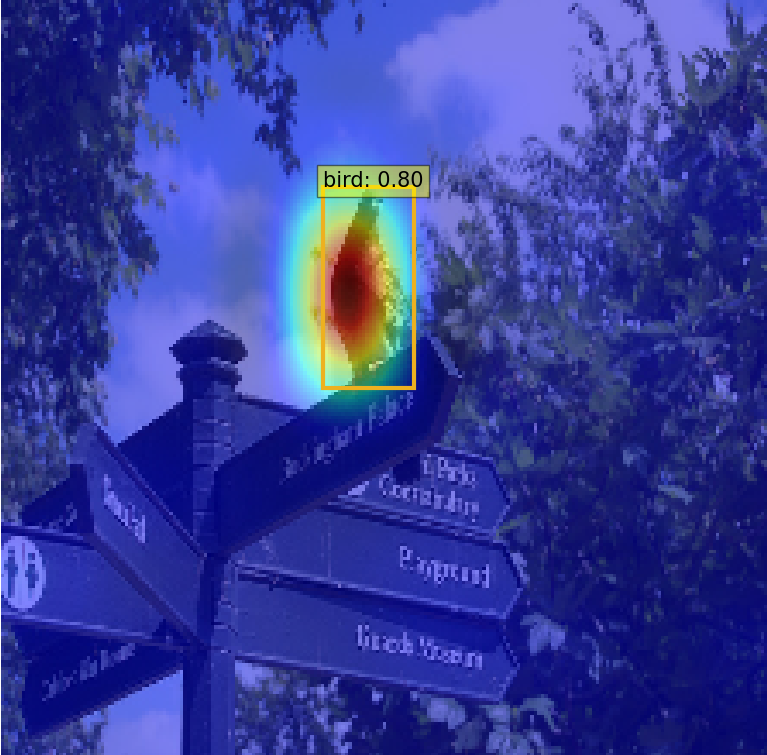}
   \end{subfigure}
   \begin{subfigure}{0.22\textwidth}
     \includegraphics[width=1.0\linewidth]{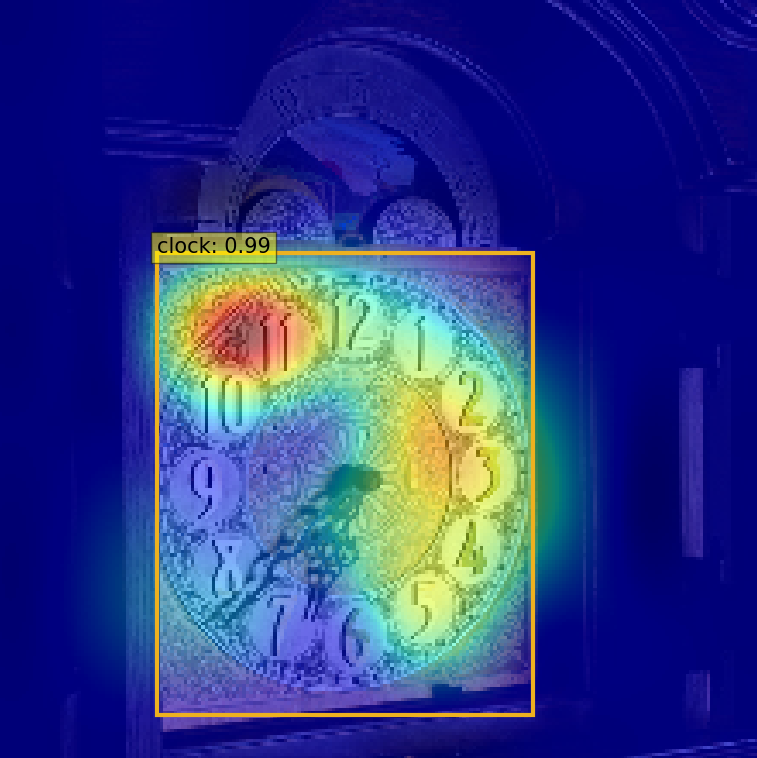}
   \end{subfigure}
      \begin{subfigure}{0.22\textwidth}
     \includegraphics[width=1.0\linewidth]{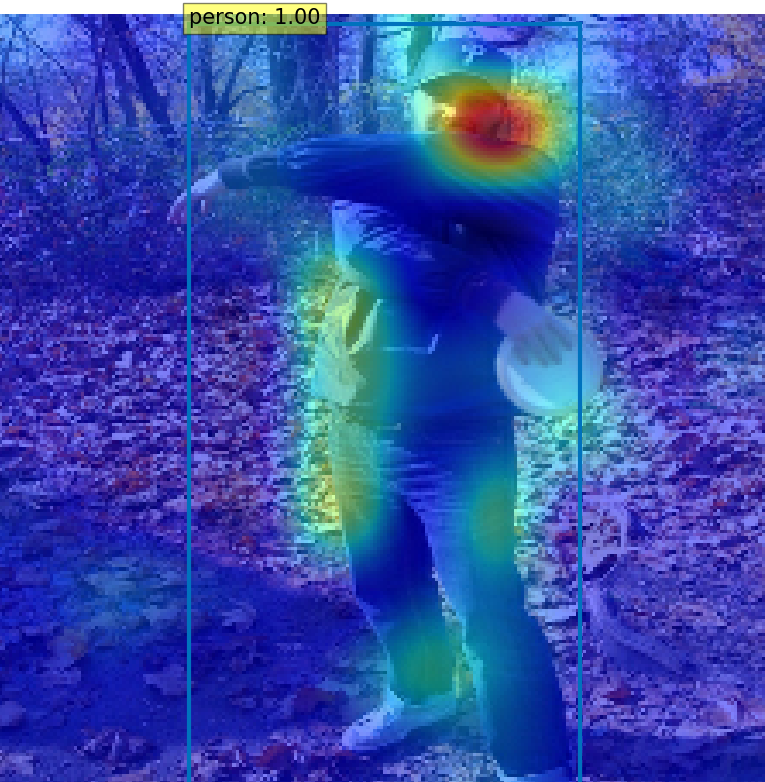}
   \end{subfigure}
      \begin{subfigure}{0.22\textwidth}
     \includegraphics[width=1.0\linewidth]{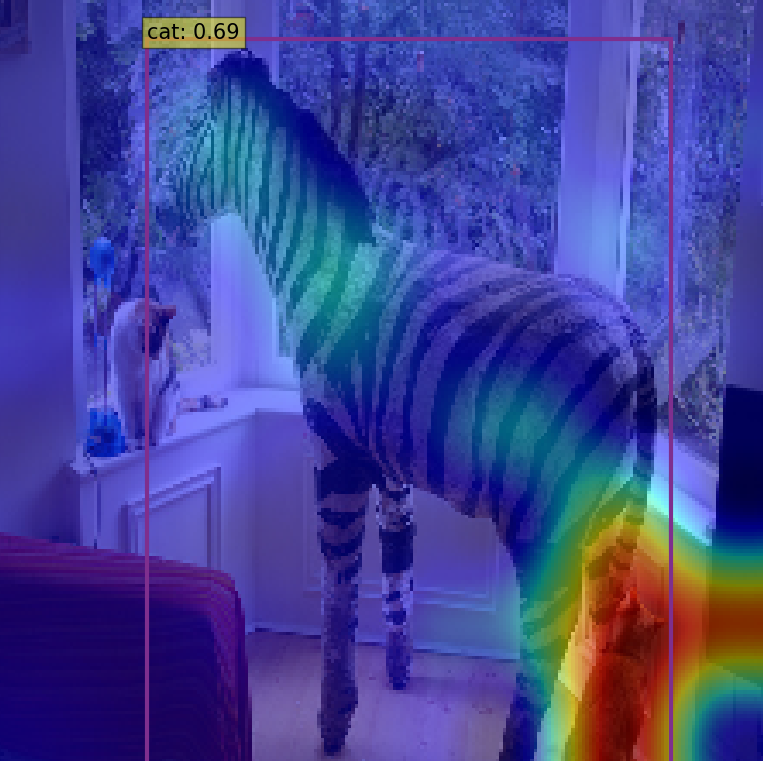}
   \end{subfigure}
   \caption{Visualisations of object detection explanations. The first three images show standard explanations, while the bottom right explains a misclassification. The zebra has been detected as a cat, and our method manages to explain why by emphasizing the cat at the bottom right corner.}
   \label{fig:OD}
\end{figure}

\subsection{Finding spacial interactions in the model}\label{sec:interactions}

Usual attribution methods provide explanations in the form of heat maps that assign each pixel (or patch) an importance score. However, the scope of such explanations is limited since the reason for a prediction may not be explained only by the single importance of independent patches. In \cite{fel_look_2021}, the authors use Sobol total indices that account for the importance of a patch in interaction with all other patches, but they cannot localize the interactions. Thanks to its orthogonal decomposition property, our HSIC-based attribution method is able not only to assess the importance of each patch, but also the importance of interactions between specific patches by computing the HSIC of the joint patches and subtracting the contribution of each patch taken independently\footnote{With the decomposition property, it is also possible to obtain HSIC "total" indices, like for Sobol, but it did not bring significant qualitative or quantitative advantages.}. It is then possible to identify regions of the image that affect the output only when both areas are perturbated jointly.

\begin{figure*}[h]
\centering
  \includegraphics[width=0.8\textwidth]{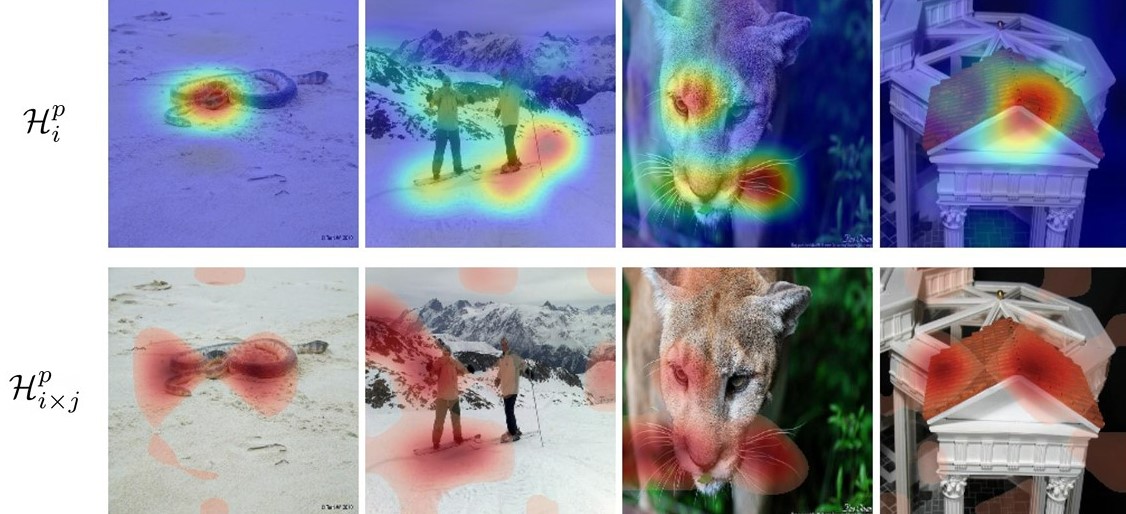}
  \caption{Upper row: $\mathcal{H}^p_{i}$, estimated HSIC for each image patch. Bottom row: highest $\mathcal{H}^p_{i\times j}$ between images patches. For the classification of the puma, the eye / forehead is the most important independent patch, but the right and left mustaches' interactions are even more important. }
  \label{fig:interactions}
\end{figure*}

We illustrate this property in Figure  \ref{fig:interactions}. For each image, we computed all the possible interactions between patches and reported the most important ones. Note that not all images exhibit significant interactions, so we performed a qualitative selection for pedagogical purposes. On the upper row, we plot usual heatmaps that traduce the importance of each patch taken independently. On the bottom row, important interactions are represented in red. We can see that the middle part of the snake body is not the only important element in the picture, that one part of the mountain interacts with one of the skiers, so do the mustaches of the puma and two corners of the roof. Note that for each image, the maximum values of $\mathcal{H}^p_{i}$ are $40.6, 11.4, 8.2$ and $18.8$ and the maximum values of $\mathcal{H}^p_{i \times j}$ are $19.6, 6.6, 9.2$ and $6.3$ respectively. Thanks to the orthogonal decomposition property, these metrics can be compared and we can deduce that some interactions are as significant as some important independent patches. For the image with a puma, the interactions between the mustaches are even more important than the eye / forehead for identifying the animal ($\mathcal{H}^p_{i \times j} = 9.2$ when $i$ and $j$ are the right and left mustaches and $\mathcal{H}^p_{iff} = 8.2$ when $i$ is the eye / forehead).

\section{Conclusion}

We have introduced a new attribution method based on a dependence measure, Hilbert-Schmidt Independence Criterion, which leverages representation capabilities of Reproducing Kernel Hilbert Spaces, thus being able to capture complex information. This attribution method is black-box, so it is applicable even when the implementation of the neural network to explain is not available. Nonetheless, it alleviates the computational burden of traditional black-box methods, improving on the state-of-the-art of both black-box and white-box attribution methods while being closer to the latter than the former in terms of efficiency. In addition, we showed how the rich framework of RKHS could be used to assess and localize interactions between pairs of patches of the input image that are relevant for explaining the output. We hope that the introduced framework will open up research avenues for attribution methods beyond traditional pixel-wise or patch-wise explanations.

\section*{Acknowledgements} This work has benefited from the AI Interdisciplinary Institute ANITI, which is funded by the French ``Investing for the Future – PIA3'' program under the Grant agreement ANR-19-P3IA-0004. The authors gratefully acknowledge the support of the DEEL project.\footnote{\url{https://www.deel.ai/}}

\newpage
\bibliography{refs}
\bibliographystyle{abbrv}


\newpage

\appendix 

\setcounter{figure}{0}
\setcounter{table}{0}
\setcounter{proposition}{0}
\makeatother

\section{On the Orthogonal Decomposition Property \ref{prop:binary}}

In this part, we state the orthogonal decomposition Property, motivate its importance with a pedagogical example, and finally prove Proposition 1, which enables the decomposition property in the context of HSIC attribution method.
\subsection{Orthogonal Decomposition Property}

Let $\rvx = \{\rx_1,...,\rx_n\} \in \mathcal{X}^n$ be a set of $n$ univariate random input variables. For any subset $A = \{l_1, ..., l_{|A|}\} \subseteq \{1,...,n\}$, we denote $\rvx_{A} = (\rx_{l_1},...,\rx_{l_{|A|}})$ the vector of input variables with indices in $A$. Let $\rvy$ the random output variable defined by $\rvy = \bm{f}(\rvx)$, $\mathcal{F}$ the RKHS defined by the kernel $k_A: \mathcal{X}^{|A|} \rightarrow \mathbb{R}$ and $\mathcal{G}$ the RKHS defined by the kernel $l: \mathcal{Y} \rightarrow \mathbb{R}$.

In \cite{da_veiga_decomposition_2021}, the author shows that for any choice of kernel $l$, if we respect some constraints on the kernel $k_A$, we can construct indices $HSIC(\rvx_{A}, \rvy)$ that satisfy the following  decomposition property.

\begin{property}[Decomposition property]
\label{prop}
For any kernel $l$, the kernel $k_A$ satisfies the decomposition property if: 
\begin{equation}
    HSIC(\rvx, \rvy) = \sum_{A \subseteq \{1,...,n\}} HSIC_A,
\end{equation}
where each term $HSIC_A$ is given by
\begin{equation*}
    HSIC_A = \sum_{B \subseteq A} (-1)^{|A| - |B|} HSIC(\rvx_B, \rvy),
\end{equation*}
and $HSIC(\rvx_B, \rvy)$ is defined as in \eqref{eq:hsicdef} with kernels $l$ and $k_A$.
\end{property}

The constraints on the kernel $k_A$ are detailed in the main document and in the last section of this appendix. Before describing these constraints and how to fulfill them with Proposition 1, let us illustrate the importance of the property with a motivating, pedagogical example.

\subsection{Motivating example}

In this section, we introduce a pedagogical example to motivate the interest in assessing the interactions and the importance of the Orthogonal Decomposition Property in that regard. Let $f: [0,2]^3 \rightarrow \{0,1\}$ such that

$$
       \textnormal{y} = f(\textnormal{x}_1, \textnormal{x}_2, \textnormal{x}_3) = 
   \begin{dcases}
    1 & \text{if  } \textnormal{x}_1 \in [0,1], \textnormal{x}_2 \in [1,2], \textnormal{x}_3 \in [0,1],\\
    1 & \text{if  } \textnormal{x}_1 \in [0,1], \textnormal{x}_2 \in [0,1], \textnormal{x}_3 \in [1,2],\\
    0 & \text{otherwise.  } 
    \end{dcases}
$$

\begin{figure*}[h!]
  \centering
  \includegraphics[width=0.5\textwidth]{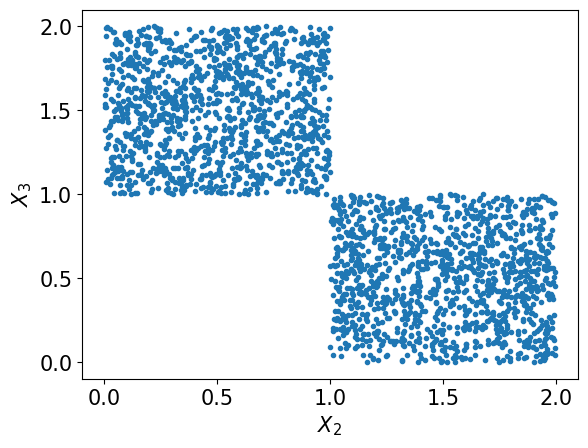}
  \caption{ Input points $X_i = (\textnormal{x}_{1,i}, \textnormal{x}_{2,i}, \textnormal{x}_{3,i})$ for which $f(X_i) = 1$ with respect to $\textnormal{x}_2$ and $\textnormal{x}_3$. }\vspace{-2mm}
  \label{fig:x2x3}
\end{figure*}

The relation between $\textnormal{x}_2$, $\textnormal{x}_3$ and the output of function $f$ is illustrated on Figure \ref{fig:x2x3}. Here $\textnormal{x}_i$ is analogous to $M_i$. In that case, it is clear that $\textnormal{x}_1$ is important to explain the output. However, assessing the effect of $\textnormal{x}_2$ and $\textnormal{x}_3$ is more difficult. Given the definition of $f$; they are important to explain the output $\textnormal{y}$, but it can be shown theoretically that $HSIC(\textnormal{x}_2, \textnormal{y})=0$ and $HSIC(\textnormal{x}_3, \textnormal{y})=0$ \cite{novello2021goal}. This motivates to assess the interactions between input variables. One way to retrieve the information that $\textnormal{x}_2$ and $\textnormal{x}_3$ are important is to assess $HSIC( \textnormal{x}_{2,3} , \textnormal{y})$, where $\textnormal{x}_{2,3} = (\textnormal{x}_2,\textnormal{x}_3)$

One could assess $HSIC( \textnormal{x}_{2,3} , \textnormal{y})$ without any constraints on the kernel $k$, and the obtained value for $HSIC( \textnormal{x}_{2,3} , \textnormal{y})$ would indeed be $>0$ . However, by doing so, we would also obtain that $HSIC( \textnormal{x}_{1,2} , \textnormal{y}) > 0$ and $HSIC( \textnormal{x}_{1,3} , \textnormal{y}) > 0$, whereas $\textnormal{x}_1$ does not interact with $\textnormal{x}_2$ and $\textnormal{x}_3$, only because of the individual effect of $\textnormal{x}_1$. We empirically illustrate this by assessing these metrics using the estimator of Eq. 2 with $p=10000$, and kernels $k$, $l$ chosen as the Radial Basis Function (RBF). The results are found in Table \ref{tab:rbf} below and show that:
\begin{itemize}
    \item $HSIC(\textnormal{x}_2, \textnormal{y}) \approx HSIC(\textnormal{x}_3, \textnormal{y}) \approx 0$
    \item $HSIC( \textnormal{x}_{1,2} , \textnormal{y}) \approx HSIC( \textnormal{x}_{1,3} , \textnormal{y}) > HSIC( \textnormal{x}_{2,3}, \textnormal{y})$
\end{itemize}

\begin{table*}[!h]
  \centering
\resizebox{\textwidth}{!}{\begin{tabular}{cccccc}
  \toprule
    $HSIC(\textnormal{x}_1, \textnormal{y})$ &  $HSIC(\textnormal{x}_2, \textnormal{y})$ & $HSIC(\textnormal{x}_3, \textnormal{y})$ & $HSIC( \textnormal{x}_{1,3} , \textnormal{y})$ &  $HSIC( \textnormal{x}_{1,2} , \textnormal{y})$ & $HSIC( \textnormal{x}_{2,3} , \textnormal{y})$ \\
  \midrule
  $1.79 \times 10^{-2}$  & $2.28 \times 10^{-6}$ & $9.63 \times 10^{-6}$ & $1.36 \times 10^{-2}$ &  $1.36 \times 10^{-2}$ &  $2.92 \times 10^{-3}$\\  

  \bottomrule
  \end{tabular}}
  \vspace{0mm}\caption{HSIC metrics with $k$ taken as RBF}\label{tab:rbf}
  \vspace{-0.2cm}
  \end{table*}

In order to correctly assess the pairwise interactions of input variables $\textnormal{x}_1$ and $\textnormal{x}_2$, one has to remove the individual effect of each variable from the $HSIC( \textnormal{x}_{1,2} , \textnormal{y})$. The orthogonal decomposition property \cite{da_veiga_decomposition_2021} allows to do so by simply computing $HSIC_{inter}( \textnormal{x}_{1,2} , \textnormal{y})$ as

\begin{equation*}
    HSIC_{inter}( \textnormal{x}_{1,2} , \textnormal{y}) = HSIC( \textnormal{x}_{1,2} , \textnormal{y}) - HSIC(\textnormal{x}_1, \textnormal{y}) - HSIC(\textnormal{x}_2, \textnormal{y})
\end{equation*}

\textbf{If the decomposition property does not hold, we are not guaranteed to fully remove the individual effect of $\textnormal{x}_1$ and $\textnormal{x}_2$ using the previous formula}. We estimate $HSIC_{inter}( \textnormal{x}_{1,2} , \textnormal{y})$ when the kernel $k$ satisfies the decomposition property (in that case we choose a Sobolev kernel as in [1]), and when it does not, and show that the correct information of  $HSIC_{inter}( \textnormal{x}_{1,2} , \textnormal{y})$ is only retrieved when the decomposition property is satisfied. As previously, this is illustrated in the experiment, whose results are found in Table 2.

\begin{table*}[!h]
  \centering
\begin{tabular}{lccc}
  \toprule
     &  $HSIC(\textnormal{x}_2, \textnormal{y})$ & $HSIC(\textnormal{x}_3, \textnormal{y})$ & $HSIC( \textnormal{x}_{1,3} , \textnormal{y})$ \\
  \midrule
  $k$ Sobolev  &$7.68 \times 10^{-6}$ & $2.83 \times 10^{-6}$ & $7.85 \times 10^{-4}$ \\ 
  $k$ RBF  & $-4.35 \times 10^{-3}$ &  $-4.30 \times 10^{-3}$ & $2.91 \times 10^{-3}$ \\  

  \bottomrule
  \end{tabular}
  \vspace{0mm}\caption{HSIC metrics for assessing interactions, when $k$ satisfies (Sobolev) / does not satisfy (RBF) the orthogonal decomposition property}\label{tab:decomp}
  \vspace{-0.2cm}
  \end{table*}

In that case, with $k$ satisfying the orthogonal decomposition property (Sobolev), we retrieve that $HSIC_{inter}( \textnormal{x}_{1,2} , \textnormal{y}) \approx HSIC_{inter}( \textnormal{x}_{1,3} , \textnormal{y}) \approx 0$ and $HSIC_{inter}( \textnormal{x}_{2,3} , \textnormal{y})$ is significant. When $k$ does not satisfy the property (RBF), the values are not relevant (a negative value has no meaning since the metric is a distance)

\subsection{Proof of Proposition 1} 

To benefit from Property \ref{prop}, the kernel $k_A$ must satisfy the following assumption \cite{da_veiga_decomposition_2021}:

\begin{assumption}\label{ass}
The kernel $k_A$ satisfies Property \ref{prop} if 
  \begin{equation*}
      k_{A}(\vx_A, \vx'_A) = \prod_{i \in A} (1 + k_0(x_i, x_i')), 
  \end{equation*}
where
    \begin{equation*}
        k_0(x, x') = k(x, x') - \frac{\int k(x, t)dP(t) \int k(x', t)dP(t)}{\int \int k(s, t)dP(s)dP(t)}.
    \end{equation*}
\end{assumption}

We now recall and prove the introduced Proposition \ref{prop:binary} defined in Section \ref{sec:ortho}. 

\begin{proposition}\label{prop2}
  Let $\rx$ a Bernoulli variable of parameter $p=1/2$, and  $\delta(x = x')$ the dirac kernel such that $\delta(x= x') = 1$ if $x=x'$ and $0$ otherwise. Let $k_0$ be defined as in \eqref{eq:condition}. Then, the kernel $k_A$ satisfies the decomposition property (Property \ref{ass}) if it is defined according to Assumption \ref{ass}, with
  \begin{equation}\label{eq2:kernel}
    k_0(x, x') = \delta(x = x') - \frac{1}{2}.
  \end{equation}
\end{proposition}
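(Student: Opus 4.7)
The plan is to verify Proposition \ref{prop2} by a direct substitution into the condition of Assumption \ref{ass}. The proposition is really a special case of the general construction in \cite{da_veiga_decomposition_2021}: once we show that the specific kernel $k_0(x,x')=\delta(x=x')-\tfrac{1}{2}$ can be obtained from the formula
\[
k_0(x,x') \;=\; k(x,x') \;-\; \frac{\int k(x,t)\,dP(t)\,\int k(x',t)\,dP(t)}{\int\!\!\int k(s,t)\,dP(s)\,dP(t)}
\]
for some valid base kernel $k$ and the Bernoulli$(1/2)$ law $P$, the decomposition property follows immediately from Assumption \ref{ass}. So the work reduces to identifying $k$, computing three integrals with respect to $P=\tfrac{1}{2}\delta_0+\tfrac{1}{2}\delta_1$, and checking that the resulting expression matches the claim.

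First I would take the base kernel to be the Dirac kernel itself, $k(x,x')=\delta(x=x')$, which is a valid characteristic kernel on the finite set $\{0,1\}$. Then, for any fixed $x\in\{0,1\}$, the single integral evaluates to
\[
\int k(x,t)\,dP(t) \;=\; \tfrac{1}{2}\delta(x=0) + \tfrac{1}{2}\delta(x=1) \;=\; \tfrac{1}{2},
\]
and similarly $\int k(x',t)\,dP(t)=\tfrac{1}{2}$. The double integral is computed by iterating:
\[
\int\!\!\int k(s,t)\,dP(s)\,dP(t) \;=\; \int \tfrac{1}{2}\,dP(s) \;=\; \tfrac{1}{2}.
\]

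Substituting these three values into the formula defining $k_0$ gives
\[
k_0(x,x') \;=\; \delta(x=x') \;-\; \frac{(1/2)(1/2)}{1/2} \;=\; \delta(x=x') \;-\; \tfrac{1}{2},
\]
which is exactly \eqref{eq2:kernel}. Since $k_0$ was obtained from a valid characteristic kernel $k$ via the construction of Assumption \ref{ass}, the tensorized kernel $k_A(\vx_A,\vx'_A)=\prod_{i\in A}(1+k_0(x_i,x_i'))$ satisfies Property \ref{prop}, which concludes the proof.

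I do not foresee any real obstacle here, since the result is a direct verification: the only point that requires mild care is recognizing that the Dirac kernel is indeed characteristic on the (finite) support of a Bernoulli variable, so that all hypotheses of the general decomposition theorem of \cite{da_veiga_decomposition_2021} are satisfied and Assumption \ref{ass} can be invoked. Everything else is an elementary computation with a two-point probability measure.
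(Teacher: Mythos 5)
Your proof is correct and follows essentially the same route as the paper's: both take the base kernel to be the Dirac kernel and verify by direct computation that the centering ratio in \eqref{eq:condition} equals $\tfrac{1}{2}$ for the Bernoulli$(1/2)$ measure (the paper does a case analysis over $x,x'$ for general $p$ before specializing, whereas you exploit $p=\tfrac12$ to get the single integral uniformly equal to $\tfrac12$, which is a slightly cleaner version of the same calculation).
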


\begin{proof}
Let $\rs$ and $\rt$ be two iid random Bernoulli variables of parameter $p$ with probability density functions $p_{\rs}$ and  $p_{\rt}$. We have that
\begin{equation*}
    \begin{dcases}
        dP(s) = p_{\rs}(s)ds = \big(p\delta(s=1) + (1-p)\delta(s=0)\big)ds\\
        dP(t) = p_{\rt}(t)dt = \big(p\delta(t=1) + (1-p)\delta(t=0)\big)dt.\\
    \end{dcases}
\end{equation*}
Now, let's consider two Bernoulli variables $\rx$ and $\rx'$, two samples $x \sim \rx$ and $x' \sim \rx'$, and a kernel $k$ such that $k(x, x') = \delta(x=x')$.
\begin{itemize}
    \item if $\rx \neq \rx'$ 
        \begin{equation*}
            \begin{dcases}
                \int k(x, t)dP(t) \int k(x, s)dP(s) = p(1-p)\\
                \int \int k(s, t)dP(s)dP(t) =  p^2 + (1 - p)^2
            \end{dcases}
        \end{equation*}
    \item if $\rx = \rx' = 0$ 
        \begin{equation*}
            \begin{dcases}
                \int k(x, t)dP(t) \int k(x, s)dP(s) = p^2\\
                \int \int k(s, t)dP(s)dP(t) = p^2 + (1 - p)^2
            \end{dcases}
        \end{equation*}
    \item if $\rx = \rx' = 1$ 
        \begin{equation*}
            \begin{dcases}
                \int k(x, t)dP(t) \int k(x, s)dP(s) = (p-1)^2\\
                \int \int k(s, t)dP(s)dP(t) = p^2 + (1 - p)^2
            \end{dcases}
        \end{equation*}
\end{itemize}
Therefore, since $p=\frac{1}{2}$, 
\begin{equation*}
    \frac{\int k(x, t)dP(t) \int k(x', t)dP(t)}{\int \int k(s, t)dP(s)dP(t)} = \frac{1}{2},
\end{equation*}
so the kernel 
\begin{equation*}
    k_0(x, x') = \delta(x = x') - \frac{1}{2}
\end{equation*}
satisfies the decomposition property \ref{prop}.
\end{proof}

\newpage
\section{Complete fidelity results}

\begin{table*}[h]
  \centering
  \resizebox{\textwidth}{!}{\begin{tabular}{c lccccc}
  \toprule
   
   & Method & \textit{ResNet50} & \textit{VGG16} & \textit{EfficientNet} & \textit{MobileNetV2} & Exec. time (s)\\
   \midrule
   Del. ($\downarrow$)  & & & & & &\\
  
  \multirow{6}{*}{\rotatebox[origin=c]{90}{{\footnotesize White-box}}}
  & Saliency~\cite{simonyan2014deep} & 0.158 $\pm$ 0.006 & 0.120 $\pm$ 0.005 & 0.091 $\pm$ 0.003 & 0.113  $\pm$ 0.004 & 0.360 \\ 
  & Grad.-Input~\cite{shrikumar2016not} & 0.153 $\pm$ 0.006 & \underline{0.116} $\pm$ 0.004 & \underline{0.084} $\pm$ 0.003 & 0.110 $\pm$ 0.004 & 0.023 \\
  & Integ.-Grad.~\cite{sundararajan2017axiomatic} & 0.138 $\pm$ 0.005  & \underline{0.114} $\pm$ 0.004 & \textbf{0.078} $\pm$ 0.002  & \underline{0.096} $\pm$ 0.004  & \textcolor{red}{\textbf{1.024}}\\
  & SmoothGrad~\cite{smilkov2017smoothgrad} & \underline{0.127} $\pm$ 0.005  & 0.128 $\pm$ 0.005 & 0.094  $\pm$ 0.003 & \textbf{0.088} $\pm$ 0.003 & 0.063 \\ 
  & GradCAM++~\cite{selvaraju2017gradcam} & \textbf{0.124} $\pm$ 0.004  & \textbf{0.105} $\pm$ 0.003 & 0.112 $\pm$ 0.005 & 0.106 $\pm$ 0.005  & 0.127\\ 
  & VarGrad~\cite{selvaraju2017gradcam} & 0.134 $\pm$ 0.005  & 0.229 $\pm$ 0.007 & 0.224 $\pm$ 0.007  &\underline{0.097} $\pm$ 0.004 & 0.097\\
  \midrule  
  \multirow{7}{*}{\rotatebox[origin=c]{90}{{\footnotesize Black-box}}}
  & LIME \cite{ribeiro_lime_2016} & 0.186 $\pm$ 0.006 & 0.258 $\pm$ 0.008 & 0.186 $\pm$ 0.007  & 0.148 $\pm$ 0.006 & 6.480 \\  
  & Kernel Shap \cite{lundberg_kshap_2017} & 0.185 $\pm$ 0.006 & 0.165 $\pm$ 0.006 & 0.164 $\pm$ 0.006  & 0.149 $\pm$ 0.006 & 4.097 \\ 
  & RISE~\cite{petsiuk2018rise} & \underline{0.114} $\pm$ 0.004 & \underline{0.106} $\pm$ 0.004 & 0.113$\pm$ 0.005  & 0.115  $\pm$ 0.004 & 8.427\\ 
  & Sobol \cite{fel_look_2021} & 0.121 $\pm$ 0.003  & 0.109 $\pm$ 0.004 & \underline{0.104} $\pm$ 0.003  & \underline{0.107} $\pm$ 0.004 & 5.254\\  
  & $\hsic$  eff. (ours) & \textbf{0.106} $\pm$ 0.003 & \textbf{0.100} $\pm$ 0.004 & \textbf{0.095} $\pm$ 0.003 & \textbf{0.094} $\pm$ 0.003 & \textcolor{green}{\textbf{0.956}} \\  
  & $\hsic$  acc. (ours) & \textbf{0.105} $\pm$ 0.003 & \textbf{0.099} $\pm$ 0.004 & \textbf{0.094}  $\pm$ 0.003 & \textbf{0.093} $\pm$ 0.003 & \underline{1.668}\\  

  \toprule
   Ins. ($\uparrow$)  & & & & & &\\

  \multirow{6}{*}{\rotatebox[origin=c]{90}{{\footnotesize White-box}}}
  
  & Saliency~\cite{simonyan2014deep} & 0.357 $\pm$ 0.009 & \underline{0.286} $\pm$ 0.009 & 0.224 $\pm$ 0.008 & 0.246 $\pm$ 0.008 & 0.360 \\ 
  & Grad.-Input~\cite{shrikumar2016not} & 0.363 $\pm$ 0.010 & 0.272 $\pm$ 0.008 & 0.220 $\pm$ 0.009 & 0.231 $\pm$ 0.007 &  0.023 \\
  & Integ.-Grad.~\cite{sundararajan2017axiomatic} & 0.386 $\pm$ 0.010 & 0.276 $\pm$ 0.009 & \underline{0.248} $\pm$ 0.008 & 0.258 $\pm$ 0.008 & \textcolor{red}{\textbf{1.024}}\\
  & SmoothGrad~\cite{smilkov2017smoothgrad} & 0.379 $\pm$ 0.010 & 0.229 $\pm$ 0.008 & 0.172 $\pm$ 0.006 & 0.246 $\pm$ 0.008 & 0.063 \\ 
  & GradCAM++~\cite{selvaraju2017gradcam} & \underline{0.497} $\pm$ 0.010 & \textbf{0.413} $\pm$ 0.010 & \textbf{0.316} $\pm$ 0.009 & \underline{0.387} $\pm$ 0.009 & 0.127\\ 
  & VarGrad~\cite{selvaraju2017gradcam} & \textbf{0.527} $\pm$ 0.010 & 0.241 $\pm$ 0.008 & 0.222 $\pm$ 0.007 & \textbf{0.399} $\pm$ 0.009 & 0.097\\ 
  \midrule  
  \multirow{6}{*}{\rotatebox[origin=c]{90}{{\footnotesize Black-box}}}
  & LIME \cite{ribeiro_lime_2016} & 0.472 $\pm$ 0.010 & 0.273 $\pm$ 0.009 & 0.223 $\pm$ 0.007 & 0.384 $\pm$ 0.009 & 6.480\\  
  & Kernel Shap \cite{lundberg_kshap_2017} & \underline{0.480} $\pm$ 0.010 & \underline{0.393} $\pm$ 0.009 & \underline{0.367} $\pm$ 0.008 & 0.383 $\pm$ 0.009 & 4.097 \\  
  & RISE~\cite{petsiuk2018rise} & \textbf{0.554} $\pm$ 0.010 & \textbf{0.485} $\pm$ 0.010 & \textbf{0.439} $\pm$ 0.009 &\textbf{0.443} $\pm$ 0.009 & 8.427\\ 
  & Sobol \cite{fel_look_2021} & 0.370 $\pm$ 0.009 & 0.313 $\pm$ 0.009 & 0.309 $\pm$ 0.009 & 0.331 $\pm$ 0.009 & 5.254 \\ 
  & $\hsic$  eff. (ours) & 0.470 $\pm$ 0.011 & 0.387 $\pm$ 0.010 & 0.357 $\pm$ 0.009 & 0.381 $\pm$ 0.009& \textcolor{green}{\textbf{0.956}} \\  
  & $\hsic$ acc. (ours) & \underline{0.481} $\pm$ 0.011 & \underline{0.395} $\pm$ 0.011 & \underline{0.366} $\pm$ 0.009 & \underline{0.392} $\pm$ 0.009 & \underline{1.668}\\  
  
  \bottomrule
  \end{tabular}}
  \vspace{0mm}\caption{\textbf{Deletion} and \textbf{Insertion} scores obtained on 1,000 ImageNet validation set images (For Deletion, lower is better and for Insertion, higher is better). The execution times are averaged over 100 explanations of ResNet50 predictions with a RTX Quadro 8000 GPU. 
  The first and second best results are \textbf{bolded} and \underline{underlined}. 
  }
  \end{table*}

\begin{table*}[ht!]
\centering
\resizebox{\textwidth}{!}{\begin{tabular}{c lccccc}
\toprule
  & Method & \textit{ResNet50} & \textit{VGG16} & \textit{EfficientNet} & \textit{MobileNetV2} & Exec. time (s) \\
\midrule

\multirow{6}{*}{\rotatebox[origin=c]{90}{{\footnotesize White-box}}}
& Saliency~\cite{simonyan2014deep} & 0.192 $\pm$ 0.034  & 0.092 $\pm$ 0.035  & 0.102 $\pm$ 0.029  & \textbf{0.172} $\pm$ 0.030 & 0.360 \\ 
& Grad.-Input~\cite{shrikumar2016not} & 0.157 $\pm$ 0.034  & 0.066 $\pm$ 0.029 & 0.085 $\pm$ 0.030 & 0.116 $\pm$ 0.029 & 0.023  \\
& Integ.-Grad.~\cite{sundararajan2017axiomatic} & 0.162 $\pm$ 0.033 & 0.073 $\pm$ 0.029 & \textbf{0.139} $\pm$ 0.028  & \textbf{0.157} $\pm$ 0.030 & \textcolor{red}{\textbf{1.024}}\\
& SmoothGrad~\cite{smilkov2017smoothgrad} & \textbf{0.230} $\pm$ 0.032  & 0.087 $\pm$ 0.030 & \textbf{0.101} $\pm$ 0.030 & 0.126 $\pm$ 0.028 & 0.063  \\ 
& GradCAM++~\cite{selvaraju2017gradcam} & 0.142 $\pm$ 0.032 & \textbf{0.143} $\pm$ 0.032 & \textbf{0.128} $\pm$ 0.031 & 0.131 $\pm$ 0.029 & 0.127\\ 
& VarGrad~\cite{selvaraju2017gradcam} & 0.021 $\pm$ 0.022 & 0.022 $\pm$ 0.020 & 0.001 $\pm$ 0.003 & 0.101 $\pm$ 0.032  & 0.097\\ 

\midrule  
\multirow{6}{*}{\rotatebox[origin=c]{90}{{\footnotesize Black-box}}}
& LIME \cite{ribeiro_lime_2016} & 0.110 $\pm$ 0.033 & 0.015 $\pm$ 0.032 & 0.000 $\pm$ 0.024  & 0.055 $\pm$ 0.031 & 6.480\\  
& Kernel Shap \cite{lundberg_kshap_2017} & 0.104 $\pm$ 0.033  & 0.068 $\pm$ 0.034  & 0.079 $\pm$ 0.032 & 0.051 $\pm$ 0.031 & 4.097 \\  
& RISE~\cite{petsiuk2018rise} & 0.182 $\pm$ 0.034  & 0.099 $\pm$ 0.034  & \textbf{0.133} $\pm$ 0.036  & \textbf{0.123} $\pm$ 0.031   & 8.427\\ 
& Sobol \cite{fel_look_2021} & \textbf{0.230} $\pm$ 0.034   &\textbf{0.110} $\pm$ 0.030  & \textbf{0.141} $\pm$ 0.034  & \textbf{0.131} $\pm$ 0.030 & 5.254 \\ 
& $\hsic$  eff. (ours) & \textbf{0.202} $\pm$ 0.034  & \textbf{0.116} $\pm$ 0.034  & \textbf{0.154} $\pm$ 0.035 & \textbf{0.111} $\pm$ 0.031  & \textcolor{green}{\textbf{0.956}} \\  
& $\hsic$ acc. (ours) & 0.187 $\pm$ 0.035 & \textbf{0.136}  $\pm$ 0.030  & \textbf{0.155} $\pm$ 0.035  & \textbf{0.120} $\pm$ 0.031   & \underline{1.668}\\  

\bottomrule
\end{tabular}}
\caption{\textbf{$\mu$Fidelity} scores, obtained on 1,000 images from ImageNet validation set. 
Higher is better.
The first and second best results are \textbf{bolded} and \underline{underlined}.  The execution times are averaged over 100 explanations of ResNet50 predictions with a RTX Quadro 8000 GPU. 
}\label{tab:mufidelity}
\end{table*}

\section{Additional visualizations on object detection explanations}

\subsection{Visualizations}

In this part we provide a sample of visualizations of object detection explanations for HSIC, RISE and KernelShap. HSIC seems more robust than the two other methods that are often blurry and sometimes fail.
These images are taken from the 40 first images of COCO dataset. Out of transparancy, we provide all the 40 first explanations in the github repository found at \url{https://anonymous.4open.science/r/HSIC-Attribution-Method-C684}.

\begin{figure}[!h]
  \centering
  \rotatebox[origin=c]{90}{{\footnotesize HSIC}}
   \begin{subfigure}{0.22\textwidth}
     \includegraphics[width=1.0\linewidth]{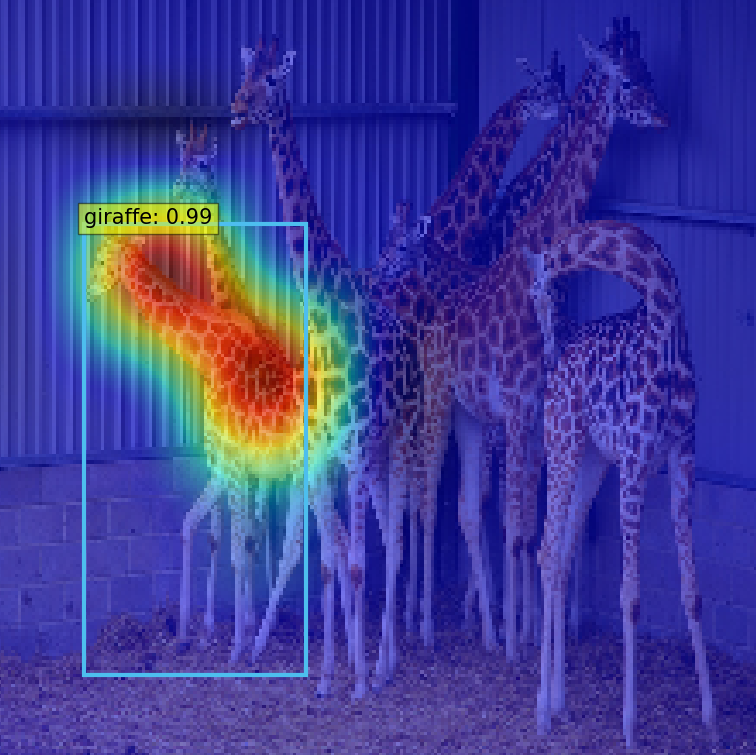}
   \end{subfigure}
   \begin{subfigure}{0.22\textwidth}
     \includegraphics[width=1.0\linewidth]{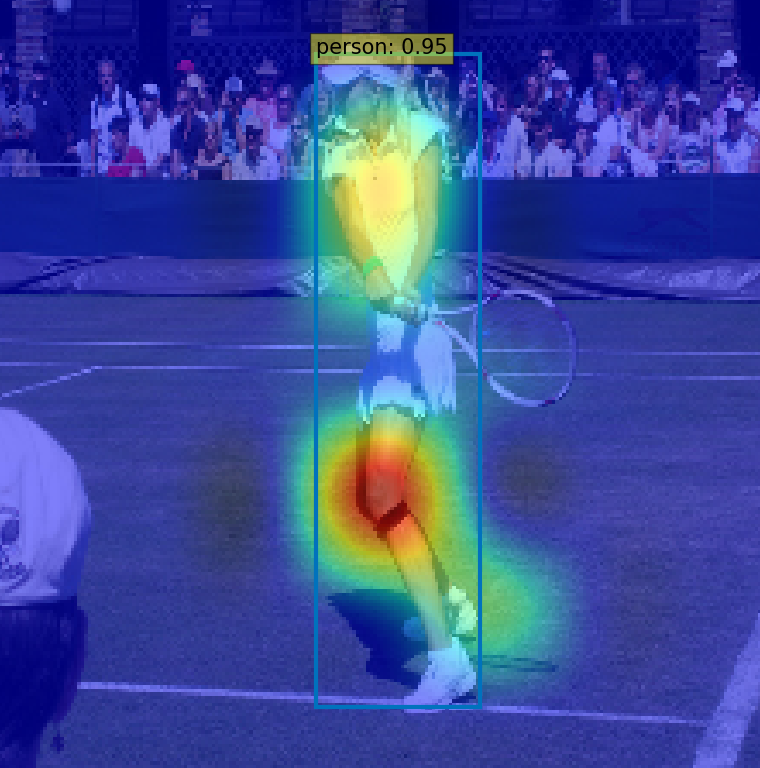}
   \end{subfigure}
      \begin{subfigure}{0.22\textwidth}
     \includegraphics[width=1.0\linewidth]{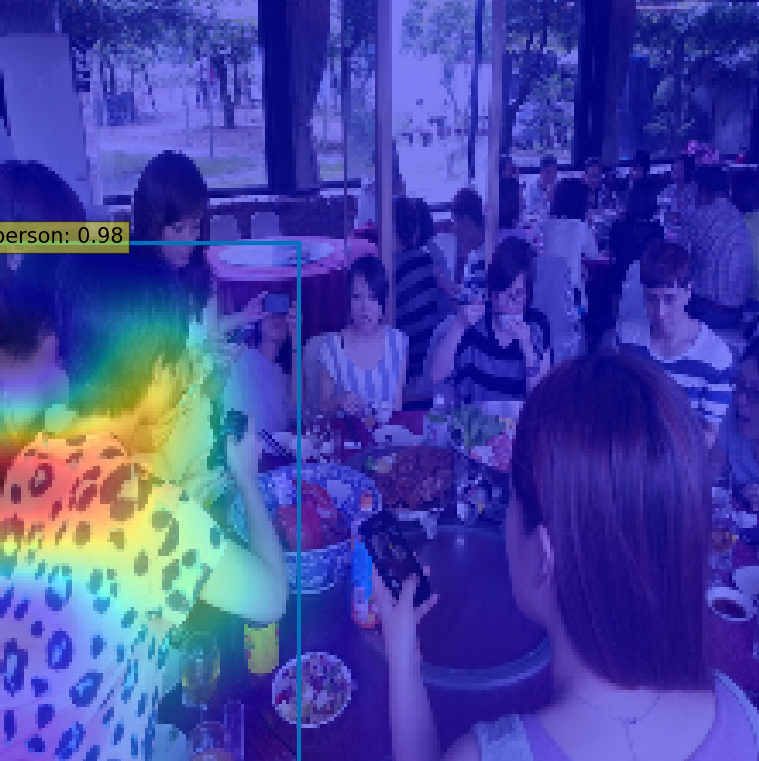}
   \end{subfigure}
      \begin{subfigure}{0.22\textwidth}
     \includegraphics[width=1.0\linewidth]{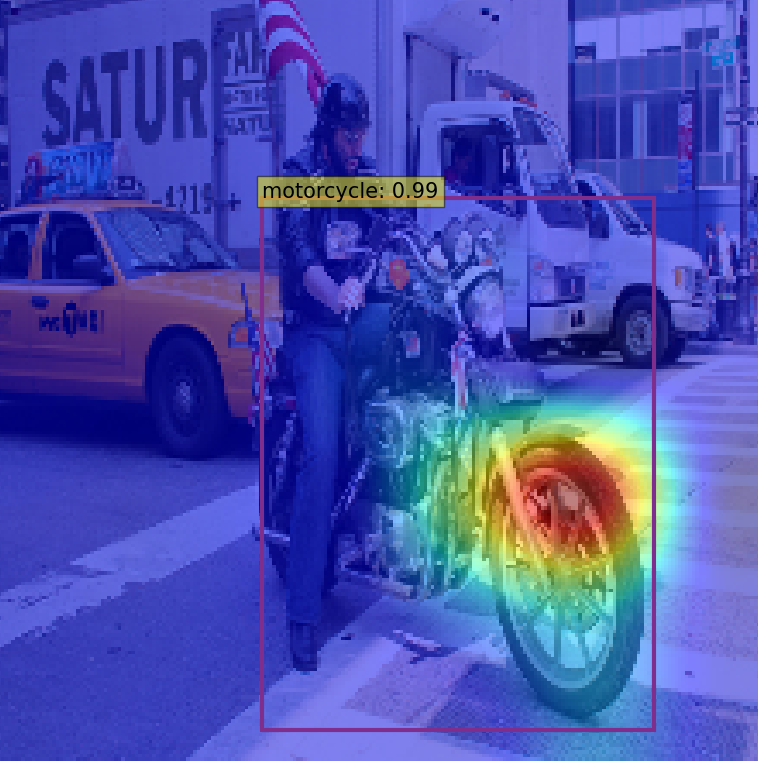}
   \end{subfigure}\\

     \rotatebox[origin=c]{90}{{\footnotesize RISE}}
   \begin{subfigure}{0.22\textwidth}
     \includegraphics[width=1.0\linewidth]{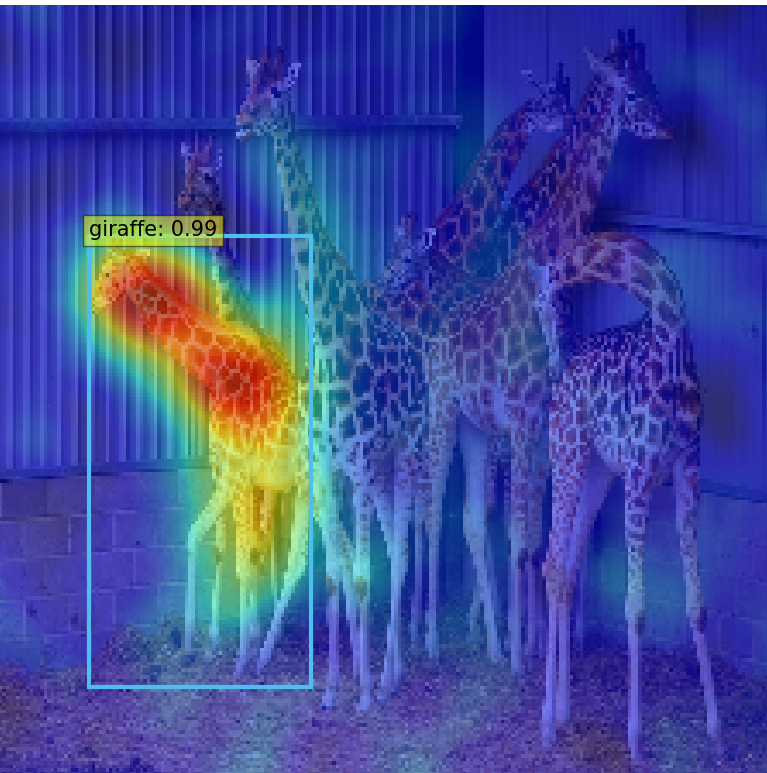}
   \end{subfigure}
   \begin{subfigure}{0.22\textwidth}
     \includegraphics[width=1.0\linewidth]{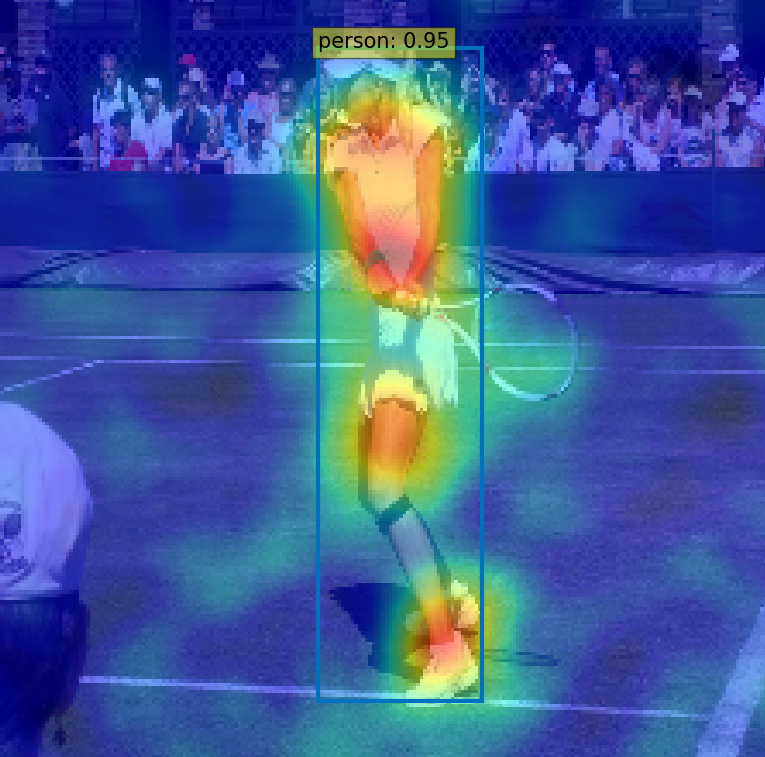}
   \end{subfigure}
      \begin{subfigure}{0.22\textwidth}
     \includegraphics[width=1.0\linewidth]{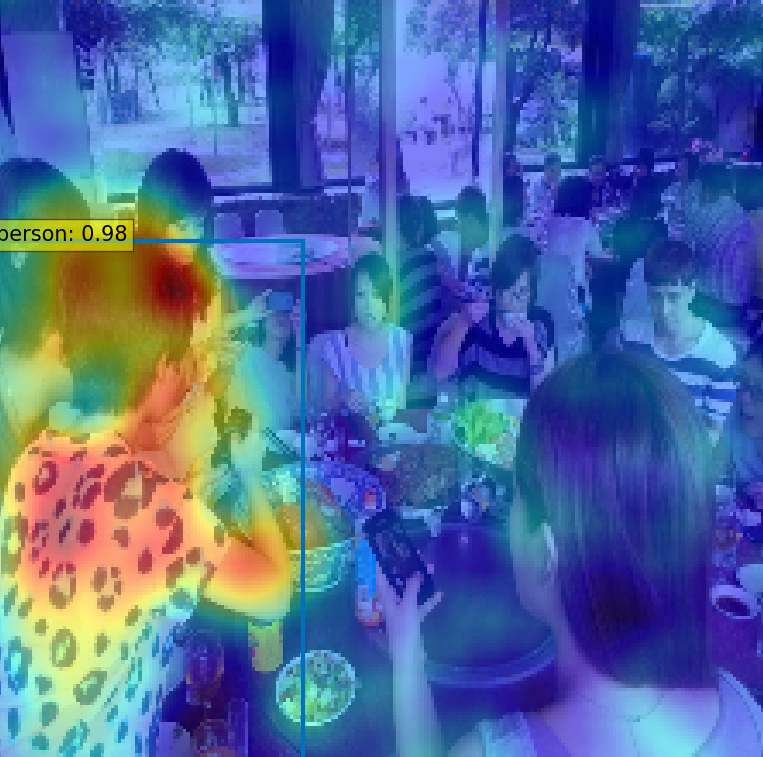}
   \end{subfigure}
      \begin{subfigure}{0.22\textwidth}
     \includegraphics[width=1.0\linewidth]{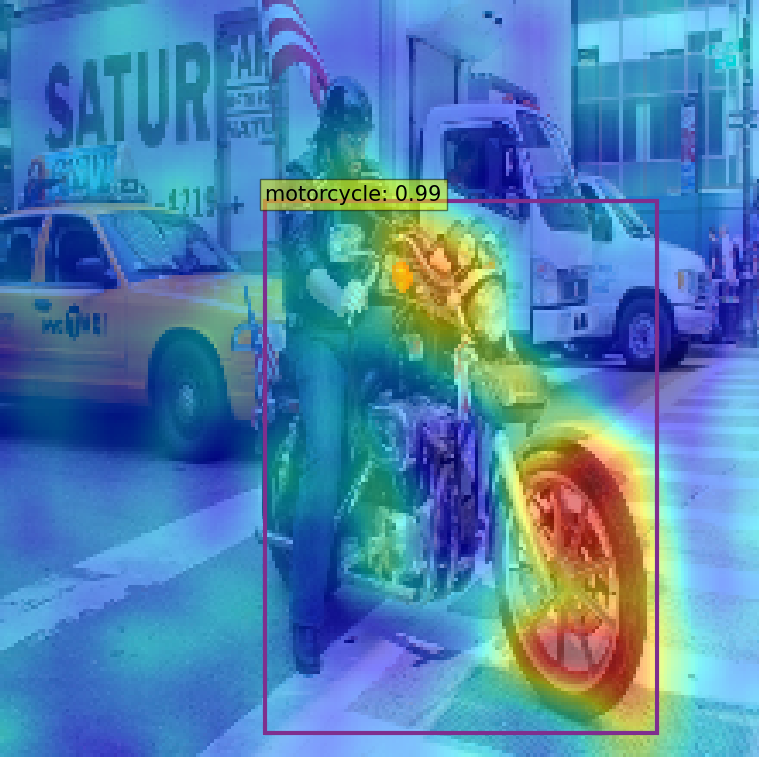}
   \end{subfigure}\\

     \rotatebox[origin=c]{90}{{\footnotesize KernelShap}}
   \begin{subfigure}{0.22\textwidth}
     \includegraphics[width=1.0\linewidth]{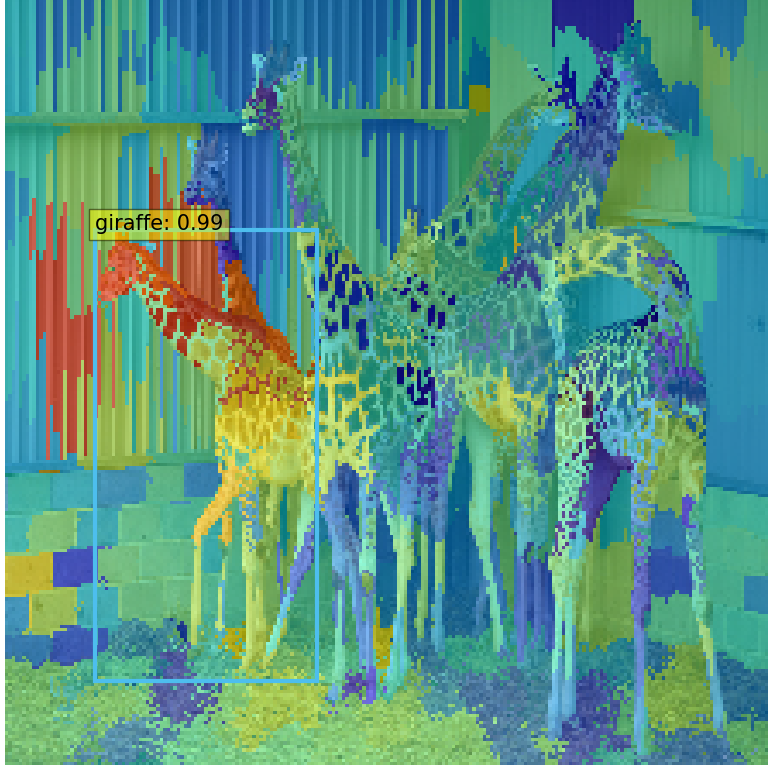}
   \end{subfigure}
   \begin{subfigure}{0.22\textwidth}
     \includegraphics[width=1.0\linewidth]{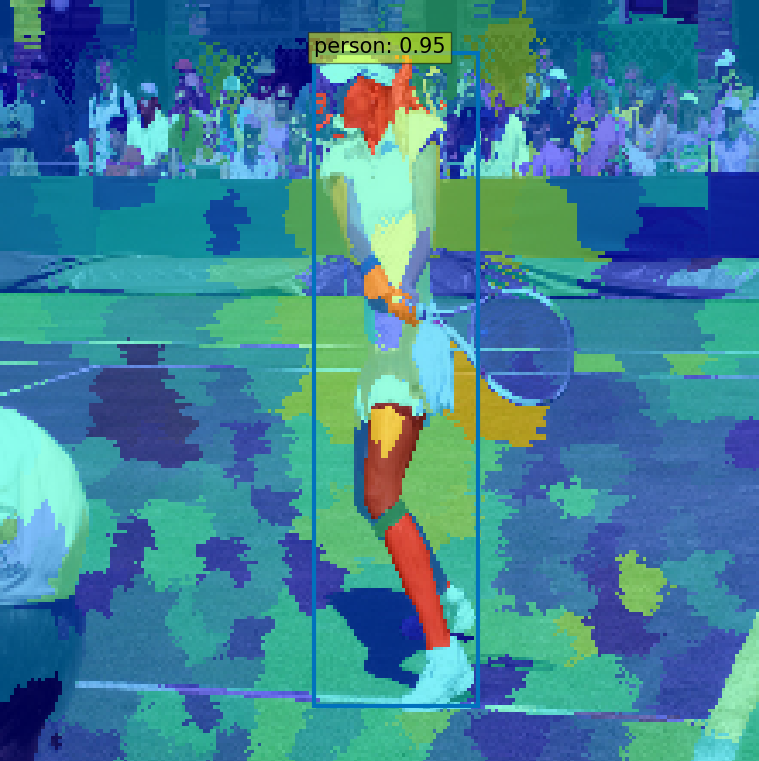}
   \end{subfigure}
      \begin{subfigure}{0.22\textwidth}
     \includegraphics[width=1.0\linewidth]{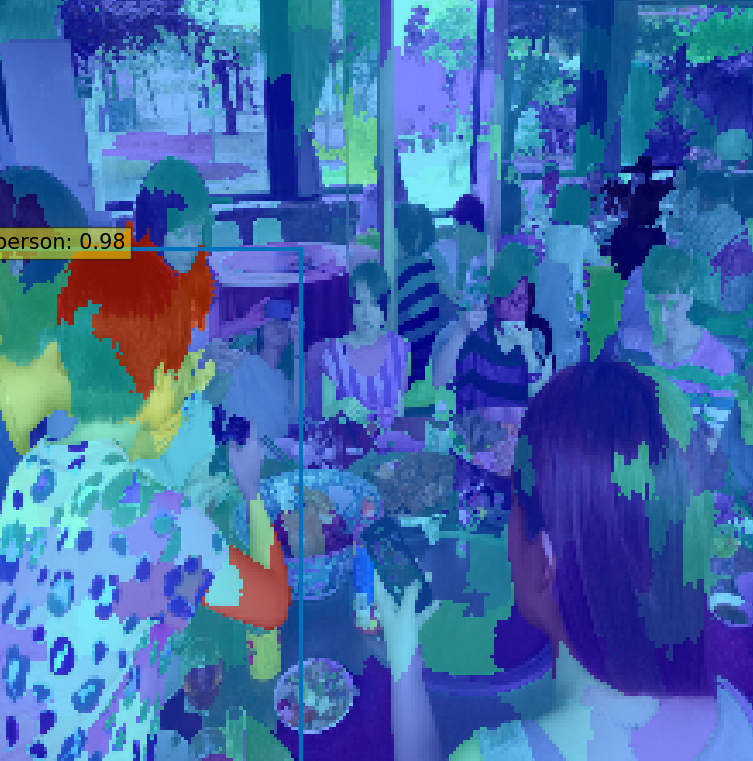}
   \end{subfigure}
      \begin{subfigure}{0.22\textwidth}
     \includegraphics[width=1.0\linewidth]{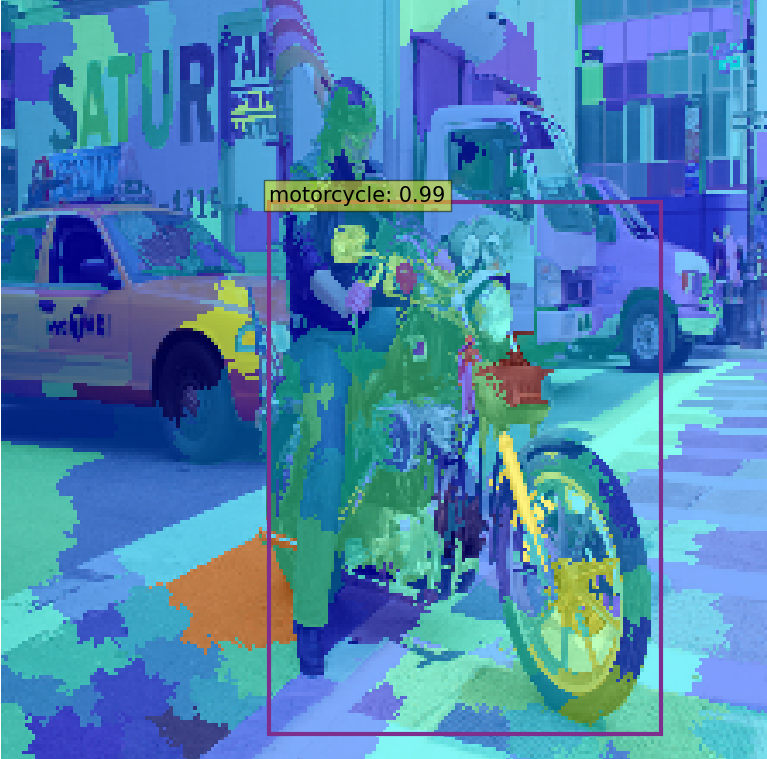}
   \end{subfigure}
   \caption{Visualisations of object detection explanations (1/2). }
   
\end{figure}

\begin{figure}[!h]
  \centering
  \rotatebox[origin=c]{90}{{\footnotesize HSIC}}
   \begin{subfigure}{0.22\textwidth}
     \includegraphics[width=1.0\linewidth]{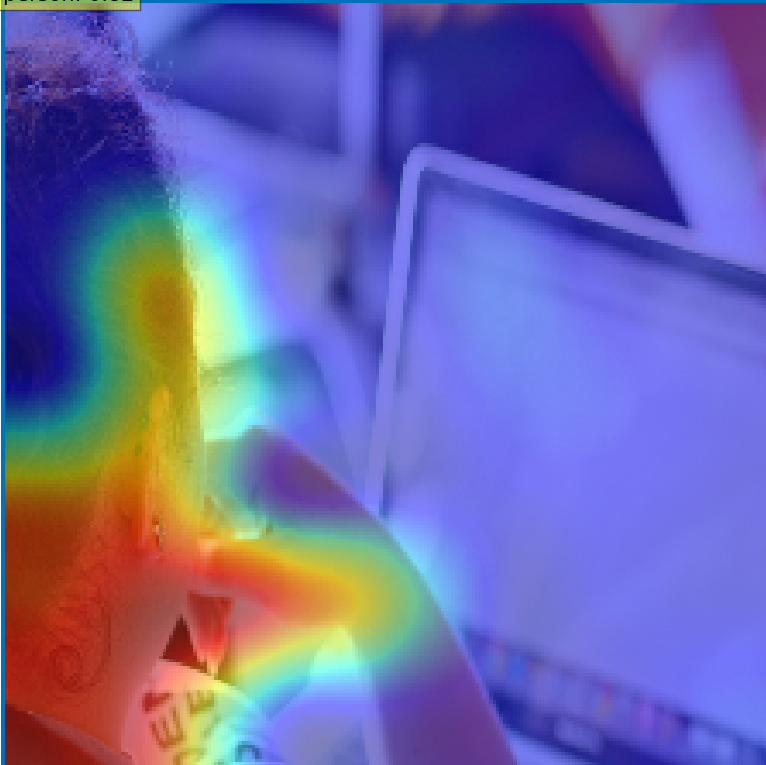}
   \end{subfigure}
   \begin{subfigure}{0.22\textwidth}
     \includegraphics[width=1.0\linewidth]{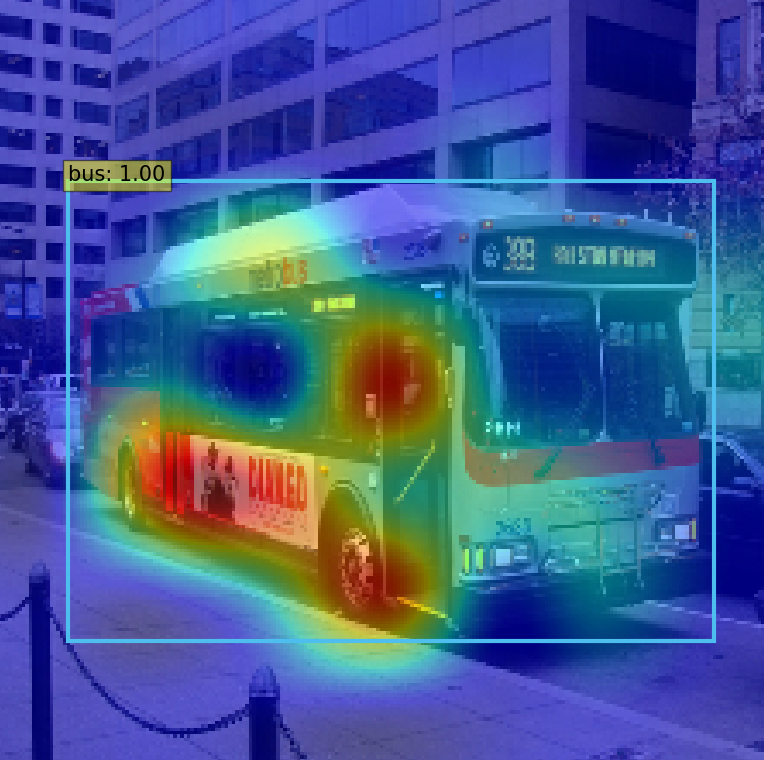}
   \end{subfigure}
      \begin{subfigure}{0.22\textwidth}
     \includegraphics[width=1.0\linewidth]{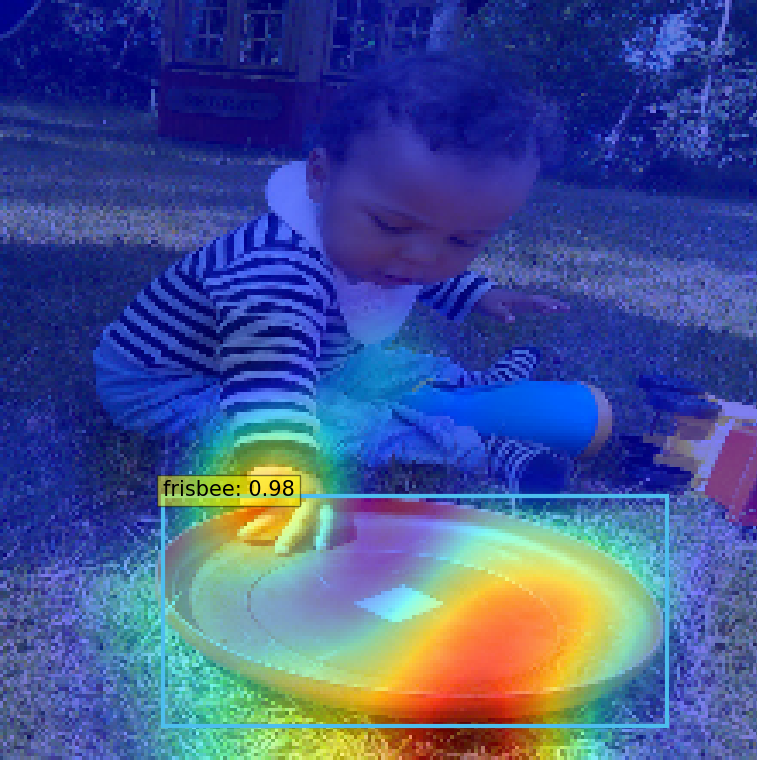}
   \end{subfigure}
      \begin{subfigure}{0.22\textwidth}
     \includegraphics[width=1.0\linewidth]{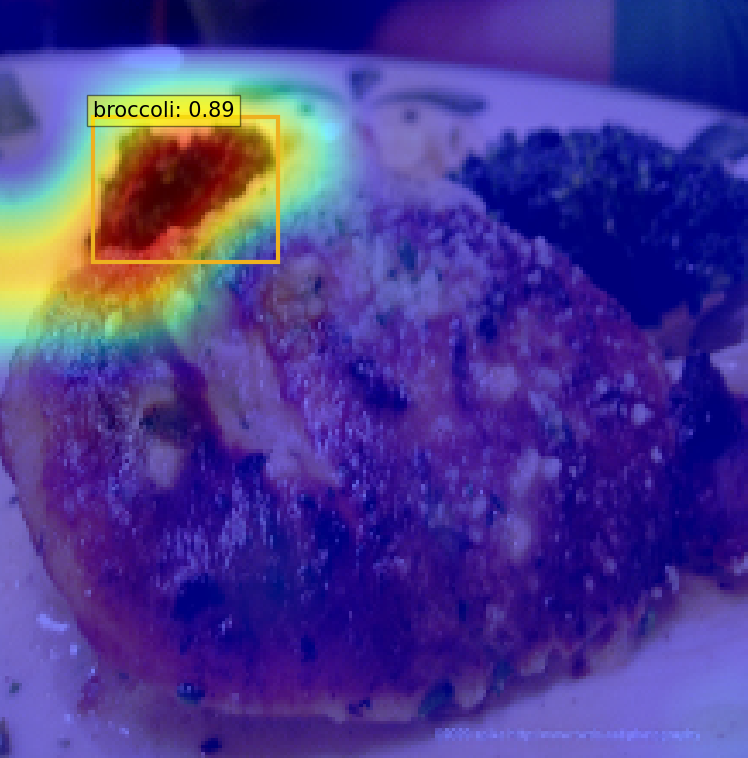}
   \end{subfigure}\\

     \rotatebox[origin=c]{90}{{\footnotesize RISE}}
   \begin{subfigure}{0.22\textwidth}
     \includegraphics[width=1.0\linewidth]{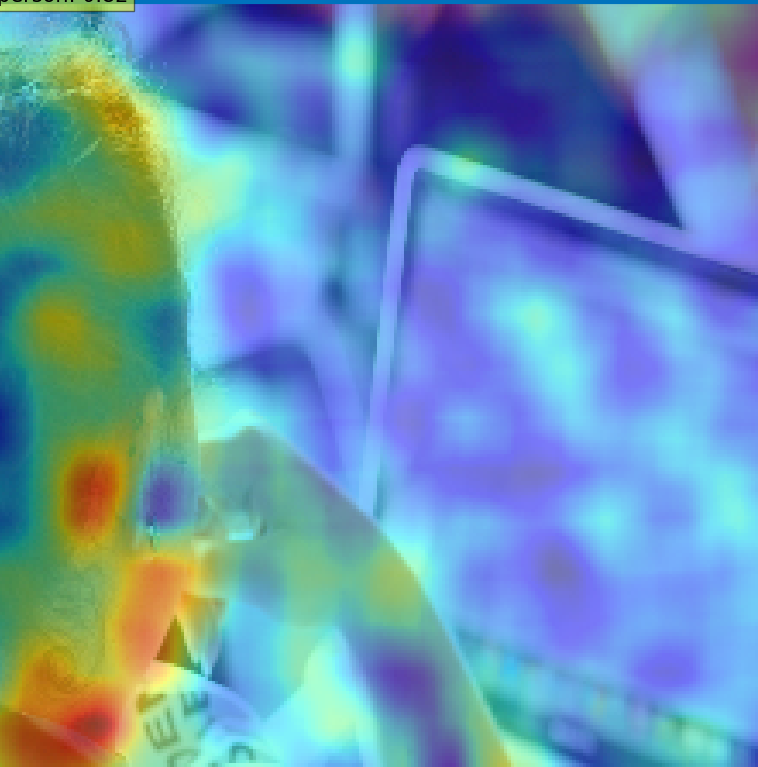}
   \end{subfigure}
   \begin{subfigure}{0.22\textwidth}
     \includegraphics[width=1.0\linewidth]{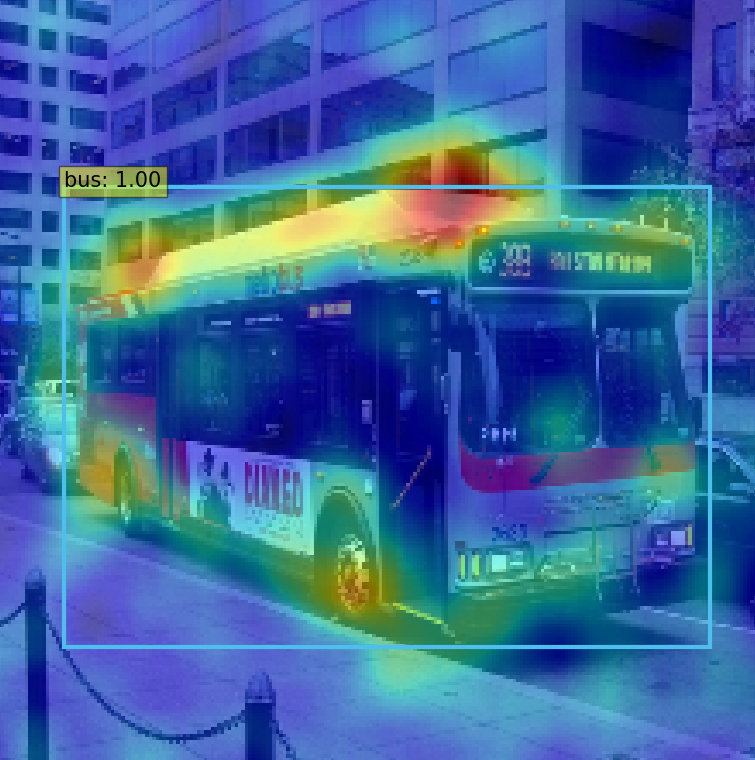}
   \end{subfigure}
      \begin{subfigure}{0.22\textwidth}
     \includegraphics[width=1.0\linewidth]{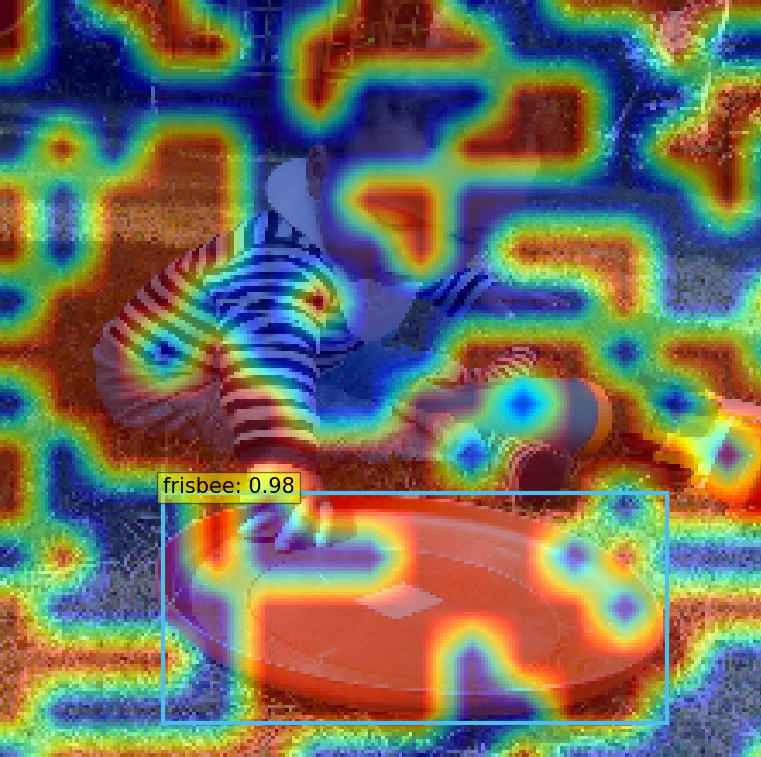}
   \end{subfigure}
      \begin{subfigure}{0.22\textwidth}
     \includegraphics[width=1.0\linewidth]{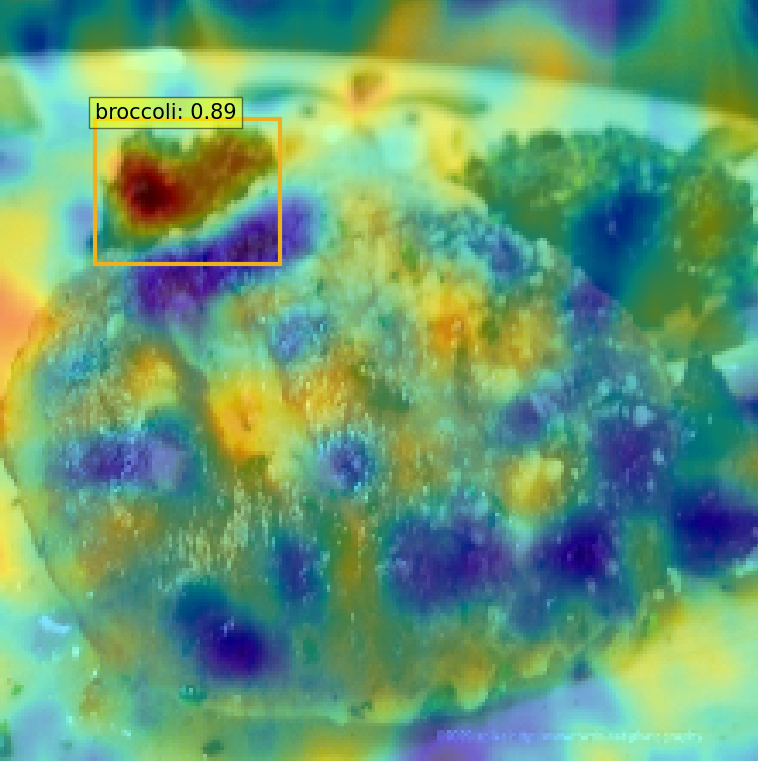}
   \end{subfigure}\\

     \rotatebox[origin=c]{90}{{\footnotesize KernelShap}}
   \begin{subfigure}{0.22\textwidth}
     \includegraphics[width=1.0\linewidth]{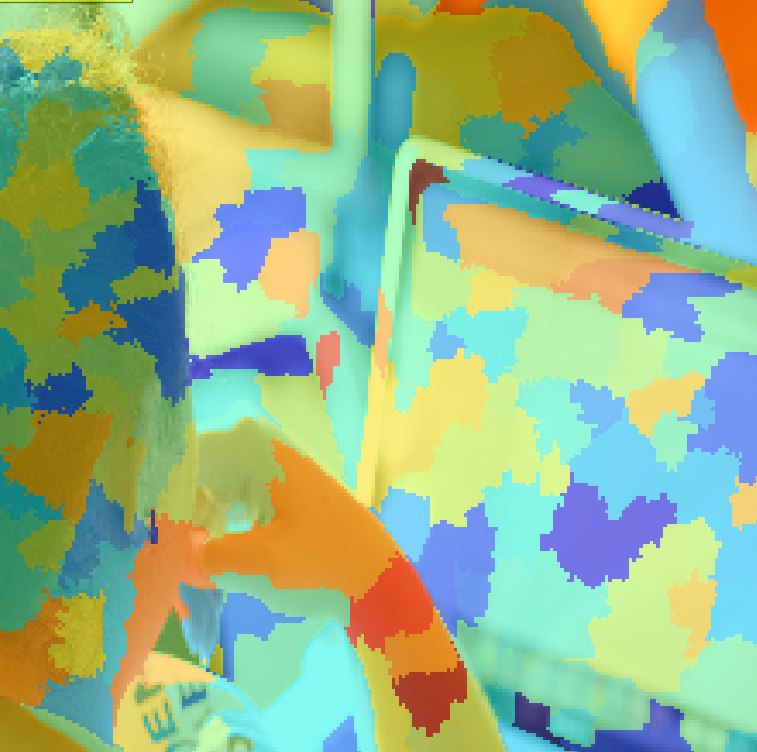}
   \end{subfigure}
   \begin{subfigure}{0.22\textwidth}
     \includegraphics[width=1.0\linewidth]{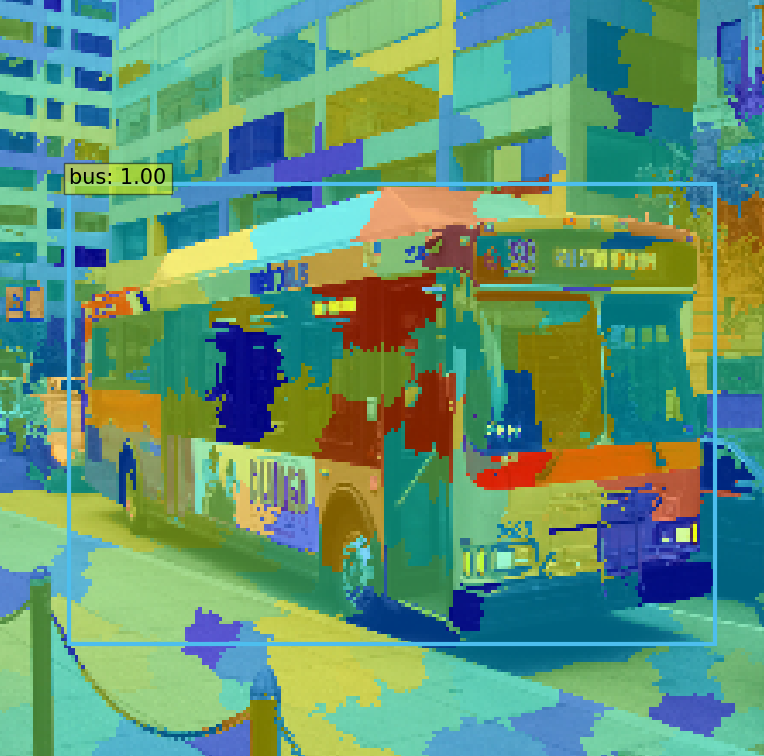}
   \end{subfigure}
      \begin{subfigure}{0.22\textwidth}
     \includegraphics[width=1.0\linewidth]{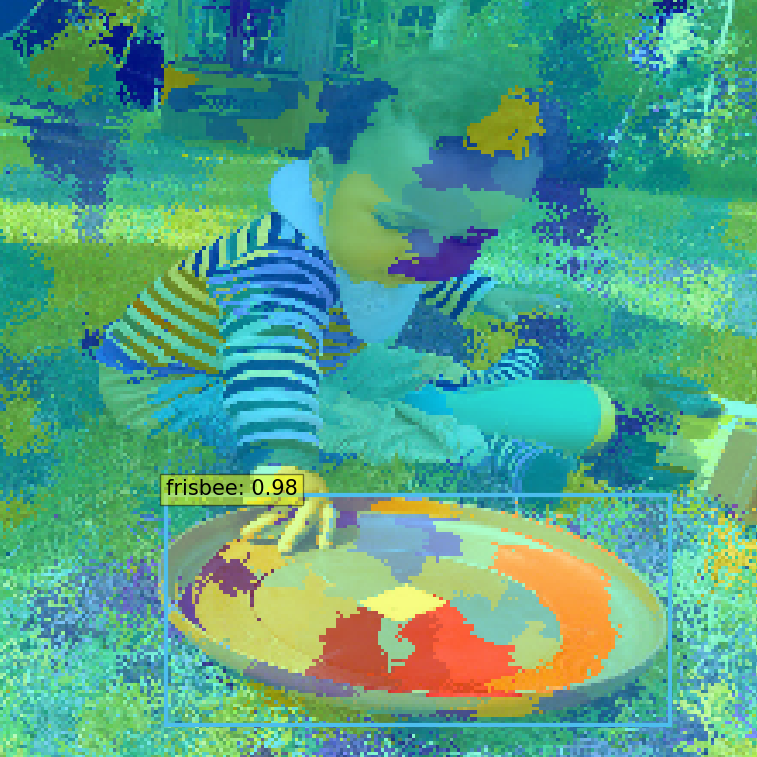}
   \end{subfigure}
      \begin{subfigure}{0.22\textwidth}
     \includegraphics[width=1.0\linewidth]{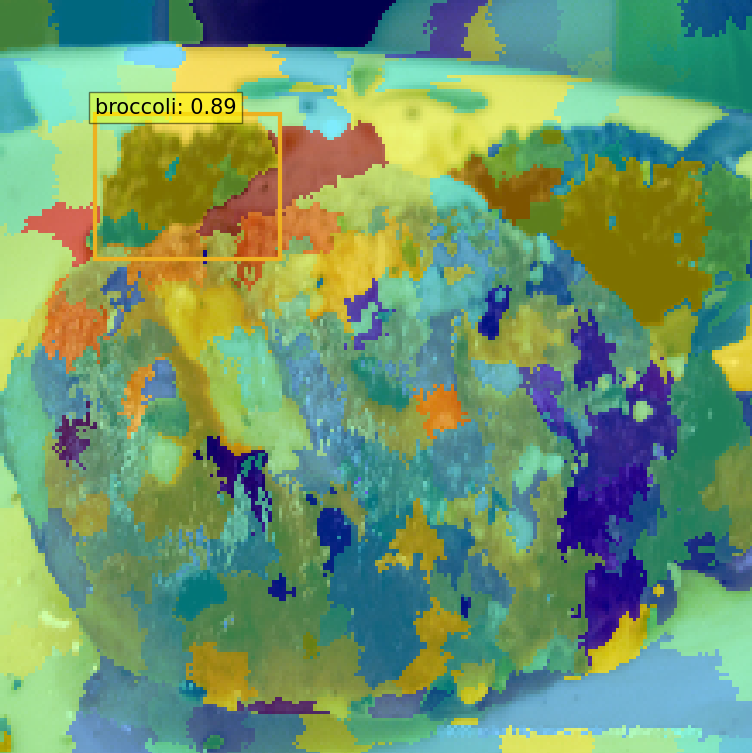}
   \end{subfigure}
   \caption{Visualisations of object detection explanations (2/2). }
   \vspace{-0.5cm}
   
\end{figure}

\subsection{Error explanations oh HSIC against RISE}

In this section, we show explanations of RISE and KernelShap for the image where Yolov4 erroneously recognizes a cat instead of a zebra. HSIC manages to find an explanation for this error while both RISE and KernelShap fail, even for different grid sizes.

\begin{figure}[!h]
  \centering
  HSIC\\
   \begin{subfigure}{0.4\textwidth}
     \includegraphics[width=1.0\linewidth]{assets/cat.png}
   \end{subfigure}\\
   
    \vspace{0.4cm}
    RISE\\
   \begin{subfigure}{0.22\textwidth}
     \includegraphics[width=1.0\linewidth]{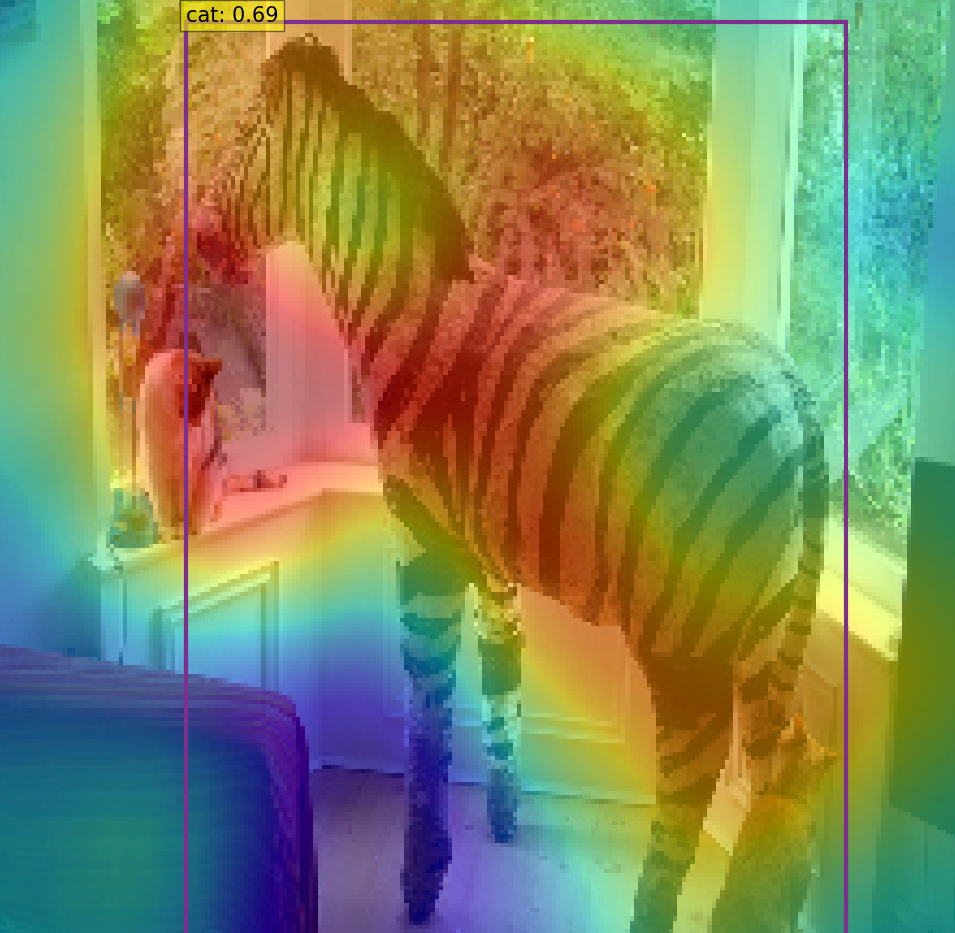}
   \end{subfigure}
   \begin{subfigure}{0.22\textwidth}
     \includegraphics[width=1.0\linewidth]{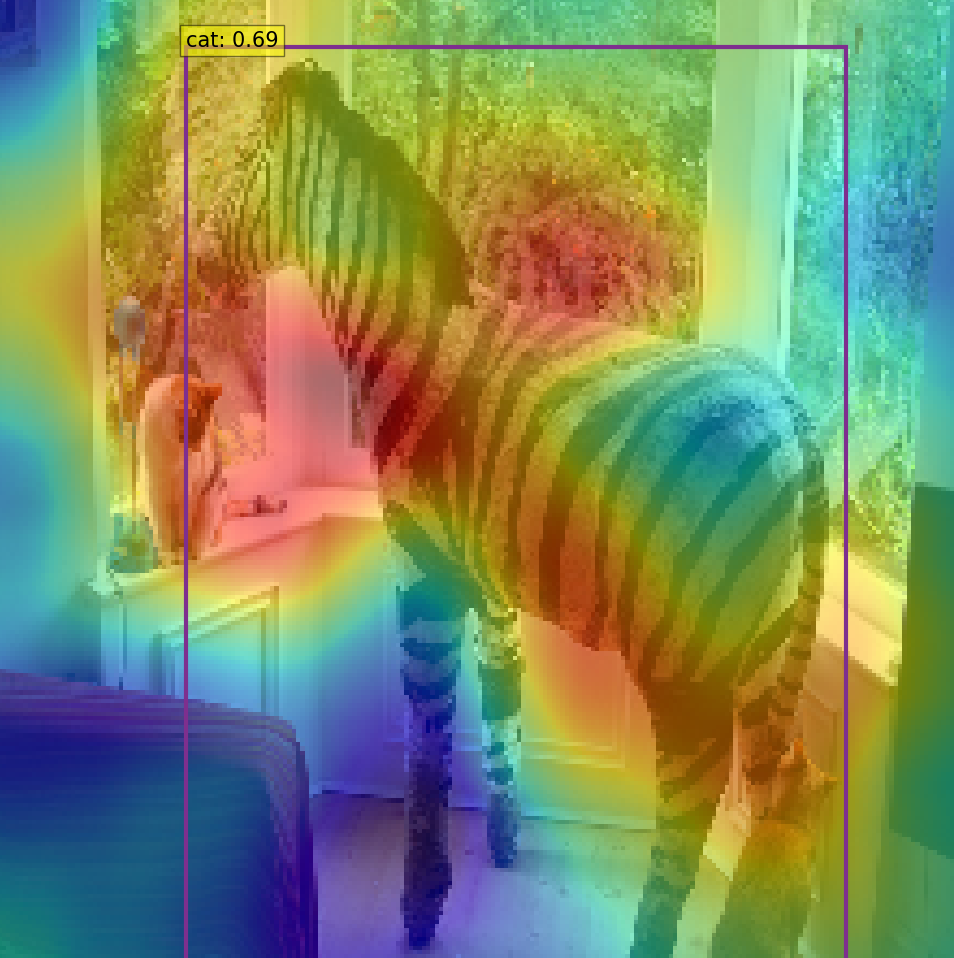}
   \end{subfigure}
      \begin{subfigure}{0.22\textwidth}
     \includegraphics[width=1.0\linewidth]{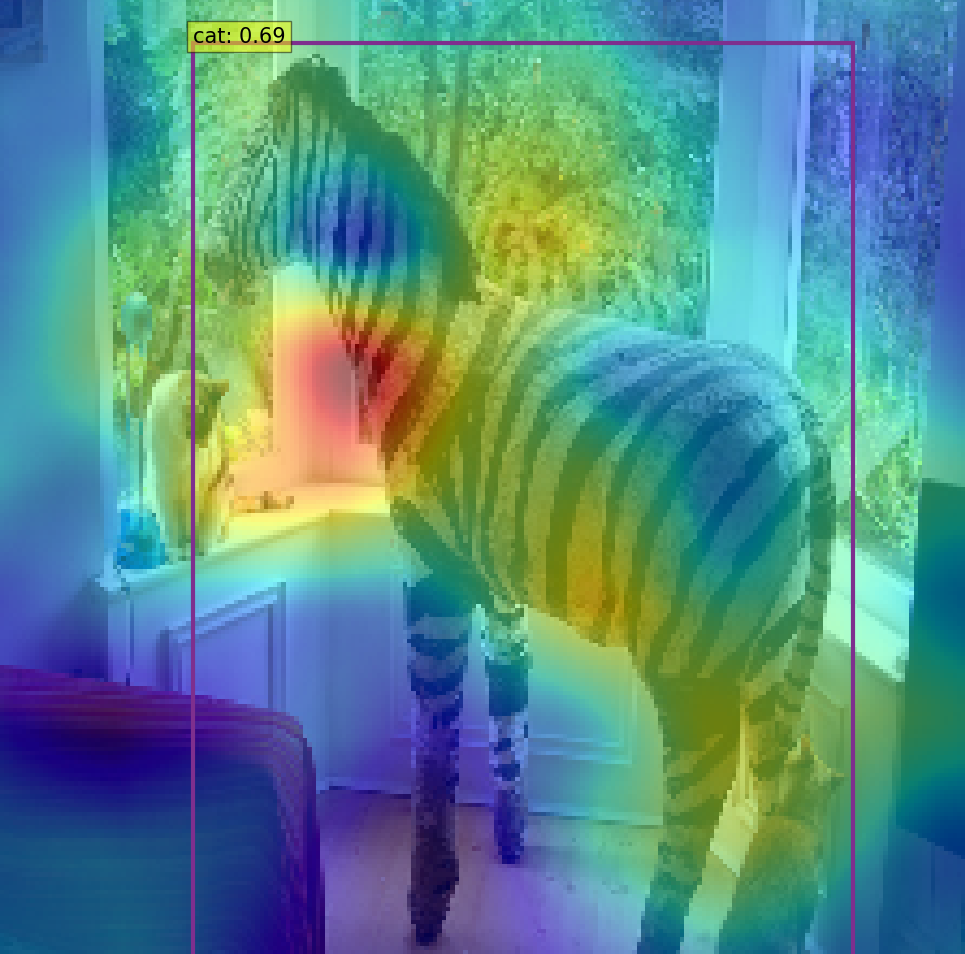}
   \end{subfigure}
      \begin{subfigure}{0.22\textwidth}
     \includegraphics[width=1.0\linewidth]{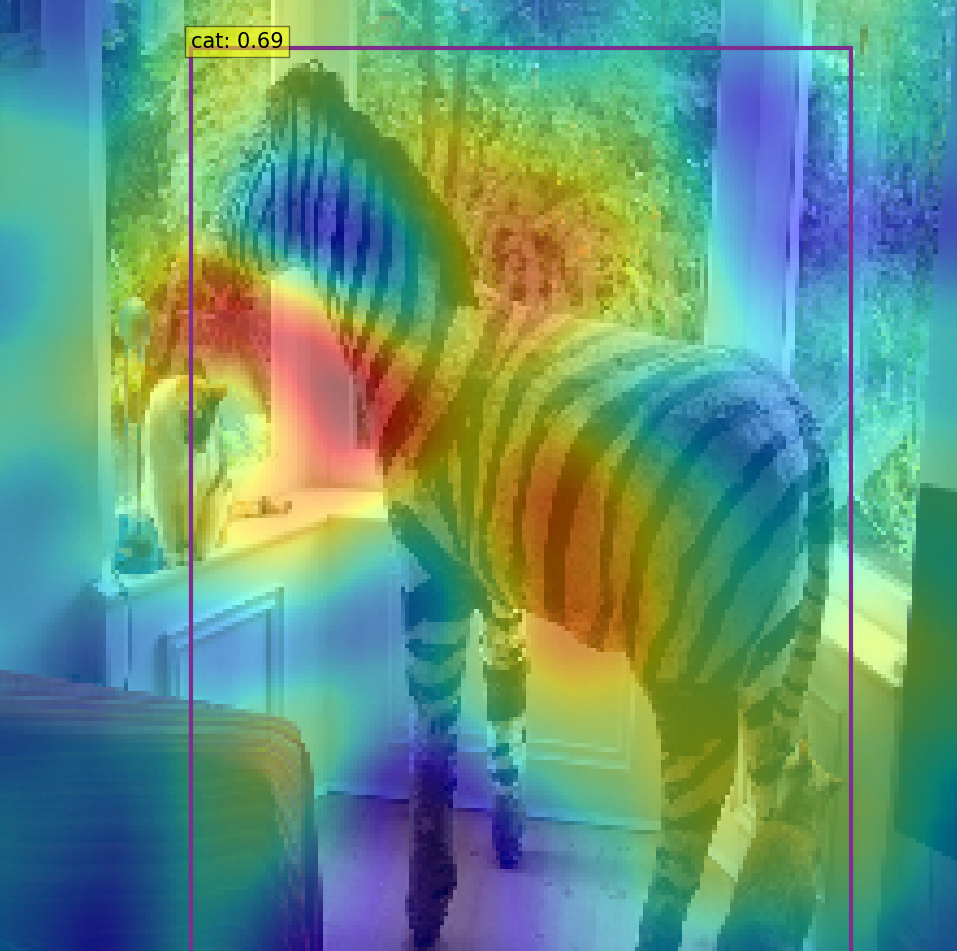}
   \end{subfigure}\\

   \begin{subfigure}{0.22\textwidth}
     \includegraphics[width=1.0\linewidth]{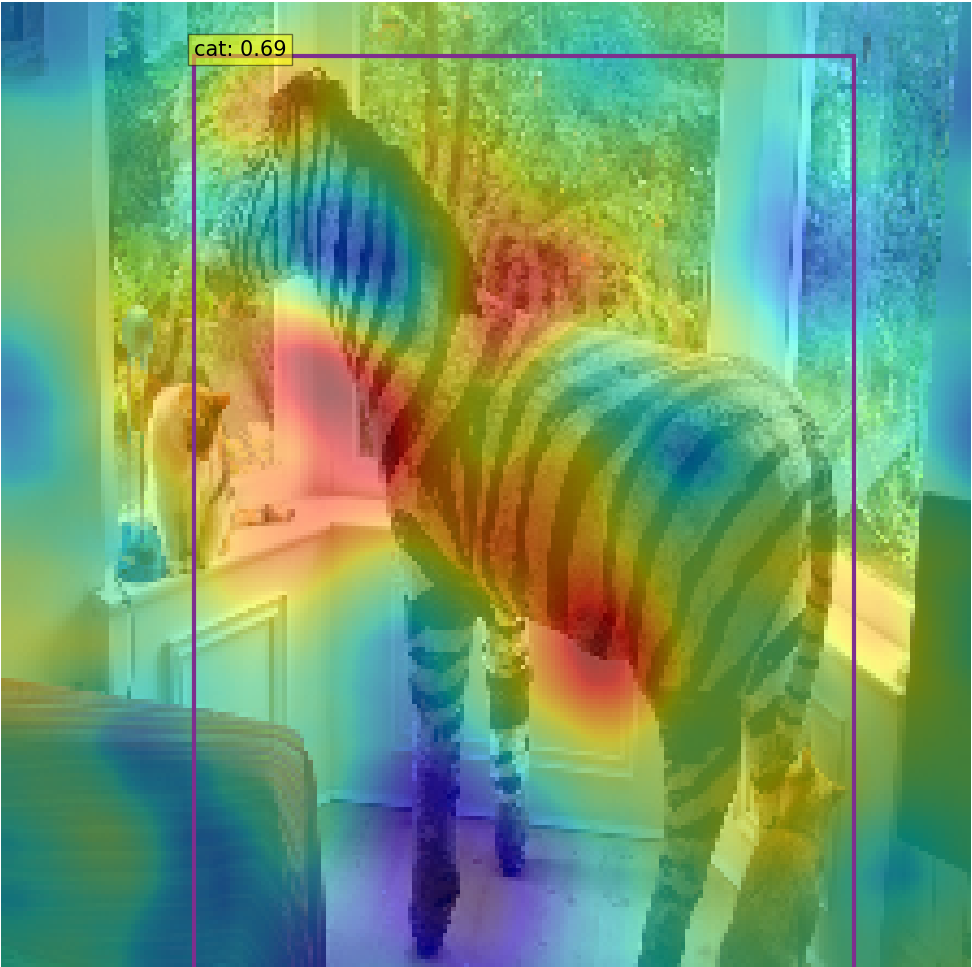}
   \end{subfigure}
   \begin{subfigure}{0.22\textwidth}
     \includegraphics[width=1.0\linewidth]{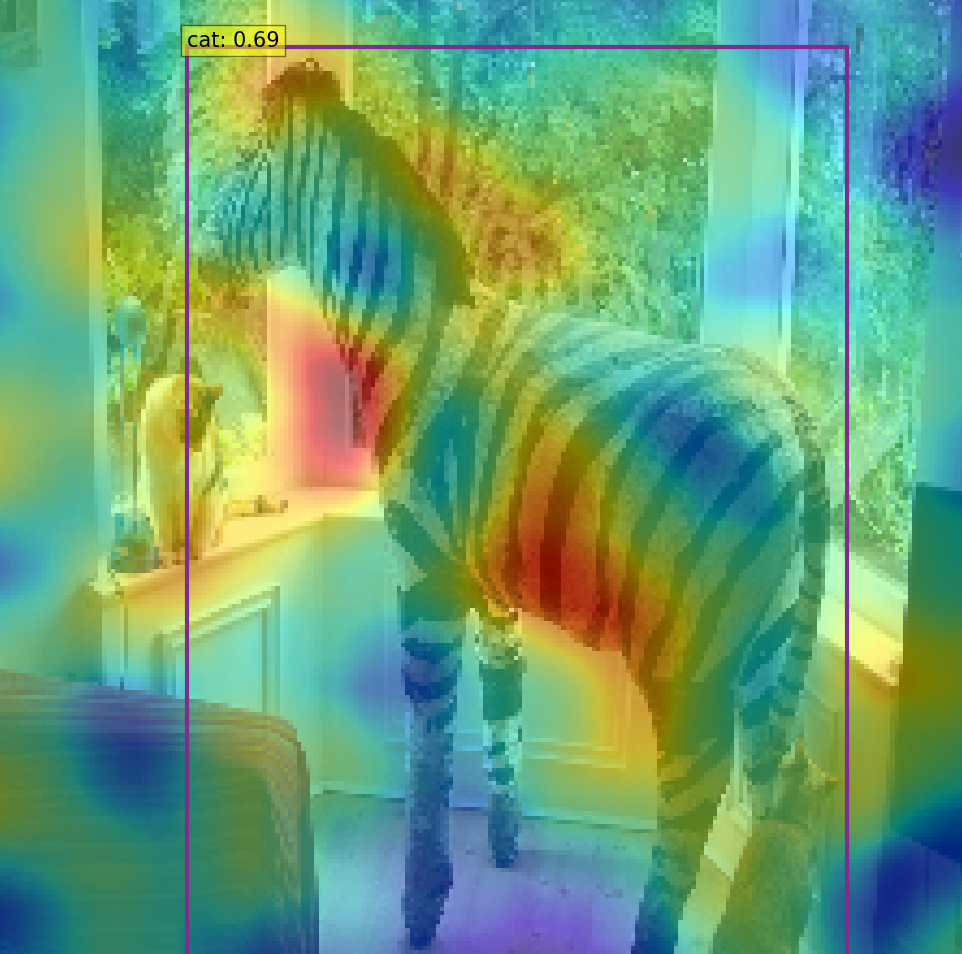}
   \end{subfigure}
      \begin{subfigure}{0.22\textwidth}
     \includegraphics[width=1.0\linewidth]{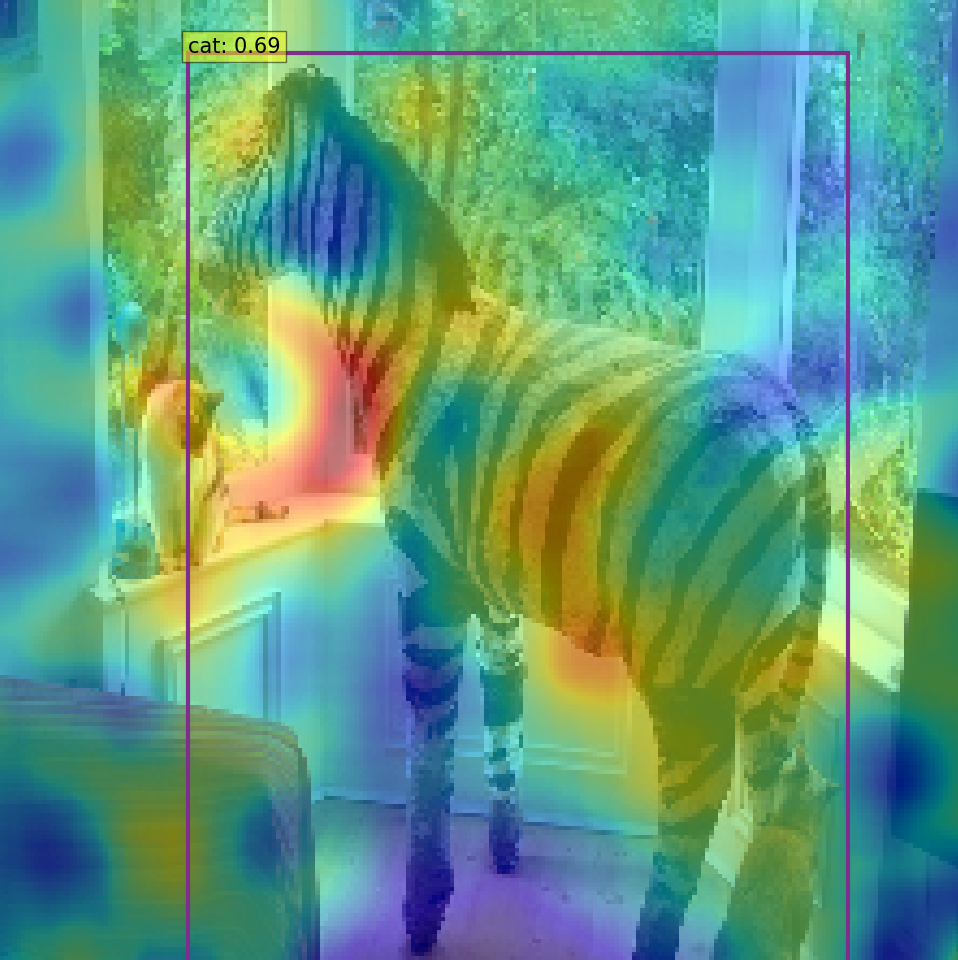}
   \end{subfigure}
      \begin{subfigure}{0.22\textwidth}
     \includegraphics[width=1.0\linewidth]{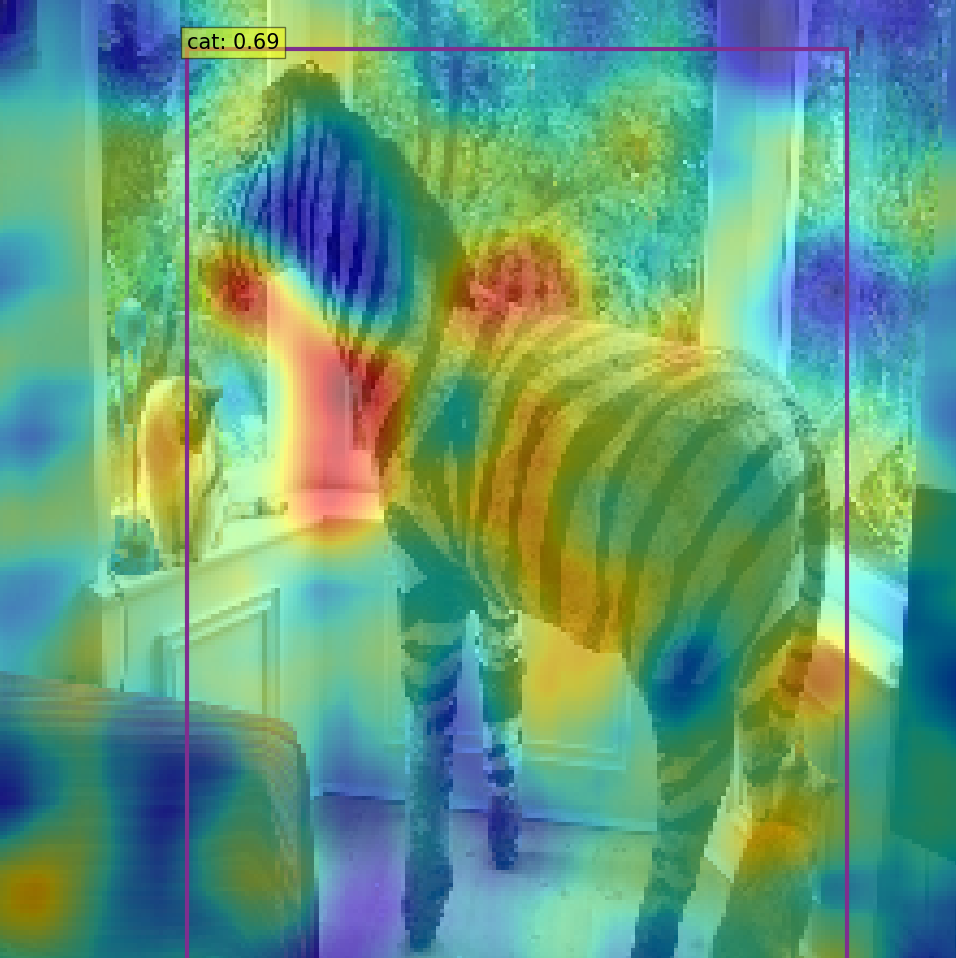}
   \end{subfigure}\\

   \vspace{0.4cm}
    KernelShap\\
   \begin{subfigure}{0.22\textwidth}
     \includegraphics[width=1.0\linewidth]{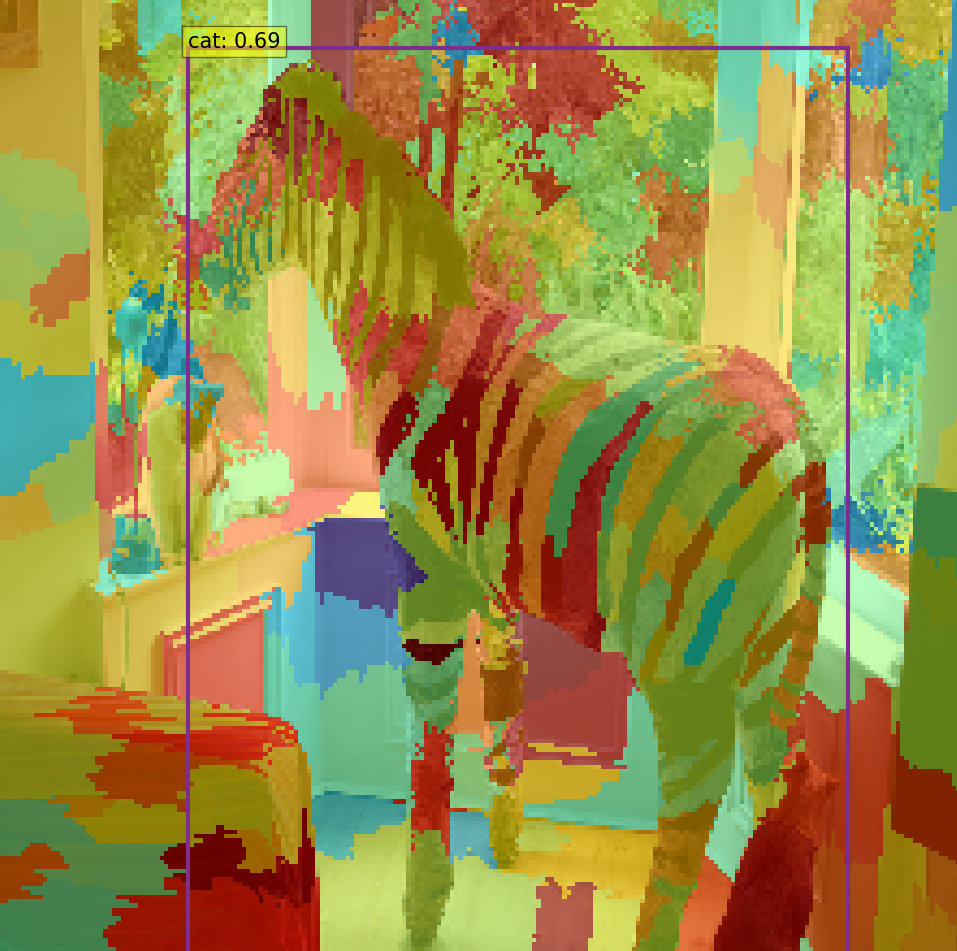}
   \end{subfigure}
   \begin{subfigure}{0.22\textwidth}
     \includegraphics[width=1.0\linewidth]{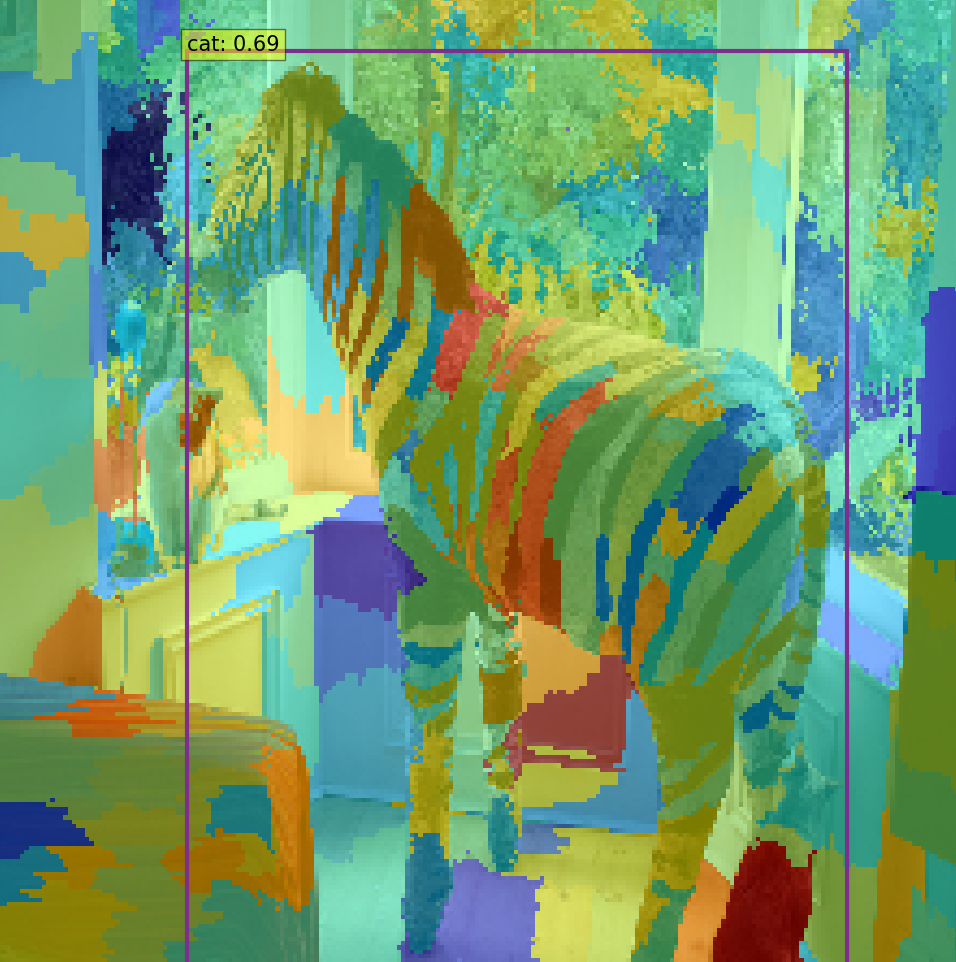}
   \end{subfigure}
      \begin{subfigure}{0.22\textwidth}
     \includegraphics[width=1.0\linewidth]{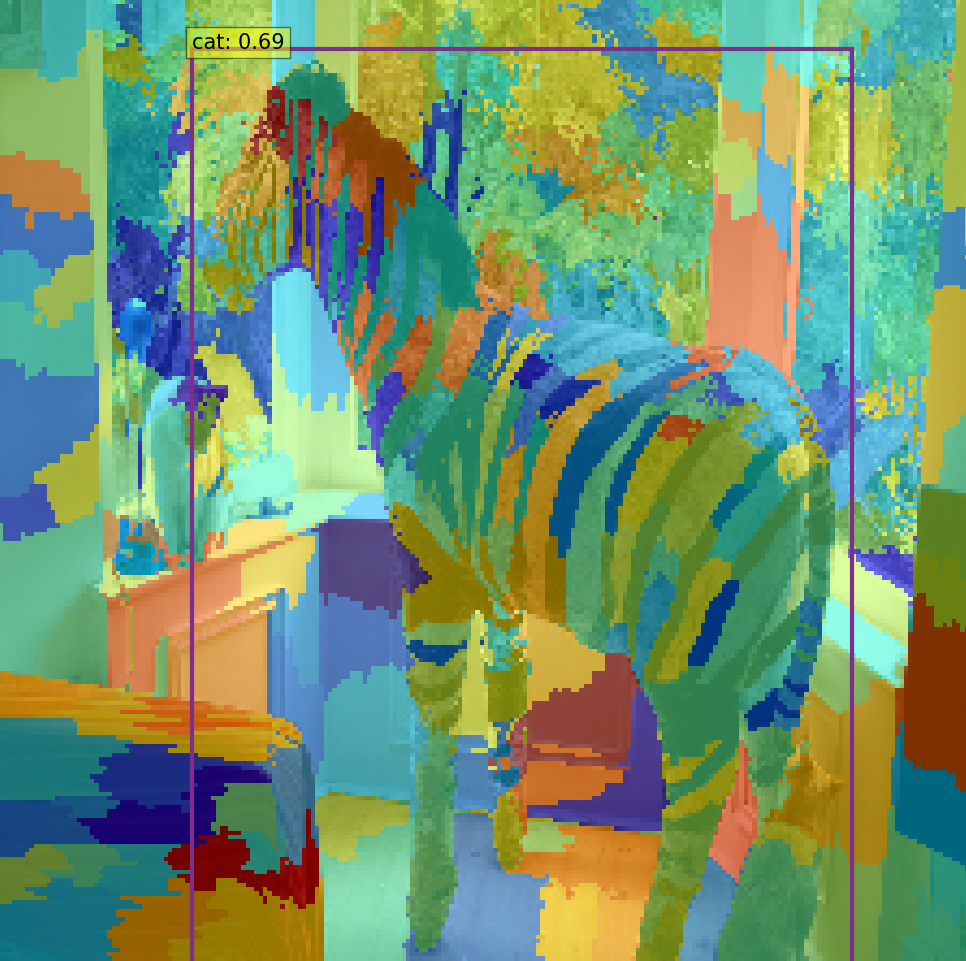}
   \end{subfigure}
      \begin{subfigure}{0.22\textwidth}
     \includegraphics[width=1.0\linewidth]{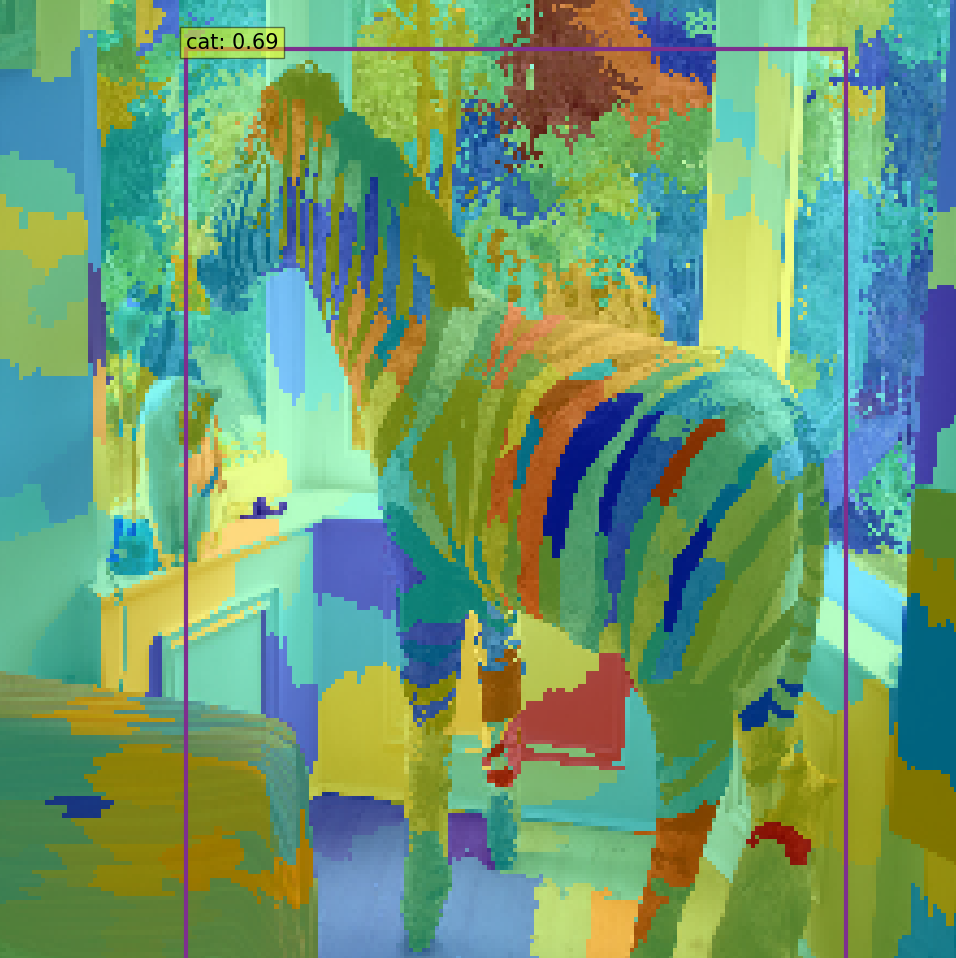}
   \end{subfigure}\\

   \begin{subfigure}{0.22\textwidth}
     \includegraphics[width=1.0\linewidth]{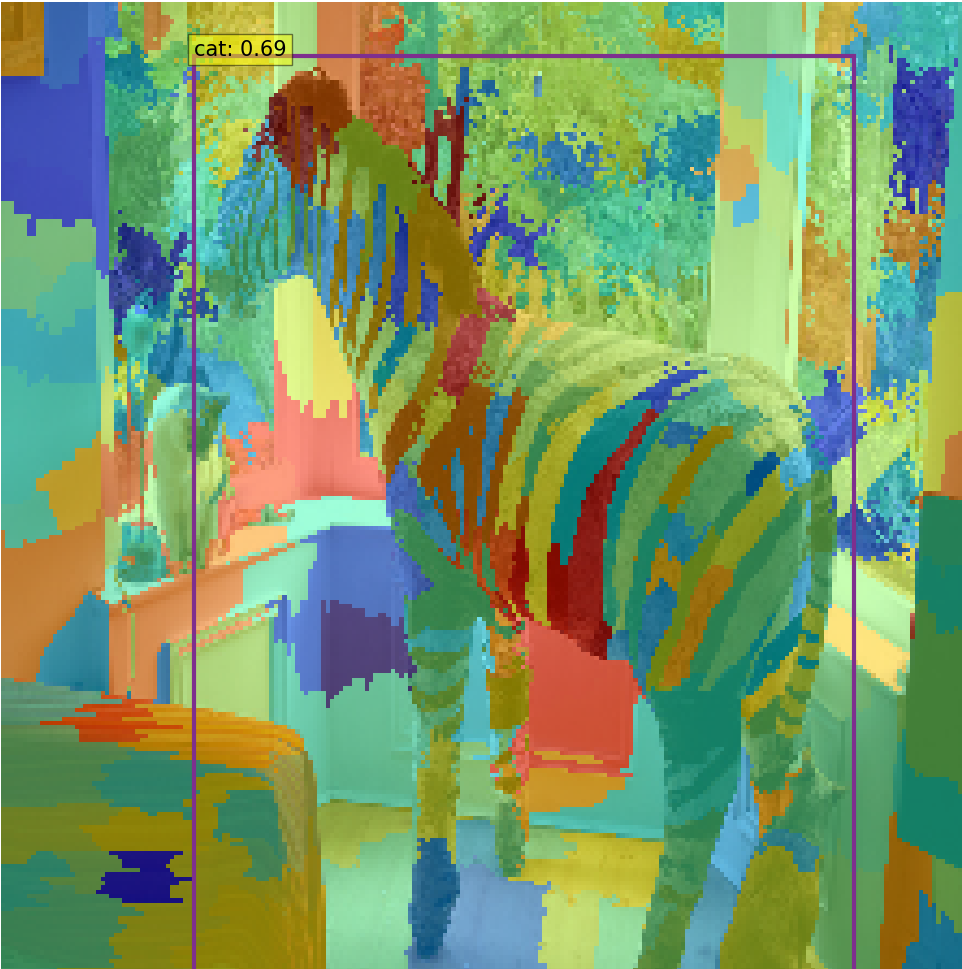}
   \end{subfigure}
   \begin{subfigure}{0.22\textwidth}
     \includegraphics[width=1.0\linewidth]{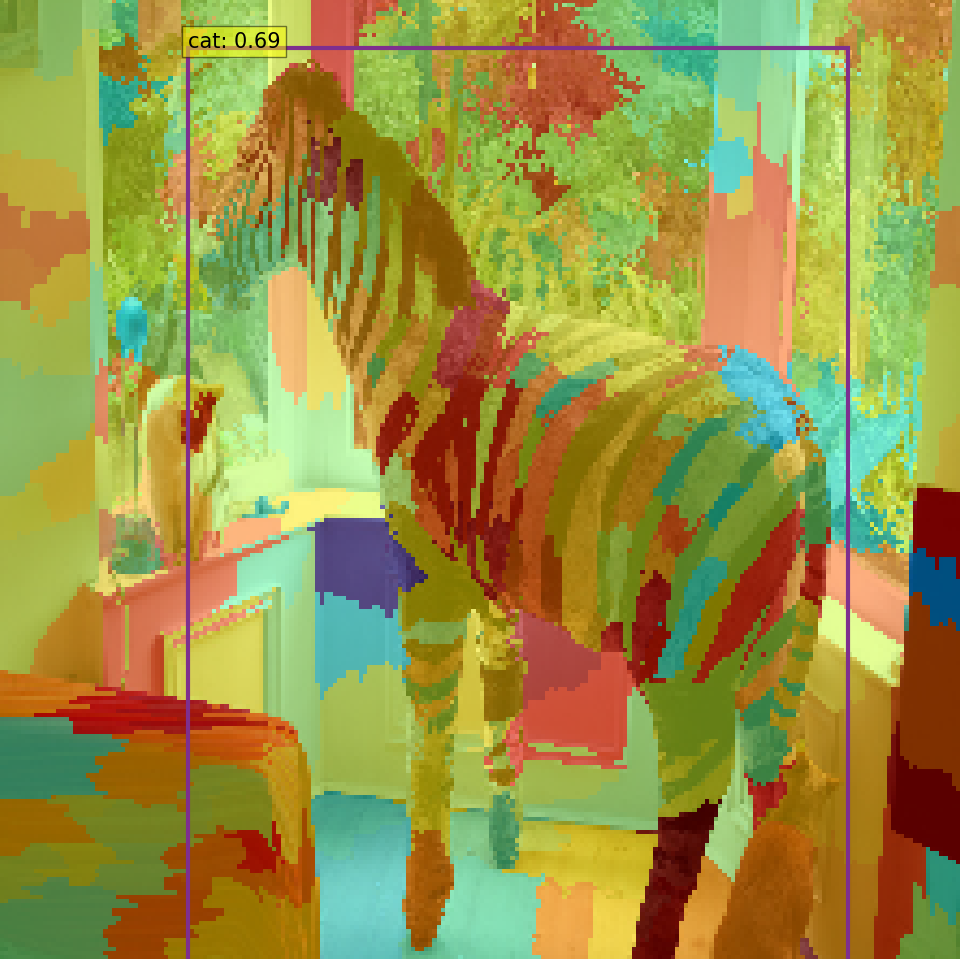}
   \end{subfigure}
      \begin{subfigure}{0.22\textwidth}
     \includegraphics[width=1.0\linewidth]{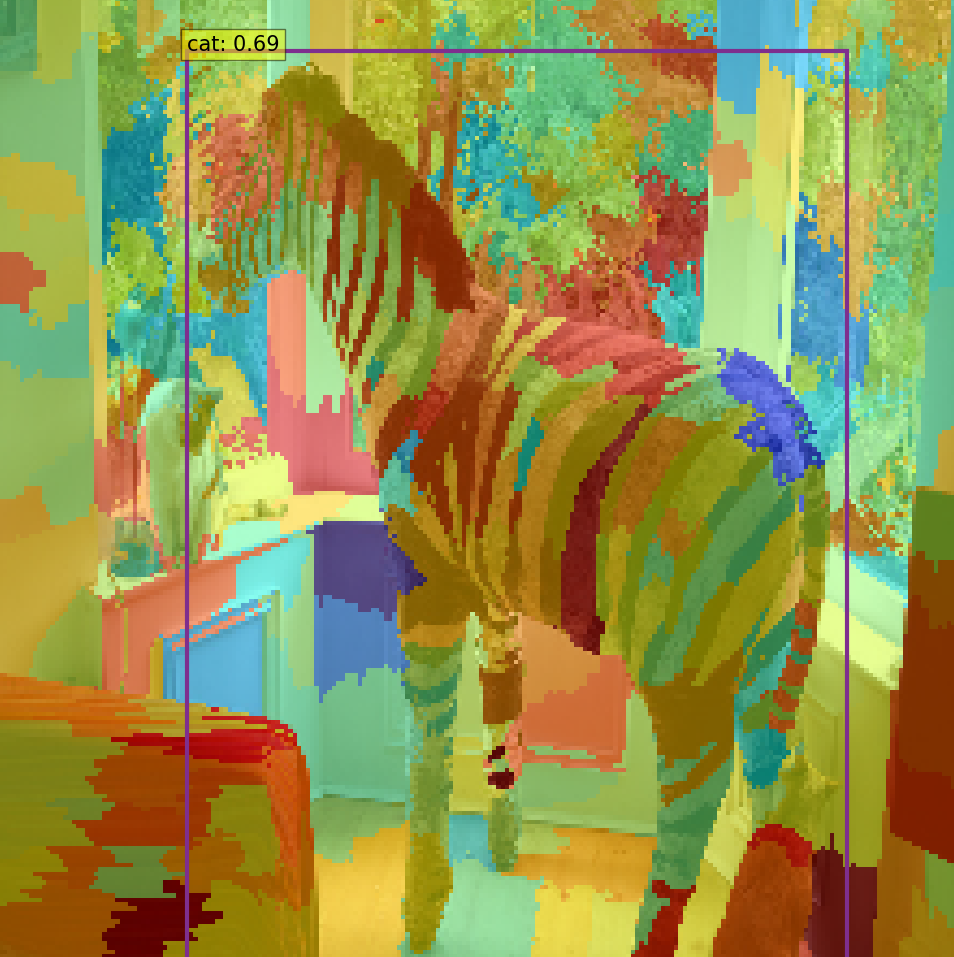}
   \end{subfigure}
      \begin{subfigure}{0.22\textwidth}
     \includegraphics[width=1.0\linewidth]{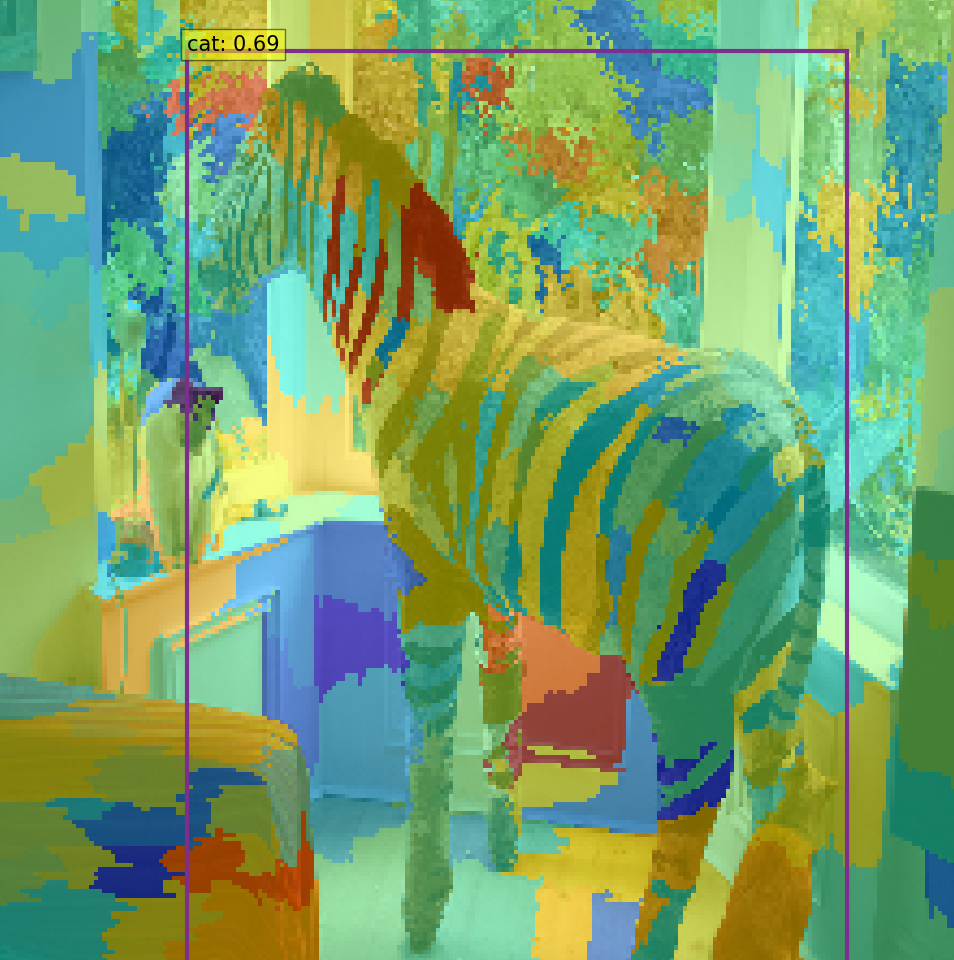}
   \end{subfigure}\\
   \caption{Visualizations of object detection explanations for a model error with HSIC method. Blurry explanations for different grid sizes with RISE and KernelShap. }
   \vspace{-0.5cm}
   
\end{figure}

\newpage
\section{Additional visualizations of HSIC attribution method on ImageNet}

\begin{figure}[!h]
    \centering
    \includegraphics[width=1.0\linewidth]{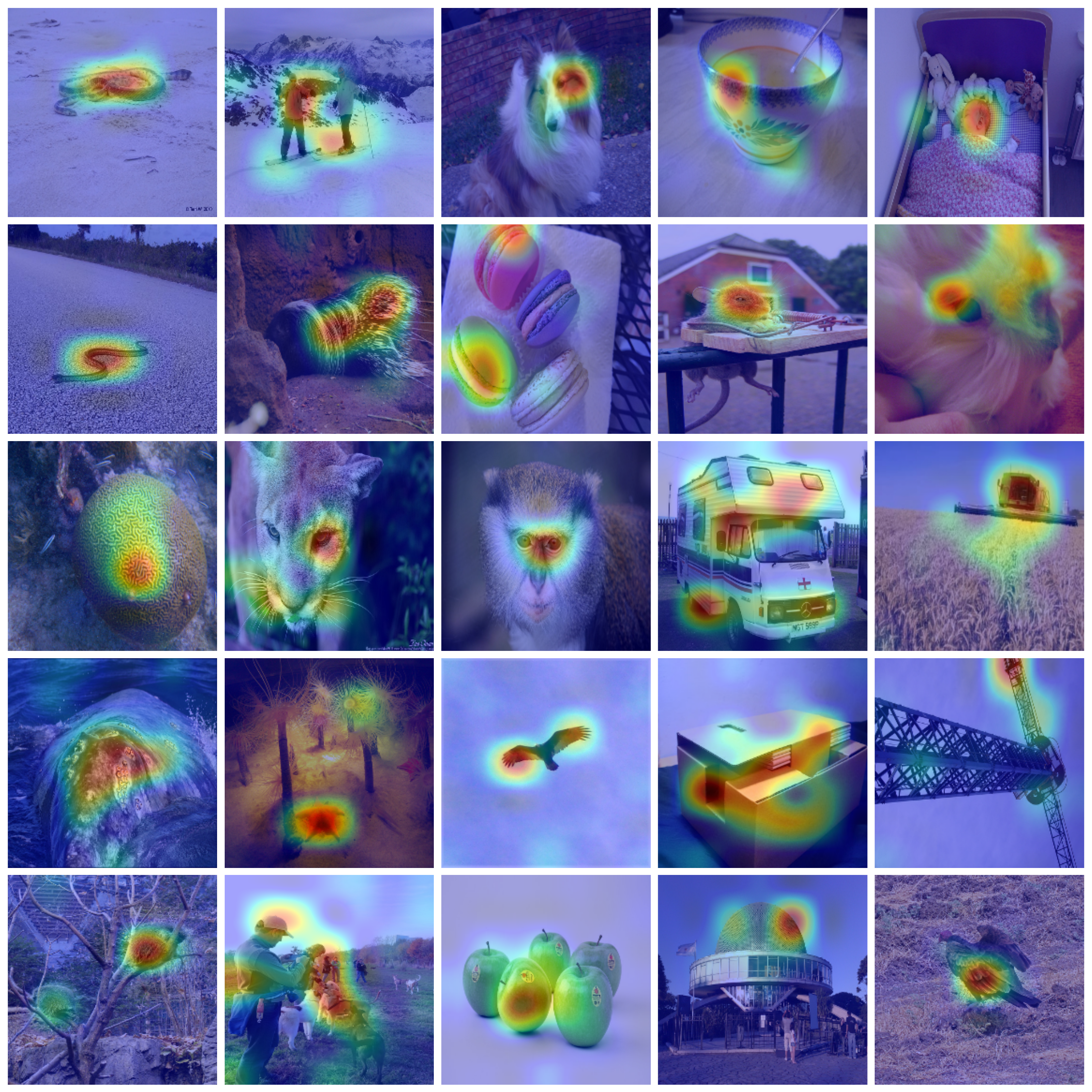}
    \caption{Explanations for ImageNet with HSIC eff.}
\end{figure}

\begin{figure}[!h]
    \centering
    \includegraphics[width=1.0\linewidth]{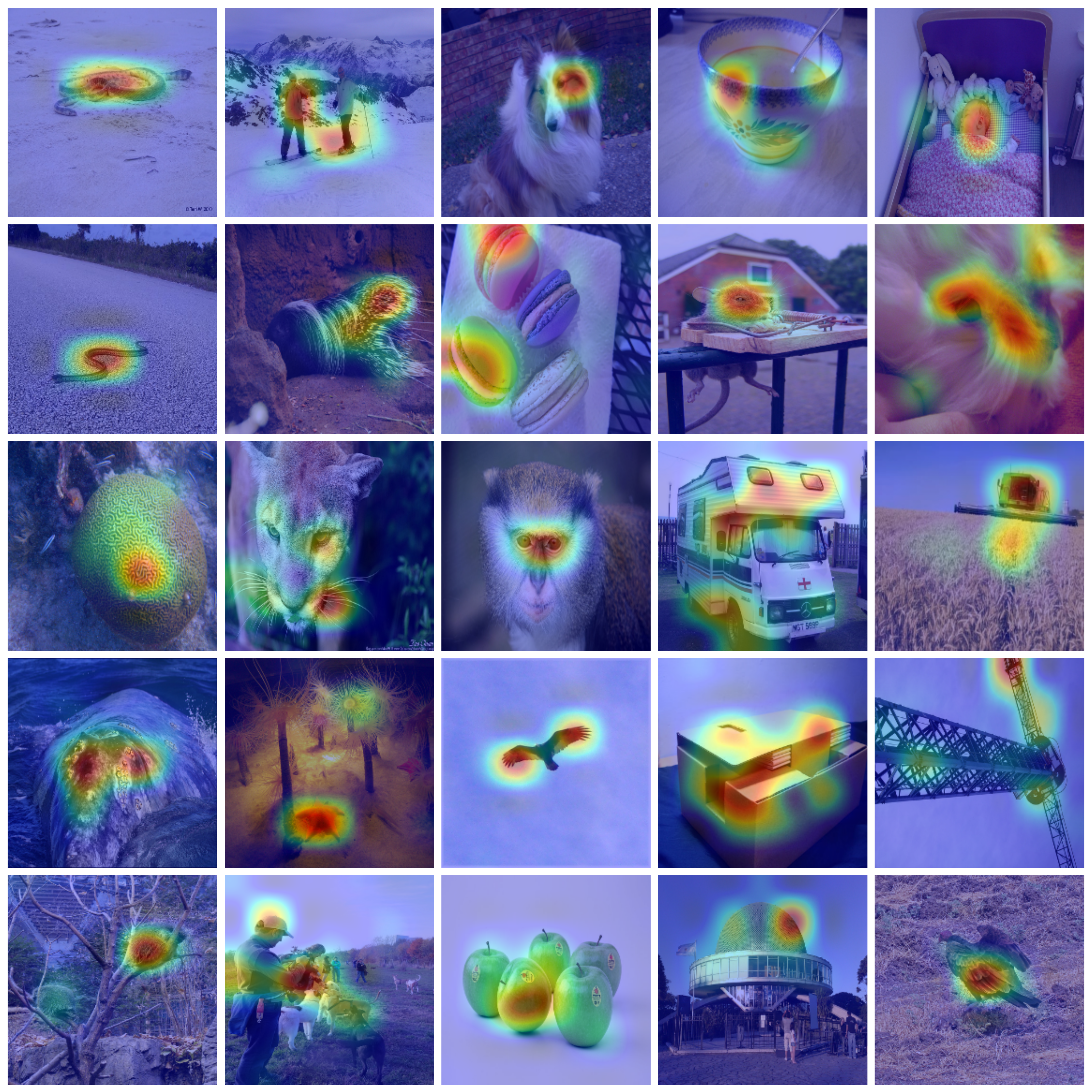}
    \caption{Explanations for ImageNet with HSIC acc.}
\end{figure}

\newpage
\section{Attribution methods}
\label{ap:methods}

In the following section, we give the formulation of the different attribution methods used in this work. The library used to generate the attribution maps is Xplique~\cite{fel2021xplique}.
By simplification of notation, we define $\pred(x)$ the logit score (before softmax) for the class of interest (we omit $c$). 
We recall that an attribution method provides an importance score for each input variable $x_i$.
We will denote the explanation functionnal mapping an input of interest $x = (x_1, ..., x_d)$ as $\explainer(x)$.

\textbf{Saliency}~\cite{simonyan2014deep} is a visualization technique based on the gradient of a class score relative to the input, indicating in an infinitesimal neighborhood which pixels must be modified to most affect the score of the class of interest.

$$ \explainer(x) = ||\nabla_{x} \pred(x)|| $$

\textbf{Gradient $\odot$ Input}~\cite{shrikumar2017learning} is based on the gradient of a class score relative to the input, element-wise with the input, it was introduced to improve the sharpness of the attribution maps. A theoretical analysis conducted by~\cite{ancona2017better} showed that Gradient $\odot$ Input is equivalent to $\epsilon$-LRP and DeepLIFT~\cite{shrikumar2017learning} methods under certain conditions -- using a baseline of zero and with all biases to zero.

$$ \explainer(x) = x \odot ||\nabla_{x} \pred(x)|| $$

\textbf{Integrated Gradients}~\cite{sundararajan2017axiomatic} consists of summing the gradient values along the path from a baseline state to the current value. The baseline $x_0$ used is zero. This integral can be approximated with a set of $m$ points at regular intervals between the baseline and the point of interest. In order to approximate from a finite number of steps, we use a Trapezoidal rule and not a left-Riemann summation, which allows for more accurate results and improved performance (see~\cite{sotoudeh2019computing} for a comparison).

$$ \explainer(x) = (x - x_0) 
\int_0^1 \nabla_{x} \pred( x_0 + \alpha(x - x_0) )) d\alpha $$

\textbf{SmoothGrad}~\cite{smilkov2017smoothgrad} is also a gradient-based explanation method, which, as the name suggests, averages the gradient at several points corresponding to small perturbations (drawn i.i.d from an isotropic normal distribution of standard deviation $\sigma$) around the point of interest. The smoothing effect induced by the average help reducing the visual noise, and hence improve the explanations. The attribution is obtained by averaging after sampling $m$ points. For all the experiments, we took $m = 80$ and $\sigma = 0.2 \times (x_{\max} - x_{\min})$ where $(x_{\min}, x_{\max})$ being the input range of the dataset.

$$ \explainer(x) = \underset{\delta \sim \mathcal{N}(0, \bm{I}\sigma)}{\mathbb{E}}(\nabla_{x} \pred( x + \delta) )
$$

\textbf{VarGrad}~\cite{hooker2018benchmark} is similar to SmoothGrad as it employs the same methodology to construct the attribution maps: using a set of $m$ noisy inputs, it aggregate the gradients using the variance rather than the mean. For the experiment, $m$ and $\sigma$ are the same as Smoothgrad. Formally:

$$ \explainer(x) = \underset{\delta \sim \mathcal{N}(0, \bm{I}\sigma)}{\mathbb{V}}(\nabla_{x} \pred( x + \delta) )
$$

\textbf{Grad-CAM}~\cite{selvaraju2017gradcam} can only be used on Convolutional Neural Network (CNN). Thus we couldn't use it for the MNIST dataset. The method uses the gradient and the feature maps $\bm{A}^k$ of the last convolution layer. More precisely, to obtain the localization map for a class, we need to compute the weights $\alpha_c^k$ associated to each of the feature map activation $\bm{A}^k$, with $k$ the number of filters and $Z$ the number of features in each feature map, with $\alpha_k^c = \frac{1}{Z} \sum_i\sum_j \frac{\partial{\pred(x)}}{\partial \bm{A}^k_{ij}} $ and 

$$\explainer = \max(0, \sum_k \alpha_k^c \bm{A}^k) $$

As the size of the explanation depends on the size (width, height) of the last feature map, a bilinear interpolation is performed in order to find the same dimensions as the input. For all the experiments, we used the last convolutional layer of each model to compute the explanation.

\textbf{Grad-CAM++ (G+)}~\cite{chattopadhay2018grad} is an extension of Grad-CAM combining the
positive partial derivatives of feature maps of a convolutional layer with a weighted special class score. The weights $\alpha_c^{(k)}$ associated with each feature map are computed as follows: 

$$\alpha_k^c = 
    \sum_i \sum_j [
    \frac{ \frac{\partial^2 \pred(x) }{ (\partial \bm{A}_{ij}^{(k)})^2 } }
    { 2 \frac{\partial^2 \pred(x) }{ (\partial \bm{A}_{ij}^{(k)})^2 } + \sum_i \sum_j \bm{A}^{(k)}_{ij}  \frac{\partial^3 \pred(x) }{ (\partial \bm{A}_{ij}^{(k)})^3 } }
    ]
$$

\textbf{Occlusion}~\cite{zeiler2014visualizing} is a sensitivity method that sweeps a patch that occludes pixels over the images using a baseline state and uses the variations of the model prediction to deduce critical areas. For all the experiments, we took a patch size and a patch stride of $\frac{1}{7}$ of the image size. Moreover, the baseline state $x_0$ was zero.

$$ \explainer(x)_i = \pred(x) - \pred(x_{[x_i = 0]})  $$

\textbf{RISE}~\cite{petsiuk2018rise} is a black-box method that consists of probing the model with $N$ randomly masked versions of the input image to deduce the importance of each pixel using the corresponding outputs. The masks $\bm{m} \sim \mathcal{M}$ are generated randomly in a subspace of the input space. For all the experiments, we use a subspace of size $7 \times 7$ and $\mathbb{E}(\mathcal{M}) = 0.5$.

$$ \explainer(x) = \frac{1}{\mathbb{E}(\mathcal{M}) N} \sum_{i=0}^N \pred(x \odot \bm{m}_i) \bm{m}_i $$

\section{Evaluation}

For the purpose of the experiments, three fidelity metrics have been chosen. For the whole set of metrics, $\pred(x)$ score is the score after softmax of the models. We first describe these metrics and then discuss the trade-off between Deletion and Insertion scores.

\subsection{Definitions}

\paragraph{Deletion.}~\cite{petsiuk2018rise} 
The first metric is Deletion, it consists in measuring the drop in the score when the important variables are set to a baseline state. Intuitively, a sharper drop indicates that the explanation method has well identified the important variables for the decision. The operation is repeated on the whole image until all the pixels are at a baseline state. Formally, at step $k$, with $\vu$ the most important variables according to an attribution method, the Deletion$^{(k)}$ score is given by:

$$
\text{Deletion}^{(k)} = \pred(x_{[x_{\vu} = x_0]})
$$

We then measure the AUC of the Deletion scores. For all the experiments, the baseline state is fixed at $x_0 = 0$.

\paragraph{Insertion.}~\cite{petsiuk2018rise}
 Insertion consists in performing the inverse of Deletion, starting with an image in a baseline state and then progressively adding the most important variables. Formally, at step $k$, with $\vu$ the most important variables according to an attribution method, the Insertion$^{(k)}$ score is given by:
$$
\text{Insertion}^{(k)} = \pred(x_{[x_{\overline{\vu}} = x_0]})
$$
We then measure the AUC of the Deletion scores. The baseline is the same as for Deletion.

\paragraph{$\mu$Fidelity}~\cite{aggregating2020} 
consists in measuring the correlation between the fall of the score when variables are put at a baseline state and the importance of these variables. Formally:
$$
\mu\text{Fidelity} = \underset{\vu \subseteq \{1, ..., d\} \atop |\vu| = k}{\operatorname{Corr}}\left( \sum_{i \in \vu} \explainer(x)_i  , \pred(x) - \pred(x_{[x_{\vu} = x_0]})\right)
$$

For all experiments, $k$ is equal to 20\% of the total number of variables and the baseline is the same as the one used by Deletion.

\subsection{Trade-off between Insertion and Deletion}

Deletion and Insertion metrics consist in measuring AUC of scores that respectively decrease and increase when deleting and adding patches, starting from a baseline image. Since the patches deleted/added are those that are the most important (in the sense of the tested attribution method), most of the score will come from the first patch deletions/additions. Using those different methods has two important consequences, detailed below.

\paragraph{Deletion is preferable}

There is a key difference between those two evaluations that makes Deletion more suited to explanation evaluation than Insertion. In Deletion, since we start from the original image and sequentially delete patches, the score is tested in a region of the input image space that is close to the input image. On the contrary, Insertion starts from an arbitrary baseline (here, pure black image), which is far from the input image. It is likely that the value of the baseline has an undesired impact on the score for Insertion. That is why we tend to favor Deletion over Insertion. 

\paragraph{Some methods are more suited to Deletion or Insertion}

Since Deletion measures a drop in the score, the faster the score drops, the better the metric. Hence, Deletion will favor methods that sharply identify important regions. On the contrary, since Insertion starts from an arbitrary baseline image, if the explanation map is more spread out, more relevant secondary information will be added, so the score will be better. To illustrate this observation, in table \ref{tab:gs} we show the value of Insertion and Deletion metrics for HSIC method and for different grid sizes, obtained after a grid search for MobileNetV2 on 1000 ImageNet validation images. The metrics are averaged over $27$ runs (with a different number of samples and different samplers). Table \ref{tab:gs} gives an idea of the trend of the evolution of Insertion and Deletion with respect to the grid size. As we can see, Deletion improves when the grid size increases, i.e. when the explanation map becomes sharper, and Insertion improves when the grid size decreases, i.e. when the map becomes more spread out.

\begin{table*}[!h]
  \centering
\begin{tabular}{lcccccc}
  \toprule
    grid size &  5 & 6 & 7 & 8 & 9 & 10 \\
  \midrule
  Insertion $\times 10^{-1}$  & 4.14 &  4.02 & 3.90 & 3.72 & 3.54 & 3.40 \\ 
  Deletion $\times 10^{-1}$  & 1.01 & 0.97 & 0.94 & 0.93 & 0.92 & 0.90   \\  
  \bottomrule
  \end{tabular}
  \vspace{0mm}\caption{Result of a grid search for MobileNetV2}\label{tab:gs}
\vspace{-0.2cm}
\end{table*}
  
This trend also explains why RISE shines in the Insertion benchmark and why our HSIC attribution method dominates the Deletion benchmark. Indeed, as we can see in the maps of Appendix C, RISE saliency maps are way more spread out than HSIC's, which are sharper.

\section{Additional experiments on stability}

In this section, we report the evolution of the Deletion score for HSIC, RISE, and Sobol with respect to the number of forward passes, with a Resnet50 on $100$ Imagenet validation images. 

\begin{figure*}[h!]
  \centering
  \includegraphics[width=0.8\textwidth]{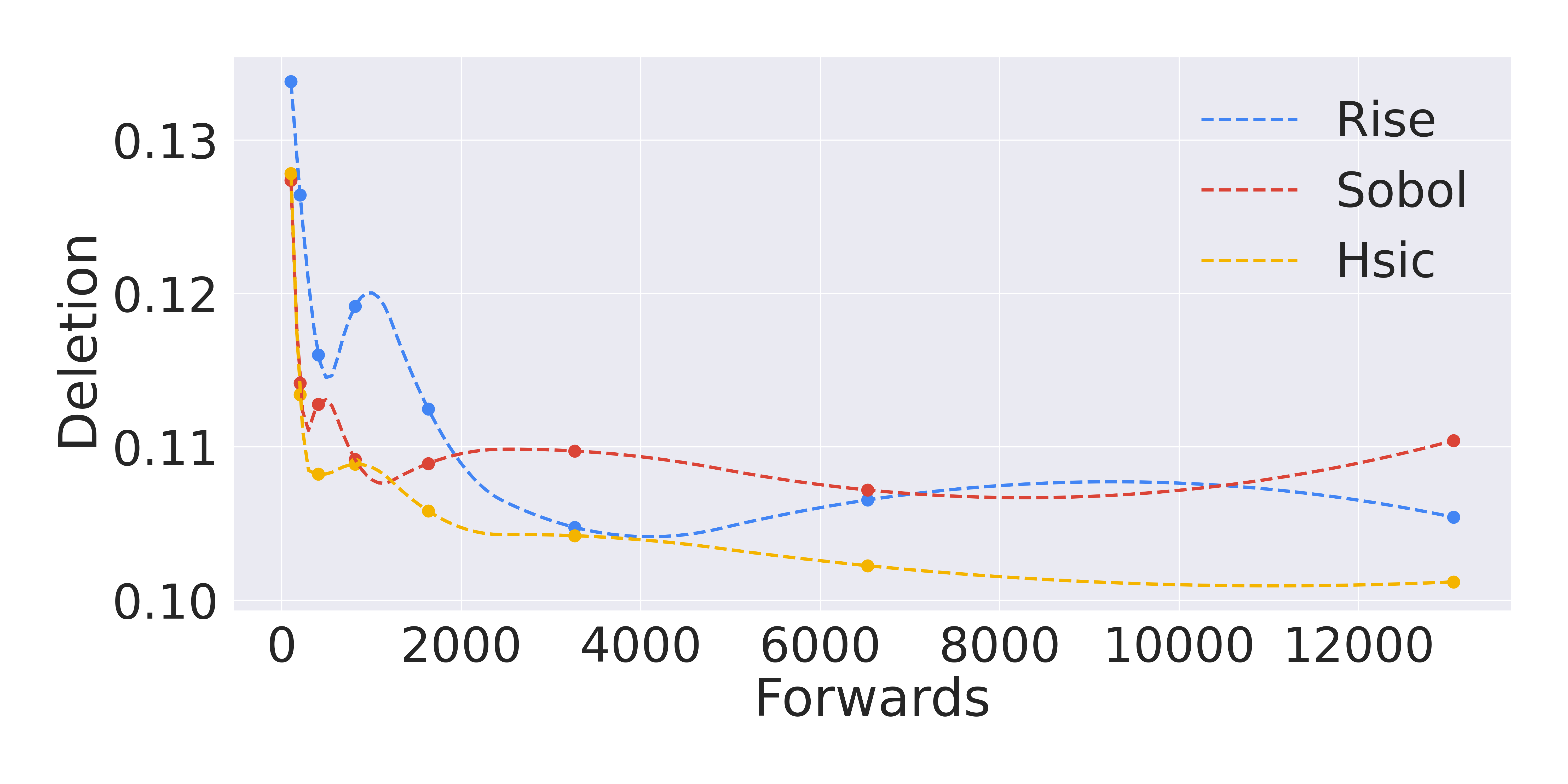}
  \caption{ Deletion score for HSIC, RISE, and Sobol with respect to the number of forward passes}\vspace{-2mm}
  \label{fig:x2x3}
\end{figure*}

The scores for Sobol and RISE are less stable than for HSIC, which corroborates that HSIC attribution method can be used with fewer forward passes.

\newpage
\section*{Checklist}

\begin{enumerate}

    \item For all authors...
    \begin{enumerate}
      \item Do the main claims made in the abstract and introduction accurately reflect the paper's contributions and scope?
        \answerYes{}
      \item Did you describe the limitations of your work?
        \answerYes{ The limitations stem from the
        complexity of computing HSIC mentioned in Sections 3.2 and 3.4. We provided a
       vectorized implementation to alleviate this limitation.}
      \item Did you discuss any potential negative societal impacts of your work?
        \answerNA{] There is
        no negative societal impact specific to this work that is not shared with common XAI
        techniques.}
      \item Have you read the ethics review guidelines and ensured that your paper conforms to them?
        \answerYes{}
    \end{enumerate}

    \item If you are including theoretical results...
    \begin{enumerate}
      \item Did you state the full set of assumptions of all theoretical results?
        \answerYes{}
            \item Did you include complete proofs of all theoretical results?
        \answerYes{In Appendix A}
    \end{enumerate}

    \item If you ran experiments...
    \begin{enumerate}
      \item Did you include the code, data, and instructions needed to reproduce the main experimental results (either in the supplemental material or as a URL)?
        \answerYes{URL is found
        in page 1.
        }
      \item Did you specify all the training details (e.g., data splits, hyperparameters, how they were chosen)?
        \answerYes{}
            \item Did you report error bars (e.g., with respect to the random seed after running experiments multiple times)?
        \answerYes{Error bars have been computed but left in the Appendix
        to make the presentation of Tables 1 and 2 lighter. Nonetheless, they are taken into
        account in the tables and in the comments to assess the statistical significance of results}
            \item Did you include the total amount of compute and the type of resources used (e.g., type of GPUs, internal cluster, or cloud provider)?
        \answerYes{}
    \end{enumerate}

    \item If you are using existing assets (e.g., code, data, models) or curating/releasing new assets...
    \begin{enumerate}
      \item If your work uses existing assets, did you cite the creators?
        \answerYes{}
      \item Did you mention the license of the assets?
        \answerYes{}
      \item Did you include any new assets either in the supplemental material or as a URL?
        \answerYes{}
      \item Did you discuss whether and how consent was obtained from people whose data you're using/curating?
        \answerNA{}
      \item Did you discuss whether the data you are using/curating contains personally identifiable information or offensive content?
        \answerNA{}
    \end{enumerate}

    \item If you used crowdsourcing or conducted research with human subjects...
    \begin{enumerate}
      \item Did you include the full text of instructions given to participants and screenshots, if applicable?
        \answerNA{}
      \item Did you describe any potential participant risks, with links to Institutional Review Board (IRB) approvals, if applicable?
        \answerNA{}
      \item Did you include the estimated hourly wage paid to participants and the total amount spent on participant compensation?
        \answerNA{}
    \end{enumerate}

\end{enumerate}
    

\end{document}